\theoremstyle{plain}
\newtheorem{theorem}{Theorem}[section]
\newtheorem{proposition}[theorem]{Proposition}
\theoremstyle{definition}
\theoremstyle{remark}
\newtheorem{claim}{Claim}
\theoremstyle{plain}
\newcommand{\methodname}{\textsc{KoALA}}
\renewcommand{\ICML@appearing}{}  % remove “Proceedings of …” line
\icmltitlerunning{KoALA: KL–L0 Adversarial Detector 
via Label Agreement}
\begin{document}

\twocolumn[
  \icmltitle{KoALA: KL–L0 Adversarial Detector 
via Label Agreement}

  % It is OKAY to include author information, even for blind submissions: the
  % style file will automatically remove it for you unless you've provided
  % the [accepted] option to the icml2026 package.

  % List of affiliations: The first argument should be a (short) identifier you
  % will use later to specify author affiliations Academic affiliations
  % should list Department, University, City, Region, Country Industry
  % affiliations should list Company, City, Region, Country

  % You can specify symbols, otherwise they are numbered in order. Ideally, you
  % should not use this facility. Affiliations will be numbered in order of
  % appearance and this is the preferred way.
  \icmlsetsymbol{equal}{*}

  \begin{icmlauthorlist}
    \icmlauthor{Siqi Li}{yyy}
    \icmlauthor{Yasser Shoukry}{yyy}
    %\icmlauthor{Firstname3 Lastname3}{comp}
    %\icmlauthor{Firstname4 Lastname4}{sch}
    %\icmlauthor{Firstname5 Lastname5}{yyy}
    %\icmlauthor{Firstname6 Lastname6}{sch,yyy,comp}
    %\icmlauthor{Firstname7 Lastname7}{comp}
    %\icmlauthor{}{sch}
    %\icmlauthor{Firstname8 Lastname8}{sch}
    %\icmlauthor{Firstname8 Lastname8}{yyy,comp}
    %\icmlauthor{}{sch}
    %\icmlauthor{}{sch}
  \end{icmlauthorlist}

  \icmlaffiliation{yyy}{Department of Electrical Engineering and Computer Science, University of California, Irvine, Irvine, USA}
  %\icmlaffiliation{comp}{Company Name, Location, Country}
  %\icmlaffiliation{sch}{School of ZZZ, Institute of WWW, Location, Country}

  \icmlcorrespondingauthor{Siqi Li}{siqi.li@uci.edu}
  %\icmlcorrespondingauthor{Firstname2 Lastname2}{first2.last2@www.uk}

  % You may provide any keywords that you find helpful for describing your
  % paper; these are used to populate the "keywords" metadata in the PDF but
  % will not be shown in the document
  \icmlkeywords{Machine Learning, ICML}

  \vskip 0.3in
]

% this must go after the closing bracket ] following \twocolumn[ ...

% This command actually creates the footnote in the first column listing the
% affiliations and the copyright notice. The command takes one argument, which
% is text to display at the start of the footnote. The \icmlEqualContribution
% command is standard text for equal contribution. Remove it (just {}) if you
% do not need this facility.

% Use ONE of the following lines. DO NOT remove the command.
% If you have no special notice, KEEP empty braces:
\printAffiliationsAndNotice{}  % no special notice (required even if empty)
% Or, if applicable, use the standard equal contribution text:
%\printAffiliationsAndNotice{\icmlEqualContribution}

\begin{abstract}
  Deep neural networks are highly susceptible to adversarial attacks, which pose significant risks to security- and safety-critical applications. We present \methodname{} (\textbf{K}L--L\textbf{o} \textbf{A}dversarial detection via \textbf{L}abel \textbf{A}greement), a novel, semantics-free adversarial detector that requires no architectural changes or adversarial retraining.
\methodname{} operates on a simple principle: it detects an adversarial attack when class predictions from two complementary similarity metrics disagree. These metrics—KL divergence and an L$_0$-based similarity—are specifically chosen to detect different types of perturbations. The KL divergence metric is sensitive to dense, low-amplitude shifts, while the L$_0$-based similarity is designed for sparse, high-impact changes.
We provide a \textbf{formal proof of correctness} for our approach. The only training required is a simple fine-tuning step on a pre-trained image encoder using clean images to ensure the embeddings align well with both metrics. This makes \methodname{} a lightweight, plug-and-play solution for existing models and various data modalities.
Our extensive experiments on ResNet/CIFAR-10 and CLIP/Tiny-ImageNet confirm our theoretical claims. When the theorem's conditions are met, \methodname{} consistently and effectively detects adversarial examples. On the full test sets, \methodname{} achieves a precision of 0.96 and a recall of 0.97 on ResNet/CIFAR-10, and a precision of 0.71 and a recall of 0.94 on CLIP/Tiny-ImageNet.

\end{abstract}

\section{Introduction}
The increasing deployment of machine learning and deep learning models in safety-critical applications—such as autonomous driving, medical imaging, and security—underscores the need for robust and reliable systems. However, neural networks remain vulnerable to adversarial attacks, where small, often imperceptible perturbations to an input can cause the model to make a confident misclassification~~\citep{Biggio_2013,ijcai2018p543,xiao2018spatiallytransformedadversarialexamples,szegedy2014intriguingpropertiesneuralnetworks}. Protecting these models from such manipulation is a critical security and safety concern.

Defenses against adversarial attacks generally fall into three categories~\cite{Aldahdooh_2022}. The first, \emph{verification and certification}, aims to formally prove model robustness within a defined perturbation set~\citep{khedr2023deepbernnetstamingcomplexitycertifying,liu2020algorithmsverifyingdeepneural}. While these methods provide strong guarantees, they do not actively improve the model's behavior in deployment. The second, \emph{proactive defenses}, such as adversarial training and randomized smoothing, harden models by retraining or modifying their architecture~\citep{madry2019deeplearningmodelsresistant,pmlr-v97-cohen19c,shafahi2019adversarialtrainingfree}. These methods can be computationally expensive, often require prior knowledge of attack types, and may lag behind novel attack strategies. The final category, \emph{reactive detection}, augments a deployed model with a separate detector to flag adversarial inputs without altering the core network.

We focus on this reactive detection paradigm. Prior work in this area has largely pursued two main avenues. The first involves \emph{add-on detectors}, which rely on empirical observations of adversarial examples, such as their intrinsic statistics or the effects of feature space~\cite{Xu_2018,DBLP:conf/ndss/MaLTL019,ma2018characterizingadversarialsubspacesusing,meng2017magnettwoprongeddefenseadversarial}. Other methods train a separate detector head using adversarial examples~\citep{metzen2017detectingadversarialperturbations,grosse2017statisticaldetectionadversarialexamples}. While these methods can be effective, they typically \emph{lack formal guarantees of correctness}. The second involves \emph{semantics-driven detectors} that leverage external information, such as label text, auxiliary classifiers, or handcrafted cues~\citep{zhang2022carecertifiablyrobustlearning,zhou2024knowgraphknowledgeenabledanomalydetection,Muller2024VOGUES}. While powerful, these approaches depend on domain-specific priors that may not always be available or vary across different deployments and data modalities. Critically, they also \emph{lack proof of correctness} for their detection conditions.

\begin{figure}[!t]
\begin{center}
\centerline{\includegraphics[width=0.5\textwidth]{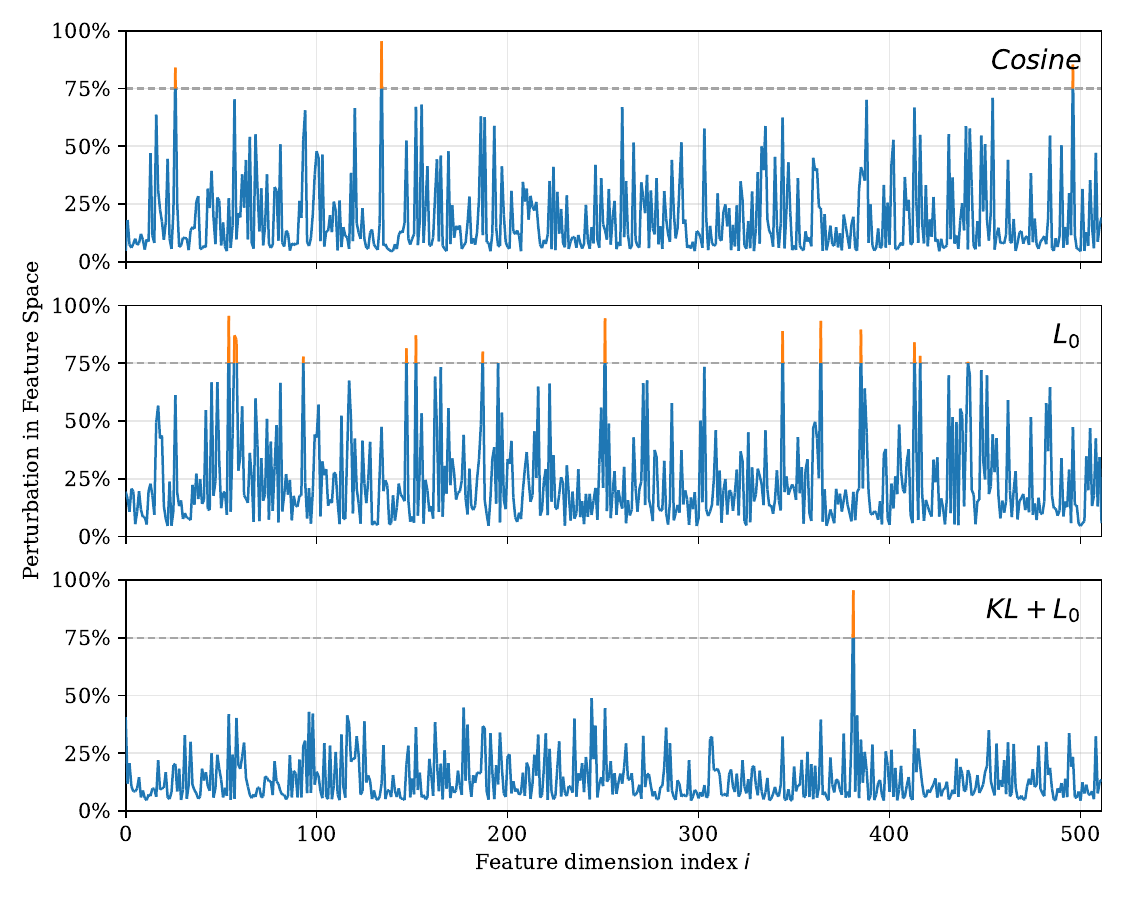}}
\vspace{-2mm}
\caption{\textbf{Coordinate-wise embedding perturbations under different attack objectives.}
We plot the absolute per-dimension change \(|\hat p_i - p_i^*|\), where \(\hat p\) and \(p^*\) denote the feature embeddings of the attacked and clean images, respectively. The y-axis reports the perturbation magnitude (shown as a percentage) across feature indices \(i\) on the x-axis for adversarial examples generated by three objectives: a cosine-based objective (top), an \(L_0\)-based objective (middle), and a composite KL\(+L_0\) objective (bottom).}
\label{fig:concept_perturabtion_analysis}
\end{center}
\vspace{-10mm}
\end{figure}

To analyze how adversarial attacks reshape representations, we plot the coordinate-wise embedding perturbation \(|\hat{p}_i - p_i^*|\), where \(\hat{p}\) and \(p^*\) denote the feature embeddings of the attacked and clean images, respectively. As shown in \cref{fig:concept_perturabtion_analysis}, optimizing the attack to ``fool'' a cosine-based similarity produces a distinctly \emph{sparse, high-impact} pattern (top): most feature dimensions change only slightly, while a small subset exhibits large spikes (for instance, only a handful of dimensions exceed a high-change threshold such as \(75\%\)). 

This observation suggests a natural defense: a sparsity-sensitive detector, e.g., an \(L_0\)-style score, which can flag inputs whose perturbation mass is concentrated in a few coordinates. However, an attacker can easily adapt to avoid an \(L_0\)-based detector. It can redistribute changes across dimensions. As illustrated in the middle subplot of \cref{fig:concept_perturabtion_analysis}, an \(L_0\)-driven detector causes the redistribution of the perturbation across many dimensions, yielding a \emph{denser, lower-amplitude} profile that is harder for an \(L_0\)-only detector to isolate. 

Motivated by this tradeoff, we propose the concept in \cref{figure1_concept}. Energy-limited adversarial perturbations faces a dilemma, either focus their energy on—(i) \emph{dense, low-amplitude} attack signal that shifts across many coordinates, or (ii) \emph{sparse, high-impact} signal that affects few coordinates. These regimes can be captured by two complementary similarity measures: KL divergence, which is sensitive to broad, small-magnitude output shifts; and an \(L_0\)-based score, which is sensitive to sparse, large-magnitude coordinate changes. 

Consistent with this view, the bottom subplot of \cref{fig:concept_perturabtion_analysis} shows that optimizing a composite objective (KL\(+L_0\)) yields overall constrained perturbations—most dimensions stay small with only occasional spikes. Together, these observations justify combining KL and \(L_0\) signals, thereby reducing the attacker’s ability to ``fool'' and evade detection by exploiting the weakness of any single metric.

More specifically, in this work, we propose \methodname{}, a light-weight and semantics-free adversarial detector that flags input as attack when predictions derived from our two complementary metrics—KL divergence and the L$_0$-based score—disagree. The only required training is a brief fine-tuning of an image encoder to align embeddings with both metrics simultaneously. This makes \methodname{} a simple, plug-and-play solution for existing models without the need for adversarial training or architectural changes.

Our approach is distinguished by a \emph{formal mathematical guarantee}. We prove that under norm-bounded perturbations and mild assumptions on the separation between class prototypes and the input embedding, each metric induces a distinct prediction stability band. Once the margins between the classes are sufficiently large, no single perturbation can keep the input within both bands simultaneously. This mutual exclusivity forces a disagreement between the two metrics, leading to guaranteed detection. Our extensive experiments on ResNet/CIFAR-10 and CLIP/Tiny-ImageNet corroborate this theory, demonstrating robust attack identification without relying on semantic priors, architectural modifications, or costly adversarial retraining.

Our core contributions are summarized as follows:
\begin{itemize}[leftmargin=*,nosep,topsep=0pt, partopsep=0pt, itemsep=0pt, parsep=0pt]
\item We introduce \methodname{}, a novel, plug-in adversarial detector based on the disagreement between KL divergence and L$_0$-based predictions.
\item We provide a theoretical proof of correctness that defines the explicit conditions under which this disagreement—and thus detection—is guaranteed to occur.
\item We propose a lightweight training recipe that only requires fine-tuning an encoder with clean images, avoiding the need for architectural changes or adversarial examples.
\item Our comprehensive experimental results demonstrate strong detection performance, aligning with our theory and offering a valuable complement to existing robust training and certification methods.
\end{itemize}

\begin{figure*}[h]
% \vskip 0.2in
\begin{center}
 \centerline{\includegraphics[width=1.0\textwidth]{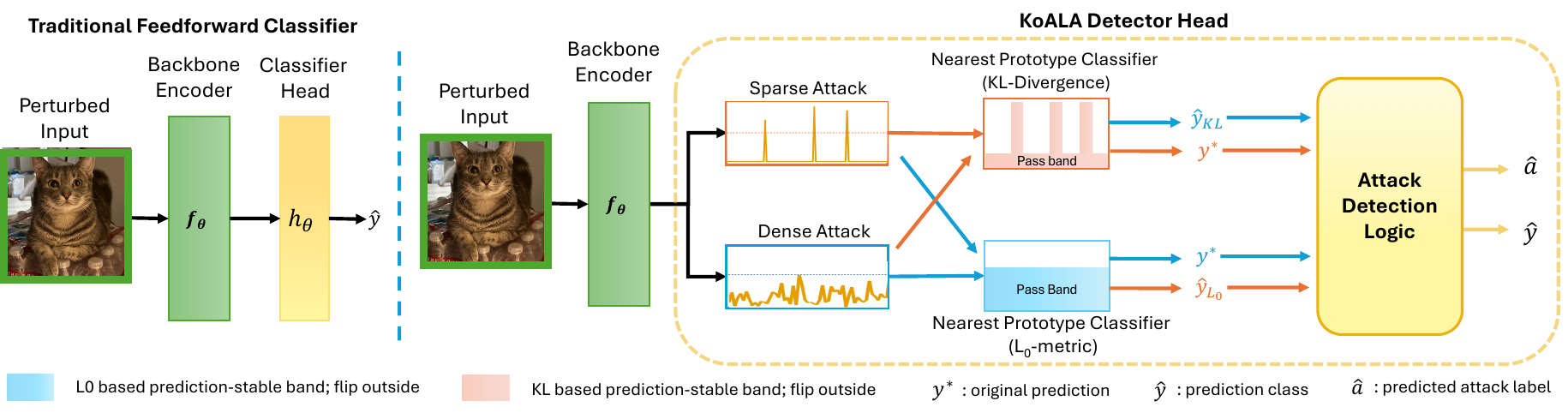}}
 \vspace{-2mm}
\caption{\textbf{Motivation for combining KL and $L_0$ as an attack detector.}
With an energy bound adversarial input, \(\|\bm{\delta}\|_2 \le \epsilon\), the resulting perturbation may be \emph{dense} (distributed) or \emph{sparse} (concentrated).
Each metric defines a prediction-stability band: inside the band the label remains \(y^*\); outside it flips to \(\hat{y}\).
Dense attacks typically violate the $L_0$ band (green), while sparse attacks violate the KL band (orange).
When two classification decisions disagree, we can detect adversarial attacks.}
\label{figure1_concept}
\end{center}
\vspace{-9mm}
\end{figure*}

\section{Related Work}
\textbf{Detectors trained with adversarial examples.}
One intuitive approach is to train detectors on generated adversarial examples~\citep{metzen2017detectingadversarialperturbations,grosse2017statisticaldetectionadversarialexamples,10414418}. 
While effective against the attacks seen during training, these detectors typically rely on prior knowledge of the threat model and can degrade under newly crafted or adaptive attacks. Our work is orthogonal to theirs in that our approach does not require adversarial training.

\textbf{Detectors utilizing intrinsic statistics of attacks.} Compared to clean samples, adversarial inputs often exhibit systematic statistical deviations designed to fool neural networks. Leveraging this observation, prior work distinguishes clean from adversarial inputs by extract residual and structural information of clean data \citep{11008833} or probing regularities in feature or activation space, e.g., invariant checking over internal activations (NIC)~\citep{DBLP:conf/ndss/MaLTL019}, 
prediction inconsistency under input feature squeezing~\citep{Xu_2018}, 
local intrinsic dimensionality (LID) statistics~\citep{ma2018characterizingadversarialsubspacesusing}, 
autoencoder-based reformers/detectors (MagNet)~\citep{meng2017magnettwoprongeddefenseadversarial}, Mahalanobis~\citep{lee2018simpleunifiedframeworkdetecting}, CADet~\citep{NEURIPS2023_1700ad4e}, Bayesian-based uncertainty~\citep{feinman2017detectingadversarialsamplesartifacts}, class-disentanglement~\citep{NEURIPS2021_8606f35e} and adversarial direction comparision~\citep{yu2019newdefenseadversarialimages}.
These methods are generally empirical and lack formal \emph{proof-of-correctness} guarantees against adaptive adversaries. 
While we provide explicit theoretical conditions under which our detector is provably correct, specifying when adversarial examples must be detected.

\textbf{Semantics- and knowledge-driven detection.}
Attacks can also be detected by examining semantic inconsistencies at inference using domain knowledge and reasoning modules\citep{Mumuni_2024}, e.g., MLN/GCN pipelines for certifiable robustness~\citep{zhang2022carecertifiablyrobustlearning}, 
knowledge-enabled graph detection~\citep{zhou2024knowgraphknowledgeenabledanomalydetection,song2025robustknowledgegraphembedding}, 
and part-level reasoning for object tracking defenses (VOGUES)~\citep{Muller2024VOGUES}. 
These approaches can be powerful but have limitations across modalities and tasks, as their effectiveness depends on semantics and specific domain knowledge.
In contrast, our method is semantics-free: it operates purely on representation geometry via a $KL$/$L_0$ disagreement criterion and provides detector-specific correctness conditions, yielding a lightweight, plug-in detector. % for safety-critical models.

\section{Methodology}
\subsection{The \methodname{} Detector and \methodname{} Head}

We consider a neural network classifier comprised of two main components: i) a backbone encoder $f_{\theta}:\mathcal{I}\to \mathbb{R}^d$ that maps input from the data space $\mathcal{I}$ (e.g., images) to feature embedding $\in \mathbb{R}^d$; ii) a classifier head $h_{\theta}:\mathbb{R}^d \to \{1,\ldots,m\}$ uses the embedding to determine the final class. 

In a traditional feedforward neural network, the backbone encoder corresponds to all layers up to the penultimate layer, while the classifier head is the final output layer (e.g., a fully connected layer followed by a softmax). Our method, \methodname{}, replaces this conventional classifier head with a novel component, which we term the \methodname{} Detector, operates on the embeddings produced by the backbone encoder to simultaneously classify the input and flag it as an attack when necessary.

As shown in Figure~\ref{figure2_architecture}, the \methodname{} Detector operates as a nearest prototype classifier~\citep{snell2017prototypical}, which determines the predicted class $\hat{y} \in \{1, \ldots, m\}$ by finding the prototype vector—the pre-computed centroid for each class—that is closest to the input's feature embedding in the normalized feature space, i.e., for feature vector $\bm{p} = f_\theta(\bm{I})$ of input $\bm{I}$, the nearest prototype classifier head: 
\begin{align*}
\hat{y} = \arg\min_k \textbf{Distance}(\bm{c}_k,\bm{p})     
\end{align*}
for some $\textbf{Distance}$ function and pre-selected prototype vectors (also known as class centroids) $\bm{c}_1,\ldots,\bm{c}_m$.
This effectively classifies input based on its proximity to representatives of each class.

\begin{figure*}[!t]
%\vskip 0.2in
\begin{center}
 \centerline{\includegraphics[width=1.0\textwidth]{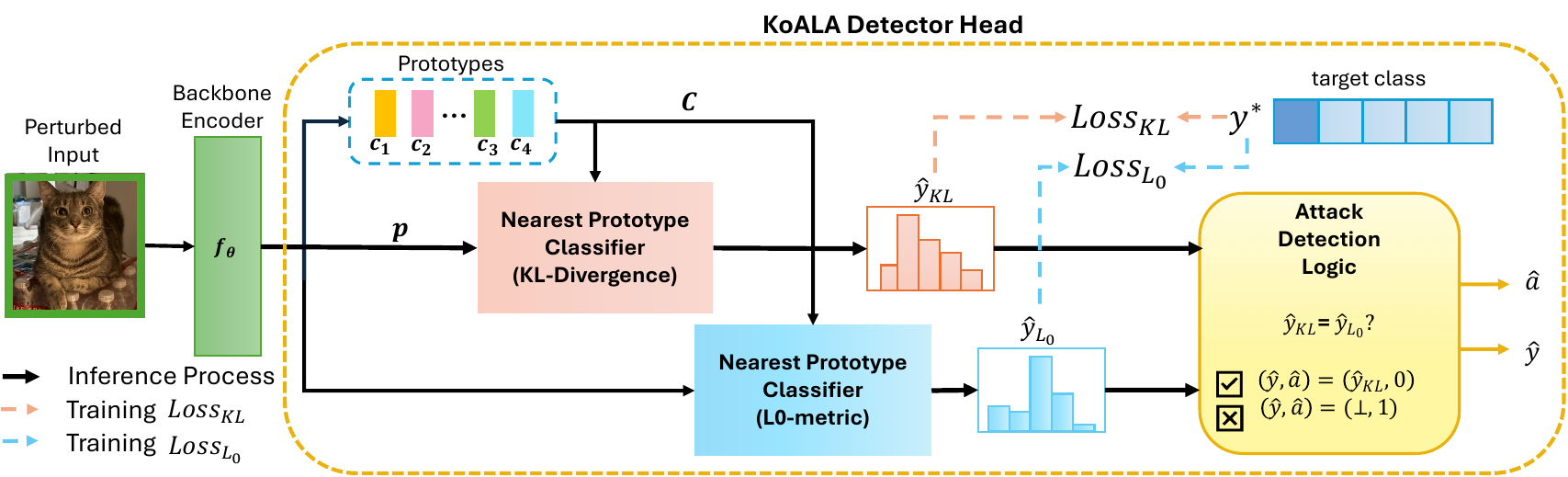}}

\caption{\textbf{Training phase:} Class centroids $\bm{C}$ are computed as the centroid of image embeddings within each class. Each image embedding $\bm{p}$ is compared with $\bm{C}$ to compute the $Loss_{KL}$ and $Loss_{L_0}$. The model is trained to make the L0 and KL distances small for the correct class while large for incorrect classes.
\textbf{Inference phase:} An input image embedding $p$  is compared with class centroids $\bm{C}$ to calculate $KL$ and $L0$-based predictions $\hat{y}_{KL}$  and $\hat{y}_{L_0}$. The predicted class is accepted only if both metrics agree; otherwise, the system flags the input as an adversarial attack detected ($\widehat{a}=1$).}
\label{figure2_architecture}
\vspace{-10mm}
\end{center}
\end{figure*}

Traditional nearest prototype classifiers use a single distance metric (e.g., Euclidean or Cosine-similarity) to find the closest class prototype. In contrast, \methodname{} is designed to leverage multiple, complementary metrics for classification and adversarial detection. As shown in Figure~\ref{fig:concept_perturabtion_analysis}, the motivation behind KOALA is the observation that adversarial perturbations can manifest in two distinct ways under an energy-limited budget:
\begin{itemize}[leftmargin=*,nosep,topsep=0pt, partopsep=0pt, itemsep=0pt, parsep=0pt]
    \item \emph{Sparse, High-Impact Perturbations:} Few feature dimensions are modified with a large magnitude.%A small number of feature dimensions are modified with a large magnitude.

    \item \emph{Dense, Low-Amplitude Perturbations:} Many feature dimensions are modified by small magnitude. %Many feature dimensions are modified with a small magnitude.
\end{itemize}
These two types of attacks are difficult to detect with a single metric. KOALA addresses this by using a combination of $L_0$ and $KL$ divergence metrics:
\begin{itemize}[leftmargin=*,nosep,topsep=0pt, partopsep=0pt, itemsep=0pt, parsep=0pt]
    \item $KL$ Divergence: This metric measures the shift in the output probability distribution. It is particularly sensitive to dense, low-amplitude perturbations that subtly influence the model's overall output, even if no single feature dimension is drastically altered. The $KL$ Divergence is defined as follows:
    \begin{align}
        KL \bigl(\bm{c}\|\bm{p}\bigr)=\sum_{i=1}^{d} c_i\,\log\!\frac{c_i}{p_i}.
    \end{align}

    \item $L_0$ distance: This metric measures the number of dimensions in the feature vector that have been perturbed above a certain threshold. It is therefore highly sensitive to sparse, high-impact changes, making it effective at detecting targeted, ``surgical'' attacks. The $L_0$ distance metric is defined as follows:
    \begin{align}
    {L_0}(\bm{c},\bm{p})= \textbf{card}\Big( \{ i: |c_i-p_i|-\tau\mu(\bm{c},\bm{p}) > 0 \} \Big),
    \end{align}
    where $\textbf{card}(\{.\})$ denotes the cardinality of the set,  $\mu(\bm{c},\bm{p})=\frac{1}{d}\sum_{i=1}^d |c_i-p_i|$ is the average distance across all the entries of $|\bm{c} - \bm{p}|$, and $\tau \in [0,1]$ is a threshold parameter. In other words, the $L_0$ metric counts the number of features whose value are above a certain threshold relative to the average value of the feature vector.
\end{itemize}

The \methodname{} Detector operates by simultaneously leveraging the two complementary metrics above. For a given input embedding $\bm{p}$, the detector computes both the $KL$-divergence and the $L_0$-based distance to all class prototype vectors $\bm{c_k}$. These computations yield two distinct class predictions: 
\begin{align}
\hat{y}_{\mathrm{KL}}=\arg\min_k KL(\bm{c}_k,\bm{p}),
\hat{y}_{L_0}=\arg\min_k L_0(\bm{c}_k,\bm{p}).
\end{align}

The core of our detection mechanism lies in the disagreement between these two predictions. An input is declared attacked when the class predicted by the KL-divergence, $\hat{y}_{KL}$, does not match the class predicted by the $L_0$-based metric,  $\hat{y}_{L_0}$. In this case, the detector abstains from making a final classification. If the two predictions agree, the input is considered benign, and the shared class prediction becomes the final output. This behavior is formally defined by the following decision rule:
\begin{align}
(\hat a,\hat y)= (1,\bot)\ \text{if }\hat y_{L_0}\ne \hat y_{\mathrm{KL}},\ \text{else }(0,\hat y_{\mathrm{KL}}),
\end{align}
where $\widehat{a}\in\{0,1\}$ is the predicted attack label, with $\widehat{a}=1$ indicating an attack and $\widehat{y}$  the final predicted class, with $\bot$ signifying an abstention (no class).

\subsection{Theoretical Guarantees}
\label{subsec:theoretical basis}

Our proposed method, \methodname{}, is not merely an empirical defense; it is grounded in a formal mathematical guarantee. We provide a proof of correctness under a set of mild and practical assumptions. The core idea is to show that a single adversarial perturbation cannot simultaneously fool both the KL- and $L_0$-based classifiers.

The following assumptions underpin our main theorem:

\begin{enumerate}[leftmargin=2.5em,topsep=0pt, partopsep=0pt, itemsep=0pt, parsep=0pt]
\item[\textbf{A1}] \emph{Normalized Feature vector space:} All feature embeddings $f_\theta(\bm{I})$ and class prototypes $\bm{c_1}, \ldots, \bm{c_m}$ are normalized, i.e., their coordinates sum to 1 and are strictly positive. This is satisfied by using a softmax or similar normalization on the feature vectors.
\item[\textbf{A2}] \emph{Bounded Perturbation:} The adversarial perturbation $\delta$ in the feature space has a limited energy budget, i.e., $\Vert \delta \Vert \le \epsilon$. This is a standard assumption in adversarial robustness, following from the Lipschitz continuity of the backbone encoder.
\item[\textbf{A3}] \emph{Coordinate-wise Bound:} The magnitude of the perturbation on any single coordinate is bounded relative to the original value, $|\delta_i| \le \frac{3}{16} p_i^*$. This is a mild and practical condition, as extremely large, coordinate-wise perturbations are rarely effective or imperceptible.
\item[\textbf{A4}] \emph{Clean Example Alignment:} On clean, unperturbed inputs, both the KL and $L_0$ metrics agree on the true class. This alignment is encouraged by our lightweight fine-tuning procedure, which shapes the embeddings to be meaningful under both metrics.
\end{enumerate}

Building on these assumptions, our central result is Theorem~\ref{thm:mutual-exclusion}, which establishes that a sufficiently large separation between class prototypes guarantees the detection of adversarial attacks.

\begin{theorem} \label{thm:mutual-exclusion}
If Assumptions A1-A4 are satisfied, and there exists a coordinate $i$ where the gap between the true class prototype $c_i^*$ and the predicted adversarial class prototype $\hat{c}_i$ is sufficiently large (i.e., $|c_i^* - \hat{c}_i| > \Gamma_i(\epsilon)$, for some threshold $\Gamma_i(\epsilon)$), then no perturbation $\delta$ with $\Vert \delta \Vert \le \epsilon$ can simultaneously cause both the KL- and $L_0$-based predictions to favor the adversarial class.
\end{theorem}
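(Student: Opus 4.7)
The plan is a proof by contradiction. I assume there exists $\delta$ with $\|\delta\|_2 \le \epsilon$ such that both $\hat{y}_{KL} = \hat{y}$ and $\hat{y}_{L_0} = \hat{y}$ for some adversarial class $\hat{y} \neq y^*$, and I derive a contradiction from the large-gap assumption on coordinate $i$. The structure mirrors the motivation of the paper: $L_0$ is sensitive only to sparse perturbations while KL is sensitive only to dense ones, so the case split is along the magnitude of $\delta$ on the pivotal coordinate $i$. Either the adversary is \emph{sparse-like} and spends most of its $\ell_2$ budget on coordinate $i$, or it is \emph{dense-like} and spreads $\delta$ across the other coordinates; the large gap $|c_i^* - \hat{c}_i| > \Gamma_i(\epsilon)$ will rule out one metric in each case.

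First I characterize the $L_0$ flip. By A4 and A1, at the clean point coordinate $i$ is an outlier for $\hat{c}$ but not for $c^*$, since $p^*_i$ lies inside the $\tau\mu(c^*,p^*)$-band around $c^*_i$ while $|c^*_i - \hat{c}_i|$ is large. For $L_0$ to flip on coordinate $i$, $p_i = p^*_i + \delta_i$ must cross into the $\tau\mu(\hat{c},p)$-band, forcing by triangle inequality $|\delta_i| \ge |c^*_i - \hat{c}_i| - \tau\bigl(\mu(c^*,p) + \mu(\hat{c},p)\bigr)$, which is $\Omega(\Gamma_i(\epsilon))$ since the $\mu$ terms are controlled by $\|\delta\|_1/d$ and the prototype geometry. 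If instead $L_0$ flips \emph{without} coordinate $i$ switching, enough other coordinates must flip to overcome the clean $L_0$ margin, and an analogous band-crossing argument imposes a comparable $\ell_2$ lower bound on $\|\delta_{-i}\|_2$. Second I analyze the KL flip via
\begin{align*}
KL(\hat{c},p) - KL(c^*,p) = m_{KL} + \sum_{j}(c^*_j - \hat{c}_j)\,\log\!\frac{p^*_j + \delta_j}{p^*_j},
\end{align*}
where $m_{KL} := KL(\hat{c},p^*) - KL(c^*,p^*) > 0$ by A4 and, by a Pinsker-type lower bound concentrated on coordinate $i$, is itself of order $\Omega(\Gamma_i^2(\epsilon))$. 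Bounding the log via $|\log((p^*_j + \delta_j)/p^*_j)| \le C|\delta_j|/p^*_j$ (A3 and a second-order expansion) and applying Cauchy--Schwarz on the complement of $i$ gives an upper bound
\begin{align*}
\Bigl|\sum_j (c^*_j - \hat{c}_j)\,\log\!\frac{p^*_j + \delta_j}{p^*_j}\Bigr| \le \frac{C\,|c^*_i - \hat{c}_i|\,|\delta_i|}{p^*_i} + C\,\|\delta_{-i}\|_2 \,\bigl\|(c^*_{-i} - \hat{c}_{-i})/p^*_{-i}\bigr\|_2.
\end{align*}

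Combining the two regimes closes the argument. In the sparse regime ($|\delta_i|$ large), the residual budget $\|\delta_{-i}\|_2^2 \le \epsilon^2 - \Omega(\Gamma_i^2)$ is too small for the Cauchy--Schwarz term to exceed $m_{KL} = \Omega(\Gamma_i^2)$; in the dense regime ($|\delta_i|$ small), the coordinate-$i$ contribution to KL is negligible, the $L_0$ flip cannot come from coordinate $i$, and the band-crossing lower bound on $\|\delta_{-i}\|_2$ induced by $L_0$ again exhausts the budget that KL would need. Either way, not both metrics can flip to $\hat{y}$ once $\Gamma_i(\epsilon)$ is chosen large enough. The main obstacle I expect is calibrating the explicit form of $\Gamma_i(\epsilon)$ so that all three quantities--the Pinsker-type lower bound on $m_{KL}$, the Cauchy--Schwarz constant $\|(c^* - \hat{c})/p^*\|_2$, and the $L_0$ band width $\tau\mu$--match orders in $\epsilon$. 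The normalization constraint $\sum_j c_j = \sum_j p_j = 1$ from A1 couples the prototypes across coordinates and will require careful bookkeeping to avoid losing constants; I also expect the sharpness of the log linearization under A3 to depend delicately on the size of $p^*_i$, which will likely surface as an additional factor in the final threshold.
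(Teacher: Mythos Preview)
Your high-level strategy---derive necessary conditions for each flip and show they are incompatible under the $\ell_2$ budget---matches the paper's, and your KL linearization and $L_0$ band-crossing ingredients are the right ones. But two steps in your proposal do not go through as written.

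First, the claim that $m_{KL} := KL(\hat{c}\|p^*) - KL(c^*\|p^*)$ is $\Omega(\Gamma_i^2)$ ``by a Pinsker-type lower bound'' is not justified. Pinsker bounds a \emph{single} KL divergence below by total variation squared; here you have a \emph{difference} of two KL divergences, and A4 only gives you $m_{KL} > 0$. Expanding directly, $m_{KL} = \sum_j[\hat{c}_j\log\hat{c}_j - c^*_j\log c^*_j] - \sum_j(\hat{c}_j - c^*_j)\log p^*_j$, whose coordinate-$i$ contribution scales like $\Gamma_i\,|\log p^*_i|$, i.e.\ linearly in $\Gamma_i$, not quadratically. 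The paper sidesteps this by never lower-bounding $m_{KL}$ in terms of $\Gamma_i$; instead it keeps $\Delta KL(p^*)$ as an explicit quantity and compares it directly to the constrained maximum of $v^\top\delta$.

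Second, and more seriously, your dense-regime argument is backward. You write that the $L_0$ band-crossing forces $\|\delta_{-i}\|_2$ to be large and that this ``exhausts the budget that KL would need.'' But KL flipping requires $v^\top\delta > m_{KL}$, and having $\|\delta_{-i}\|_2$ large is precisely what \emph{enables} the Cauchy--Schwarz term $\|\delta_{-i}\|_2\,\|v_{-i}\|_2$ to be large---it does not exhaust anything. The genuine obstruction in this regime is \emph{directional}: the $L_0$ constraints force specific coordinates $|\delta_j| \ge \texttt{min}_j$, and those coordinates need not align with the direction $v/\|v\|$ that maximizes $v^\top\delta$. The paper handles this by solving the constrained maximization $\max\{v^\top\delta : \|\delta\|_2 = \epsilon,\ |\delta_j| \ge \texttt{min}_j \text{ on } \mathbb{S}\}$ explicitly (projecting the unconstrained maximizer onto the feasible set and tracking the loss in objective value), then showing the resulting maximum falls below $\Delta KL(p^*)$ once the coordinate gap exceeds an explicit $\Gamma_i(\epsilon)$. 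Your sparse/dense dichotomy could be made to work, but only after you replace the magnitude-based argument in the dense case with this directional one.
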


In essence, the theorem proves that the KL and $L_0$ stability bands are mutually exclusive for adversarial perturbations. An attack can push an embedding out of one stability band, causing a prediction flip, but it cannot simultaneously push it out of both. This forces a disagreement, leading to guaranteed detection. This result provides a rigorous foundation for \methodname{}'s effectiveness, showing that if the feature space is properly structured (which our fine-tuning encourages), detection is not a probabilistic outcome but a mathematical certainty.

\noindent \textbf{Proof Sketch for Theorem~\ref{thm:mutual-exclusion}:} A complete proof of Theorem~\ref{thm:mutual-exclusion} is provided in the appendix~\ref{proof_of_thm1}. Below, we provide a high-level sketch to convey the core intuition behind our guarantee. The proof's central idea is to show that, under a limited energy budget, an adversarial perturbation cannot simultaneously satisfy the conditions required to fool both the KL- and $L_0$-based classifiers. We establish this through three key propositions:
\begin{enumerate}[label=(\roman*),leftmargin=*,topsep=0pt, partopsep=0pt, itemsep=0pt, parsep=0pt]
\item \emph{Necessary Conditions for successful attack on KL-Divergence metric} (Prop.~\ref{prop:kl-flip}): To flip the KL-based prediction from the true class prototype $\bm{c}^*$ to an adversarial class prototype $\hat{\bm{c}}$, the adversarial perturbation $\delta$ must have a positive inner product with the vector $\hat{\bm{c}} - \bm{c}^*$. This condition, means the perturbation must ``align'' with a particular direction in the feature space.
\item \emph{Necessary Conditions for successful attack on $L_0$-metric} Prop.~\ref{prop:l0-flip}): To change the $L_0$ based prediction, the perturbation must alter a minimum number of feature dimensions ($k$) by a significant amount. This consumes a portion of the total perturbation energy ($\Vert\delta\Vert$) allowed by the budget. The more dimensions that need to be flipped, the more energy is consumed, and the less is left for other purposes.
\item \emph{The Incompatibility Condition} (Prop.~\ref{prop:kl-l0}): We show that these two conditions are fundamentally incompatible. For any given adversarial perturbation, we can always find a threshold $\tau$ for the $L_0$ metric that forces a trade-off. The energy required to satisfy the $L_0$ flip condition (moving a sufficient number of coordinates by a large enough magnitude) leaves insufficient residual energy to satisfy the KL-flip condition (aligning the perturbation with the vector $\hat{\bm{c}} - \bm{c}^*$.
\item \emph{Conclusion:} The final step proves that such a threshold $\tau$ always exists as long as there is a sufficiently large ``coordinate gap'' between the true class prototype and the adversarial class prototype. This means that if the feature space is well-structured--which our fine-tuning encourages--no single adversarial perturbation can successfully flip both predictions, forcing them to disagree and enabling our detection mechanism.
\end{enumerate}

\subsection{Fine-Tuning for Prototype Alignment}
\label{subsec:training}

Our formal guarantees in Theorem 1 rely on the assumption that on clean inputs, the feature embeddings are well-aligned with their respective class prototypes under both KL-divergence and $L_0$-based metrics (Assumption \textbf{A4}). To achieve this, we introduce a lightweight fine-tuning procedure for the backbone encoder $f_\theta$. This procedure is designed to simultaneously minimize the distance between a clean image embedding and its corresponding class prototype across both metrics, thereby encouraging the ``coordinate gap'' crucial for our detection method.

Our training objective is a composite loss that penalizes the dissimilarity between image embeddings and their class prototypes. To ensure stable optimization, we first map the KL and $L_0$ distances to a comparable, differentiable, and range-bounded similarity score.

$\bullet$ \textbf{KL-similarity loss}:
We define the KL-based similarity between a class prototype $\bm{c}$ and an image embedding $\bm{p}$ as:
$$\mathrm{sim}_{\!KL}(\mathbf{c},\mathbf{p})=\exp\bigl(-\,{\mathrm{KL}}(\mathbf{c}\|\mathbf{p})\bigr) \in(0,1].$$
Using this similarity, we train the encoder with a standard binary cross-entropy loss over a set of positive and negative image-prototype pairs. This loss encourages the similarity of positive pairs (matching image and prototype) to be high and that of negative pairs (mismatched image and prototype) to be low. Formally, we finetune the model using the following loss function:
\begin{align}
\mathcal{L}_{KL} &= -\mathbb{E}_{(i,j) \in \mathcal{P}}[y^{*}_{ij}\log s_{ij} + (1-y^{*}_{ij})\log(1-s_{ij})], \nonumber\\ 
&\qquad \text{where } s_{ij} = \mathrm{sim}_{\!KL}(\mathbf{c}_i,\mathbf{p}_j).
\end{align}
Here, $\mathcal{P}$ denotes the set of image-prototype pairs, and $y_{ij}^*$ is a binary label (1 for a matching pair, 0 otherwise).

$\bullet$ \textbf{$L_0$-similarity loss}:
The $L_0$ distance, which counts the number of perturbed dimensions, is non-differentiable. To make it trainable, we use a smooth, differentiable surrogate. We approximate the $L_0$ metric with a smoothed surrogate function $\widehat{L_0}(\bm{c},\bm{p})$ using the sigmoid function to obtain a continuous value.The $L_0$-based similarity is then defined as a normalized, inverse measure of this surrogate:
\begin{align}
&\mathrm{sim}_{L_0}(\mathbf{c},\mathbf{p})\;=\;1-\frac{\widehat{L_0}(\mathbf{c},\mathbf{p})}{d}\;\in [0,1],\nonumber\\ 
&\; \text{where } \widehat{L_0}(\mathbf{c},\mathbf{p})\;=\;\sum_{i=1}^{d}\sigma\!\left(\frac{|c_i-p_i|-\tau\cdot \mu(\mathbf{c},\mathbf{p})}{\phi}\right), 
\end{align}
where $\phi>0$ is a smoothness parameter and $\sigma(x)=\frac{1}{1+e^{-x}}$ is the sigmoid function. 
Similar to the KL loss, we use the binary cross entropy loss for $L_0$-based similarity:
\begin{align}\mathcal{L}_{L_0} &= -\mathbb{E}_{(i,j) \in \mathcal{P}}[y^{*}_{ij}\log s_{ij} + (1-y^{*}_{ij})\log(1-s_{ij})], \nonumber\\ &\quad \text{where } s_{ij} = \mathrm{sim}_{L_0}(\mathbf{c}_i,\mathbf{p}_j).
\end{align}

$\bullet$ \textbf{Total Objective:}
The final training objective is a weighted sum of the two similarity losses:
\begin{align}
\label{eq:ltotal}
\mathcal{L}_{\text{total}}
\;=\;
\omega_{L_0}\,\mathcal{L}_{L_0}
\;+\;
\omega_{KL}\,\mathcal{L}_{KL},
\end{align}
where $\omega_{L_0}$ and $\omega_{KL}$ are non-negative mixing weights. This composite loss guides the encoder to produce embeddings that are simultaneously cohesive under both KL(dense-shift-sensitive) and metric $L_0$(sparse-shift-sensitive), which is a key requirement for \methodname{}'s guaranteed detection.

\subsection{$\methodname{}^+$}
Motivated by Theorem~\ref{thm:mutual-exclusion}, to further strengthen separation between class prototypes, we augment the feature space by appending an additional \(k\)-dimensional identity encoding to each class prototype, where \(2^k \geq \#\text{classes}\). 
When \(k < \#\text{classes}\), we assign each class index a fixed binary code (e.g., class \(0=[0,0,0]\), class \(1=[0,0,1]\), class \(2=[0,1,0]\)); otherwise, we use a one-hot code. We extend the model feature layer by the same \(k\) dimensions and then freeze the backbone and fine-tune the auxiliary head with cosine loss to align the appended features with prototypes.

These added features yields prototype vectors that are separated in the cosine-similarity sense---induced by the binary/one-hot encoding---which allows to extend KoALA to use an additional cosine-based detector. As a result, under a bounded perturbation budget, an attacker must therefore trade off between spending more capacity to reduce cosine separation vs evading the KL+\(L_0\) detectors. Overall, the auxiliary head improves robustness through stronger prototype separation.

% As shown in \cref{fig:concept_perturabtion_analysis_add_dim}, cosine-based attacks can shift the prediction without making the ground-truth prototype simultaneously minimal under either metric: the \(L_0\) and KL distances for the ground-truth class are not necessarily the smallest. When the attacker optimizes to avoid an \(L_0\)-based detector, it can redistribute changes across dimensions, which often increases the KL distance of the ground-truth class as a side effect. In contrast, the combined objective (KL\(+L_0\)) yields a perturbation profile that differs qualitatively from using \(L_0\) alone: after incorporating both terms, the resulting embedding deviations are constrained under \emph{both} KL and \(L_0\), indicating improved adversarial robustness of \methodname{}. Intuitively, the attacker now faces a stricter requirement—misclassification requires simultaneously manipulating both the sparse, high-impact signal and the dense, distributional-shift signal—making it harder to induce an incorrect decision while remaining undetected. Even when the prediction changes, the attack struggles to fool both metrics at once, which makes adversarial examples easier to capture with our \methodname detector and leads to higher detection rates (see Section~\ref{sec:experiments}).

\section{Experiments}
\label{sec:experiments}
Our experiments evaluate \methodname{}'s performance on two distinct architectures and datasets, employing standard adversarial attacks to test its effectiveness.

\subsection{Experimental Setup}

$\bullet$ \textbf{Models and Datasets}.
We use two models to demonstrate \methodname{}'s versatility: a ResNet-18 model on CIFAR-10 and a CLIP model on Tiny-ImageNet. For both datasets, we randomly split the development sets into two equal halves to serve as the test and validation sets.
\begin{itemize}[leftmargin=1.5em,topsep=0pt]
    \item \textbf{ResNet-18 on CIFAR-10:} We start with a baseline ResNet\mbox{-}18 backbone trained on CIFAR-10~\citep{krizhevsky2009learning}. The final fully connected layer (classifier head) is removed to produce image embeddings. Class prototypes (centroids) $\bm{c_1},\ldots,\bm{c_m}$ are computed as the mean  embedding of all training examples for each class. The backbone is finetuned using the composite loss described in the Fine-Tuning section, with SGD optimizer, learning rate $1\times10^{-3}$, weight decay $5\times10^{-4}$, momentum $0.9$, and batch size $128$. The loss weights are set to $\omega_{L_0}=0.9$ and $\omega_{\mathrm{KL}}=0.1$ (as $L_0$ is harder to optimize) and the hyperparameters are $\tau=0.75$ and $\phi=0.1$. 

    \item \textbf{CLIP on Tiny-ImageNet:} We also fine-tune the pre-trained CLIP ViT-B/32 model on the Tiny-ImageNet. The class prototypes $\bm{c}^k$ here are obtained by using the CLIP text encoder with prompt ``a photo of \texttt{[CLASS]}''. SGD is used for fine-tuning with learning rate $1\times10^{-4}$, weight decay $0$, momentum 0.9, and batch size 128. The loss weights again $\omega_{L_0}=0.9$ and $\omega_{\mathrm{KL}}=0.1$.
\end{itemize}

$\bullet$ \textbf{Adversarial Attacks:} We generate a variety of adversarial examples using established attack methods. We report results on clean accuracy, adversarial accuracy, and adversarial detection rate. All attacks are constrained by the $\ell_\infty$ norm with $\epsilon\in\{2/255,\,4/255\}$ and a batch size of 128.
\begin{itemize}[leftmargin=1.5em,topsep=0pt,itemsep=-1ex]
    \item \textbf{PGD (Projected Gradient Descent) \citep{madry2017towards}:} A classic iterative attack used to generate adversarial examples for both the ResNet and CLIP models.

    \item \textbf{CW \citep{carlini2017towards} Attack:} A powerful, optimization-based attack on both models.

    \item \textbf{AutoAttack \citep{croce2020reliable}:} A suite of four diverse attacks used to reliably test robustness, serving as a robust benchmark against both models.

\end{itemize}
$\bullet$ \textbf{Computation Overhead:} Across all experiments, \methodname{} incurs less than 1.09s per batch (batch size 128). Additional details are provided in Appendix~\ref{sec:Computation Overhead}.
% \paragraph{Attacks} For evaluation, we report both clean, adversarial accuracy and adversarial detection rate. For the ResNet model, we evaluate with adversarial examples generated with  Projected Gradient Descent(PGD) \citep{madry2017towards} method and for both model, we evaluate with PGD \citep{madry2017towards} and CW attack \citep{carlini2017towards} and autoattack \cite{croce2020reliable} under the $\ell_\infty$ norm (10 steps) at $\epsilon\in\{2/255,\,4/255\}$, using a batch size of $128$. 
\begin{table*}[ht]
\caption{Results from Experiment 1: Detector metrics—accuracy, precision, recall, and F1—for ResNet-18 and CLIP (ViT-B/32) backbone finetuned with our $\mathcal{L}_{\text{total}}$ objective in~\eqref{eq:ltotal} and evaluated under PGD on the two subsets: images that \emph{satisfy} Theorem~\ref{thm:mutual-exclusion} vs.\ those that \emph{don't}.}
\vspace{-4mm}
\label{table3}
  \begin{center}
    \begin{small}
      \begin{sc}
\footnotesize
\setlength{\tabcolsep}{2.5pt}
\begin{tabular}{l| c | c c c c c| c c c c c}
\toprule
\multirow{2}{*}{Model} 
  & \multirow{2}{*}{\begin{tabular}[c]{@{}c@{}}Attack\\Perturbation\end{tabular}} 
  & \multicolumn{5}{c|}{Thm.~\ref{thm:mutual-exclusion} Compliant Samples} 
  & \multicolumn{5}{c}{Non Compliant Samples} \\
  & &Sample Size& Acc & Prec & Rec & F1 
  &Sample Size & Acc & Prec & Rec & F1 \\
  
\midrule
\multirow{2}{*}{ResNet-CIFAR-10}
  & \(\ell_\infty^{2/255}\)  &3345 &\textbf{1.0} & \textbf{1.0} &\textbf{1.0} &\textbf{1.0} &1655 &0.63 & 0.73 &0.42 &0.53\\
  & \(\ell_\infty^{4/255}\) &2967 &\textbf{1.0} &\textbf{1.0} &\textbf{1.0} &\textbf{1.0} &2033 &0.66 & 0.78 &0.45 &0.57 \\
\midrule
\multirow{2}{*}{CLIP-TinyImageNet}
  & \(\ell_\infty^{2/255}\) &510 &\textbf{1.0} & \textbf{1.0} &\textbf{1.0} &\textbf{1.0} &4490  &0.67 & 0.63 &0.84 &0.72\\
  & \(\ell_\infty^{4/255}\)&556 &\textbf{1.0} & \textbf{1.0} &\textbf{1.0} &\textbf{1.0} &4444 &0.65 &0.62 &0.80 &0.70  \\
\bottomrule
\end{tabular}
      \end{sc}
    \end{small}
  \end{center}
  \vskip -0.1in
  \vspace{-1mm}
\end{table*}

\begin{table*}[ht]
\caption{Results from Experiment 2: Comparison of key detector performance metrics (accuracy, precision, recall, F1) for ResNet-18 and CLIP (ViT-B/32) models. 
\vspace{-4mm}
%The backbones were fine-tuned with four different composite embedding objectives: $L_0$+Cosine, $\mathrm{KL}$+Cosine, $L_0{+}\mathrm{KL}$, and $L_0{+}\mathrm{KL}$+Cosine, evaluated under PGD adversarial attacks.
}
\label{table4}
  \begin{center}
    \begin{small}
      \begin{sc}
\footnotesize
\setlength{\tabcolsep}{2pt}
\begin{tabular}{l | c | c c c c| c c c c}
\toprule
\multirow{2}{*}{\begin{tabular}[c]{@{}c@{}}Metric\\Combinations\end{tabular}} &
\multirow{2}{*}{\begin{tabular}[c]{@{}c@{}}Attack\\Perturbation\end{tabular}} &
\multicolumn{4}{c|}{ResNet-CIFAR-10} &\multicolumn{4}{c}{CLIP-TinyImageNet}  \\
&  & Accuracy & Precision & Recall & F1 & Accuracy & Precision & Recall & F1 \\
\midrule
\multirow{2}{*}{KL+L0} & \(\ell_\infty^{2/255}\) & 0.88 & 0.94 & 0.81 &0.87 & 0.71 & 0.66 & 0.85 & 0.74 \\
& \(\ell_\infty^{4/255}\) & 0.87 & \textbf{0.94} & 0.78 & 0.85 & 0.69 & 0.65 & 0.82 & 0.73 \\
\midrule
\multirow{2}{*}{L0+Cosine} & \(\ell_\infty^{2/255}\) & 0.73 & 0.91 & 0.52 & 0.66 & 0.70 & 0.66 & 0.85 & 0.74\\
& \(\ell_\infty^{4/255}\) & 0.68 & 0.89 & 0.41 & 0.56 & 0.68 & 0.64 & 0.79 & 0.71 \\
\midrule
\multirow{2}{*}{KL+Cosine}  & \(\ell_\infty^{2/255}\) & 0.78 & 0.92 & 0.62 & 0.74 & 0.70 & 0.66 & 0.82 & 0.73\\
& \(\ell_\infty^{4/255}\) & 0.76 & 0.91 & 0.59 & 0.71 & 0.71 & 0.67 & 0.84 & 0.74 \\
\midrule
\multirow{2}{*}{KL+L0+Cosine} & \(\ell_\infty^{2/255}\) & 0.75 & 0.91 & 0.55 & 0.69  &0.75 &0.68 &\textbf{0.94} &0.79 \\
& \(\ell_\infty^{4/255}\) & 0.69 & 0.89 & 0.44 & 0.59 & 0.74 & 0.68 &\textbf{0.93} &0.78 \\
\midrule
\multirow{2}{*}{\textsc{KoALA}$^{+}$} & \(\ell_\infty^{2/255}\) &\textbf{0.96}  &\textbf{0.96} &\textbf{0.97} &\textbf{0.97}  &\textbf{0.81} &\textbf{0.71}  &0.88  &\textbf{0.87}  \\
& \(\ell_\infty^{4/255}\) &\textbf{0.89}   &0.88 &\textbf{0.94} &\textbf{0.91} &\textbf{0.77}  &\textbf{0.69}  &0.89 &\textbf{0.83}  \\
\bottomrule
\end{tabular}
      \end{sc}
    \end{small}
  \end{center}
  \vskip -0.1in
  \vspace{-5mm}
\end{table*}

\subsection{Exp. 1: Verifying Theoretical Guarantees}\vspace{-1mm}
$\bullet$ \textbf{Experiment Objective:} We validate our central theorem by evaluating \methodname{}'s performance on examples that either satisfy or do not satisfy the conditions of Theorem~\ref{thm:mutual-exclusion}. The primary goal is to show that when the conditions are met, attack detection is guaranteed. We partition the test sets of both CIFAR-10 and Tiny-ImageNet into two groups: (i) \textbf{Theorem-Compliant Samples:} Inputs that satisfy the conditions of Theorem~\ref{thm:mutual-exclusion}, specifically the sufficient inter-class prototype separation and (ii) \textbf{Non-Compliant Samples:} Inputs that do not satisfy these conditions. 
Table~\ref{table3} provides a breakdown of the number of samples (sample size columns) in each group for both datasets, highlighting that the ResNet model on CIFAR-10 exhibits a larger inter-class separation than the CLIP model on Tiny-ImageNet. This is likely due to the massive scale of CLIP's pre-training data, which can lead to a more compact, less-separable embedding space for a smaller, specialized task like Tiny-ImageNet.

$\bullet$ \textbf{Evaluation Metrics:} We evaluate detection performance using standard metrics: Accuracy, Precision, Recall, and F1-score:
\( \mathrm{Acc}=\frac{TP+TN}{N},\ \mathrm{Prec}=\frac{TP}{TP+FP},\ 
\mathrm{Rec}=\frac{TP}{TP+FN},\ \mathrm{F1}=\frac{2\,\mathrm{Prec}\,\mathrm{Rec}}{\mathrm{Prec}+\mathrm{Rec}},\ N=TP+TN+FP+FN \). Definitions of True Positive (TP), True Negative (TN), False Positive (FP), and False Negative (FN) are provided in Appendix~\ref{sec:confuson matrix definition}. 

$\bullet$ \textbf{Results and Analysis:} Table~\ref{table3} summarizes overall performance (additional confusion-matrix counts are provided in Appendix~\ref{sec:Confusion Counts of Experiment 1}). Notably, the recall scores are all 1.0 on the Theorem-compliant subset. This means every adversarial attacked input that satisfies the theorem's conditions is successfully detected, providing strong empirical support for our theoretical guarantee. The Accuracy and precision for theorem-compliant examples are 1.0 as well. This is because the theory assumes that clean, compliant examples are correctly classified by both the KL and $L_0$ heads, leading to prediction agreement and preventing false alarms.  

As our theory predicts, the Theorem-compliant subset achieves a substantially higher Precision and Recall than the non-compliant subset, confirming that when the inter-class prototype separation is sufficiently large, adversarial perturbations are forced to cause a disagreement between the KL and $L_0$ heads, leading to more reliable attack detection.

\vspace{-1mm}

\subsection{Exp. 2: Ablation Study on Metric Combinations}\vspace{-1mm}
 
$\bullet$ \textbf{Experiment Objective:} We run an ablation study to validate our choice of using KL-divergence and $L_0$-based metrics for attack detection. We compare the performance of our proposed KL+$L_0$ combination against other plausible metric pairings: $L_0$+Cosine, KL+Cosine, and $L_0$+KL+Cosine. For each combination, we fine-tune the backbone encoder using a composite loss tailored to the specific metrics, then evaluate the detector's performance. It's important to note that all models were fine-tuned exclusively with clean, non-adversarial images. No adversarial training was performed. Fine-tuning is necessary to align the metrics’ behavior on clean inputs; additional evidence is provided in Appendix~\ref{sec:Necessity of Finetuning}.

$\bullet$ \textbf{Results and Analysis:} The results, summarized in Table ~\ref{table4}, show that the KL+$L_0$ combination consistently yields the best performance on the ResNet/CIFAR-10 setup, achieving the highest scores across all four key metrics: Accuracy, Precision, Recall, and F1-score. This confirms our hypothesis that KL-divergence and $L_0$-based metrics are highly complementary. The KL metric effectively captures dense, distribution-level shifts that often go undetected by other measures, while the $L_0$ metric is uniquely sensitive to sparse, high-impact changes. Their combined use allows the detector to identify a wider range of adversarial attack types.

We further analyze the overall adversarial robustness of fine-tuning with different metric combinations in Appendix~\ref{sec:Overall Adversarial Resilience Across Metric Combinations}. We find our proposed KL+$L_0$ objective yields the strongest resilience to adversarial attacks, consistent with the complementary nature of KL- and $L_0$-based signals.

In the CLIP/Tiny-ImageNet setting, however, $L_0$+KL+ Cosine slightly performs best. This unexpected finding is an interesting artifact of the model's behavior. Table \ref{table6} in Appendix~\ref{sec:Overall Adversarial Resilience Across Metric Combinations} shows very low adversarial accuracy on the model fine-tuned with the $L_0$+KL+Cosine loss. This indicates that the adversarial perturbation pushes the embedding into a region where all three metrics are essentially "randomly guessing" a class. The probability of all three classifiers independently guessing the same incorrect class is extremely low, leading to frequent disagreements and, consequently, a high attack detection rate.

This highlights that a high detection rate does not always equate to a truly robust model. While the $L_0$+KL+Cosine setup appears effective at flagging attacks on CLIP, it does so by breaking the underlying classification, rather than by preserving it. This contrasts with the ResNet results, where our KL+$L_0$ combination shows a more balanced approach to robust classification and detection.

\begin{figure}[t]
\begin{center}
 \centerline{\includegraphics[width=0.5\textwidth]{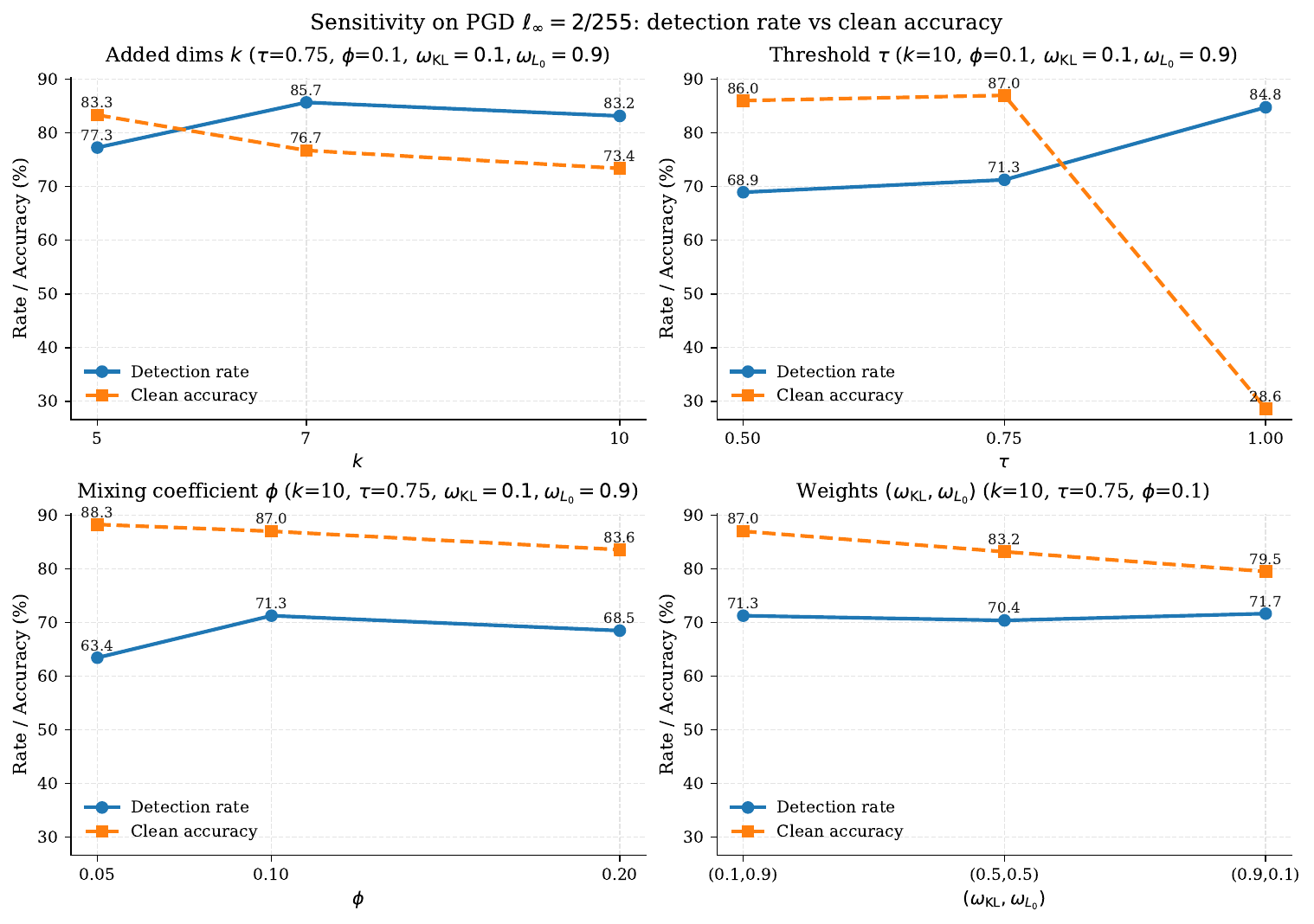}}
 \vspace{-3mm}
\caption{{Parameter Sensitivity analysis under PGD ($\ell_\infty=2/255$) on a ResNet backbone.} We vary the added dimensions $k$, threshold $\tau$, smoothness parameter $\phi$, and weights $(\omega_{\mathrm{KL}},\omega_{L_0})$, reporting adversarial detection rate and clean accuracy. }
\label{fig:detection_parameter_sensitivity}
\end{center}
 \vspace{-9mm}
\end{figure}

\subsection{Exp. 3: Parameter Sensitivity to Adversarial Detection Rate} \vspace{-1mm}
We further evaluate how hyperparameters influence the detection rate under a PGD attack with $\ell_\infty=2/255$ on a ResNet backbone. We conduct then sensitivity study over: the number of added dimensions $k \in \{5,7,10\}$, the threshold $\tau \in \{0.5,0.75,1.0\}$, the smoothness parameter $\phi \in \{0.05,0.1,0.2\}$, and the weight pairs $(\omega_{\mathrm{KL}}, \omega_{L_0}) \in \{(0.1,0.9),(0.5,0.5),(0.9,0.1)\}$.
As shown in Fig.~\ref{fig:detection_parameter_sensitivity}, the results exhibit a clear detection--accuracy trade-off.
Larger $k$ and $\phi$ improves detection but reduces clean accuracy, with the best balance occurring at $k=10$ and $\phi=0.1$. Detection increases with $\tau$, but overly large 
$\tau$ sharply hurts clean accuracy. Overall, we adopt $k=10$, $\tau=0.75$, $\phi=0.1$, with $(\omega_{\mathrm{KL}}, \omega_{L_0})=(0.1,0.9)$ as the default configuration.

\begin{table}[t]
\caption{Adversarial detection rates (\%) and clean accuracy (\%) of a ResNet-18 backbone trained with different embedding objectives.}
\vspace{-4mm}
\label{table:detection_rate_resnet}
  \begin{center}
    \begin{small}
      \begin{sc}
\footnotesize
\setlength{\tabcolsep}{3pt}
\begin{tabular}{l|c|c|c|c}
\toprule
ResNet18 
& \begin{tabular}[c]{@{}c@{}}Clean\\Acc. (\%)\end{tabular}
& \begin{tabular}[c]{@{}c@{}}PGD\\\(\ell_\infty^{2/255}\)\end{tabular}
& \begin{tabular}[c]{@{}c@{}}FGSM\\\(\ell_\infty^{2/255}\)\end{tabular}
& \begin{tabular}[c]{@{}c@{}}CW\\\(\ell_\infty^{2/255}\)\end{tabular} \\
\midrule
KL+\(L_0\)               &85.04  &64.28  &66.23  &64.98  \\
Cosine+\(L_0\)          &81.02  &64.55  &67.05  &63.69 \\
Cosine+KL                &82.94  &52.54  &54.62  &54.66\\
KL+\(L_0\)+Cosine        &79.66  &65.28  &67.46  &65.80 \\
\textsc{KoALA}$^{+}$   &87.00  &\textbf{71.26}  &\textbf{73.54}  &\textbf{68.78} \\
\midrule
Feature Squeezing  &- &\textbf{100.00}  &5.00  &14.00 \\
MagNet   &- &0.00  &55.00  &\textbf{72.00} \\
\bottomrule
\end{tabular}
\end{sc}
\end{small}
\end{center}
 \vspace{-8mm}
\end{table}

\subsection{Exp. 4: Comparison Against SotA}\vspace{-1mm}
$\bullet$ \textbf{Experiment Objective:} We evaluate the adversarial detection performance of \methodname{} and compare it against prior detectors, including Feature Squeezing~\citep{Xu_2018} and MagNet~\citep{meng2017magnettwoprongeddefenseadversarial}, under multiple attack types. 
We report \emph{detection rate} as the fraction of samples correctly flagged as adversarial after attack, conditioned on being correctly recognized as clean (non-adversarial) before attack. 
%Formally, letting \(\mathcal{C}\) denote the set of clean inputs that the detector accepts as benign, the detection rate is
%\(
%\frac{\#\{\bm{p} \in \mathcal{C} ,\hat{\bm{p}}:\ [a=1]\land  [\hat{a}=1]\}}{\#\mathcal{C}=\{\bm{p}:[a=0]\land [\hat{a=0}]\}}.
%\)

$\bullet$ \textbf{Results and Analysis for ResNet Model on CIFAR-10:}
We report clean accuracy and adversarial detection rate under \(\ell_\infty=2/255\) attacks (PGD, FGSM, and CW) in \cref{table:detection_rate_resnet}. 
Among the embedding objectives, \(\mathrm{KL}+L_0\) and Cosine+\(L_0\) provide stronger detection than Cosine+KL, indicating that sparsity-sensitive supervision is critical for capturing attacks. 
Combining all three terms (KL+\(L_0\)+Cosine) improves detection slightly over \(\mathrm{KL}+L_0\) on PGD/FGSM/CW, but at the cost of a notable drop in clean accuracy. 
In contrast, \methodname{}$^{+}$, which augments \(\mathrm{KL}+L_0\) with the proposed auxiliary identity dimensions, achieves the best overall trade-off: it attains the highest clean accuracy among our methods while also delivering the strongest detection across all evaluated attacks.
Compared with external baselines, Feature Squeezing and MagNet exhibit unstable behavior across attacks (e.g., low FGSM/CW detection for Feature Squeezing and low PGD for MagNet), whereas \methodname{}$^{+}$ remains consistently strong across attacks while preserving clean accuracy.

\section*{Impact Statement}
Adversarial attacks pose significant risks to the safety and security of machine learning systems, particularly in sensitive applications such as autonomous vehicles and medical diagnostics. Our work on the \methodname{}'s detection method aims to mitigate these risks by providing a robust, theoretically grounded defense. We believe that by enhancing the security of deep neural networks, our research contributes positively to the ethical deployment of AI technology. This work does not use any sensitive personal data or create new privacy risks. It focuses on improving model robustness against malicious manipulation, thereby helping to ensure that AI systems operate as intended and can be trusted in real-world, safety-critical scenarios. We are committed to transparency and will make our code and models publicly available to facilitate further research and independent verification.
\section{Acknowledgements}
This work was partially supported by the UC Noyce Initiative and UCI ProperAI Institute, an Engineering+Society Institute funded as part of a generous gift by the Samueli Foundation.
% In the unusual situation where you want a paper to appear in the
% references without citing it in the main text, use \nocite
\nocite{langley00}

\bibliography{example_paper}
\bibliographystyle{icml2026}

%%%%%%%%%%%%%%%%%%%%%%%%%%%%%%%%%%%%%%%%%%%%%%%%%%%%%%%%%%%%%%%%%%%%%%%%%%%%%%%
%%%%%%%%%%%%%%%%%%%%%%%%%%%%%%%%%%%%%%%%%%%%%%%%%%%%%%%%%%%%%%%%%%%%%%%%%%%%%%%
% APPENDIX
%%%%%%%%%%%%%%%%%%%%%%%%%%%%%%%%%%%%%%%%%%%%%%%%%%%%%%%%%%%%%%%%%%%%%%%%%%%%%%%
%%%%%%%%%%%%%%%%%%%%%%%%%%%%%%%%%%%%%%%%%%%%%%%%%%%%%%%%%%%%%%%%%%%%%%%%%%%%%%%
\newpage
\appendix
\onecolumn
\section{Confusion Matrix Definition}
\label{sec:confuson matrix definition}
To evaluate \methodname{} detector's performance, we define the confusion matrix where an ``attacked'' input (i.e., $a = 1$) is considered a positive result as follows:
%
% For ease of evaluation of the performance of our detector in the experiments on these two groups, we first define our Confusion matrix (attacked = positive) as follows: 
\begin{align*}
&\text{True Positive. }\quad  \mathrm{TP} := \bigl[a=1\bigr] \land \Bigl[(\widehat{a},\widehat{y})=(1,\bot)\;\lor\;(\widehat{a},\widehat{y})=(0,y^*)\Bigr],\\
&\text{True Negative. }\quad\mathrm{TN} := \bigl[a=0\bigr] \land \Bigl[(\widehat{a},\widehat{y})=(0,y^*)\Bigr],\\
&\text{False Positive. }\quad\mathrm{FP} := \bigl[a=0\bigr] \land \Bigl[(\widehat{a},\widehat{y})=(1,\bot)\;\lor\;(\widehat{a},\widehat{y})=(0,\lnot y^*)\Bigr],\\
&\text{False Negative. }\quad\mathrm{FN} := \bigl[a=1\bigr] \land \Bigl[(\widehat{a},\widehat{y})=(0,\lnot y^*)\Bigr].
\end{align*}

\section{Confusion Counts of Experiment 1}
\label{sec:Confusion Counts of Experiment 1}
In Experiment 1, we validated Theorem~\ref{thm:mutual-exclusion} by testing \methodname{} on inputs that either satisfy or violate the conditions of Theorem~\ref{thm:mutual-exclusion}. We split CIFAR-10 and Tiny-ImageNet into (i) \textbf{Theorem-Compliant} (sufficient inter-class prototype separation) and (ii) \textbf{Non-Compliant} subsets. Here in Table~\ref{table2} we report the number of samples (sample size columns) in each group for both datasets and the raw data of the confusion counts (TP/TN/FP/FN) which were used to calculate the accuracy, precision, recall, and F1 in Table~\ref{table3}.% are calculated.

\begin{table*}[ht]
\caption{%\textbf{\#Thm.Compliant Examples:} The number of test images from 
Experiment 1 Raw Results on (a) CIFAR-10 (ResNet-18) and (b) Tiny-ImageNet (CLIP ViT-B/32) show the number of test images (sample size) that \textbf{satisfy} or do \textbf{not satisfy} the conditions of Thm.~\ref{thm:mutual-exclusion}. The table also shows the \textbf{Confusion metrics}—TP,TN,FP,FN —for both backbones finetuned with our $\mathcal{L}_{\text{total}}$ objective in~\eqref{eq:ltotal} and evaluated under PGD on the two subsets.}
\label{table2}
\begin{center}
    \begin{small}
      \begin{sc}
\footnotesize
\setlength{\tabcolsep}{3pt}
\begin{tabular}{l|c|c c c c c|c c c c c}
\toprule
\multirow{2}{*}{\begin{tabular}[c]{@{}c@{}}Model\\($KL+L_0$)\end{tabular}} 
  & \multirow{2}{*}{\begin{tabular}[c]{@{}c@{}}Attack\\Perturbation\end{tabular}} 
  %& \multirow[t]{2}{*}{\begin{tabular}[c]{@{}c@{}}\# Thm.\\Compliant\end{tabular}}
  & \multicolumn{5}{c|}{\makecell{Thm.~\ref{thm:mutual-exclusion} Compliant Samples\\ }}
  %&\multirow[t]{2}{*}{\begin{tabular}[c]{@{}c@{}}\# Thm.\\Compliant\end{tabular}}
  & \multicolumn{5}{c}{\makecell{Non Compliant Samples }} \\
  & &Sample Size & TP & FN & FP & TN 
  &Sample Size & TP & FN & FP & TN  \\
\midrule
\multirow{2}{*}{ResNet-CIFAR-10}
  & \(\ell_\infty^{2/255}\)&3345  &3345 &0 &0 &3345 &1655 &690 &965 &260 &1395\\
  & \(\ell_\infty^{4/255}\) &2967 &2967 &0 &0 &2967 &2033 &919 &1114 &260 &1773 \\
\midrule
\multirow{2}{*}{CLIP-TinyImageNet}
  & \(\ell_\infty^{2/255}\) &510 &510 &0 &0 &510 &4490 &3762 & 728 &2206 &2284\\
  & \(\ell_\infty^{4/255}\)&556  &556 &0 &0  &556 &4444  &3555 &889 &2206 &2238 \\
\bottomrule
\end{tabular}
\end{sc}
    \end{small}
  \end{center}
  \vskip -0.1in
\end{table*}

\section{Computation Overhead}
\label{sec:Computation Overhead}
Table~\ref{table:computation_overhead_reduced} reports the runtime cost of our detector. For ResNet18 with batch size 128, standard forward prediction takes 0.27\,s, while running \methodname{} adds only a small overhead, increasing the total to 0.32\,s. This indicates that \methodname{} introduces negligible additional computation, making \methodname{} practical for deployment.

\begin{table}[ht]
\caption{Inference-time overhead of \methodname{} (batch size = 128).}
\vspace{-3mm}
\label{table:computation_overhead_reduced}
\begin{center}
\footnotesize
\setlength{\tabcolsep}{6pt}
\begin{tabular}{lcc}
\toprule
Backbone & Standard inference time (s/batch) & Inference + \methodname{} (s/batch) \\
\midrule
ResNet18 & 0.27 & 0.32 \\
CLIP     & 1.03   & 1.09   \\
\bottomrule
\end{tabular}
\end{center}
\vskip -0.1in
\end{table}

\section{Supplementary to Experiment 2: Necessity of Fine-tuning}
\label{sec:Necessity of Finetuning}
Table~\ref{tab:finetuning_necessity} shows that finetuning is crucial for reliable detection. Without finetuning, the detector yields extremely low clean accuracy (5.26\%), indicating frequent false alerts on clean inputs because the KL and $\ell_0$ metrics are not well-aligned with the backbone’s feature distribution. After finetuning, these metrics become better aligned, substantially reducing false alarms and improving clean accuracy to 74.48\%.

\begin{table}[ht]
\caption{Finetuning is necessary to align KL/$\ell_0$ metrics and reduce false alarms on clean images.}
\vspace{-3mm}
\label{tab:finetuning_necessity}
\begin{center}
\footnotesize
\setlength{\tabcolsep}{6pt}
\begin{tabular}{lcc}
\toprule
Backbone & Clean accuracy before finetuning (\%) & Clean accuracy after finetuning (\%) \\
\midrule
ResNet18 & 5.26  & 74.48 \\
CLIP     & 2.40    & 70.88    \\
\bottomrule
\end{tabular}
\end{center}
\vskip -0.1in
\end{table}

\section{Supplementary to Experiment 2: Overall Adversarial Resilience Across Metric Combinations}
\label{sec:Overall Adversarial Resilience Across Metric Combinations}

$\bullet$ \textbf{Experiment Objective:} This experiment moves beyond attack detection metrics to evaluate the overall classification robustness of models fine-tuned with different metric combinations. We report both clean accuracy (performance on benign images) and adversarial accuracy (performance on successfully attacked images that were not detected) to assess how each fine-tuning objective impacts the underlying model's resilience. Again, our fine-tuning procedure is intentionally lightweight, relying solely on clean images. Unlike traditional adversarial defenses, our approach does not require costly adversarial examples or specialized training routines

$\bullet$ \textbf{Results and Analysis for ResNet Model on CIFAR-10:}
We fine-tuned a ResNet-18 backbone using seven different objectives: Cosine similarity, $L_0$, KL, $L_0+$KL, Cosine+KL, Cosine+$L_0$, and Cosine+KL+$L_0$. The results in Table~\ref{table5} show that all models maintain comparable clean accuracy, indicating that the fine-tuning process does not degrade the model's core classification ability.

However, the models yield starkly different adversarial accuracies. Our proposed KL+$L_0$ objective achieves the strongest adversarial performance because KL-divergence and $L_0$-based metrics are fundamentally complementary: KL excels at capturing dense, distribution-level shifts, while $L_0$ is sensitive to sparse, high-impact changes. Optimizing both simultaneously forces the embeddings to be robust against a wider variety of adversarial perturbations, leading to better overall resilience.

In contrast, any objective that includes the Cosine similarity leads to significantly lower adversarial robustness. The Cosine similarity encourages an angular alignment that conflicts with the the per-dimension alignment of KL and $L_0$. The resulting optimization trade-off degrades the model's ability to resist attacks, highlighting why simply adding more metrics is not always beneficial.

Figure~\ref{fig:concept_perturabtion_analysis_add_dim} further compares backbones fine-tuned with cosine, $L_0$, and KL+$L_0$ objectives. Under cosine fine-tuning, the two metrics can disagree: an attack optimized for cosine may still leave neither KL nor $L_0$ aligned with the correct class. Under $L_0$ fine-tuning, the attack redistributes its embedding to dense, low-impact changes that minimize $L_0$ while inflating the KL distance. In contrast, KL+$L_0$ fine-tuning suppresses both signals simultaneously—yielding consistently low KL and $L_0$ distances to the target class prototype—highlighting the complementary constraints and improved adversarial robustness of the combined objective.

\begin{figure}[!h]
\vskip 0.2in
\begin{center}
 \centerline{\includegraphics[width=\textwidth]{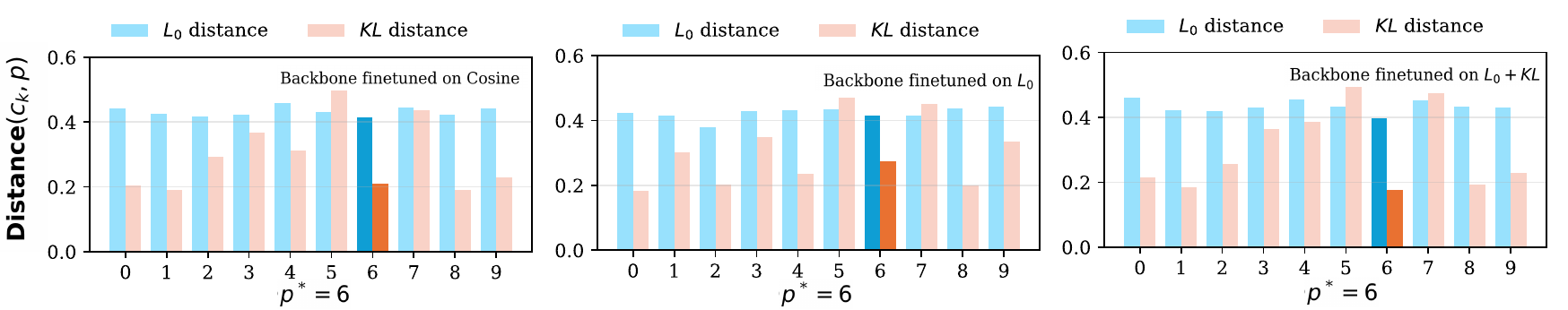}}
\caption{\textbf{Complementary behavior of \(L_0\) and KL distances across attack objectives.}
For each class prototype (x-axis, classes \(0\!-\!9\)), we plot the normalized \(L_0\) distance (blue) and KL distance (pink) between the perturbed image's embedding and the class prototypes, under three attack objectives: cosine-based (top), \(L_0\)-based (middle), and KL\(+L_0\) (bottom). The highlighted bars indicate the ground-truth class prototype.}
\label{fig:concept_perturabtion_analysis_add_dim}
\end{center}
\end{figure}

\begin{table*}[ht]
\caption{Clean and adversarial accuracy for the ResNet-18 backbone fine-tuned with seven different single/composite embedding objectives under a PGD attack. The KL+$L_0$
  objective demonstrates superior adversarial accuracy, highlighting the complementary nature of these two metrics.
%Clean and Adversarial-Accuracy(under PGD-, CW-, and auto-attack) of the ResNet18 backbone fine-tuned with seven different objectives: dot-product, $L_0$, $KL$, $L_0{+}KL$, $dot{+}KL$, $dot{+}L_0$, and $dot{+}KL{+}L_0$. \textit{Baseline Model:} Standard ResNet18 model trained with CIFAR-10 from scratch.
}
\vspace{-3mm}
\label{table5}
 \begin{center}
    \begin{small}
      \begin{sc}
\footnotesize
\setlength{\tabcolsep}{3pt}
\begin{tabular}{ll|c|cc|cc|cc}
\toprule
\multirow{2}{*}{Models} 
  & \multirow{2}{*}{\begin{tabular}[c]{@{}c@{}}Image\\Encoder\end{tabular}} 
  & \multirow{2}{*}{\begin{tabular}[c]{@{}c@{}}Clean Image\\Accuracy (\%)\end{tabular}}
  & \multicolumn{2}{c}{\begin{tabular}[c]{@{}c@{}}PGD attack(\%)\end{tabular}} &\multicolumn{2}{c}{\begin{tabular}[c]{@{}c@{}}CW attack(\%)\end{tabular}} &\multicolumn{2}{c}{\begin{tabular}[c]{@{}c@{}}Auto attack(\%)\end{tabular}} \\
  &  &  & $\ell_\infty^{2/255}$ & $\ell_\infty^{4/255}$ & $\ell_\infty^{2/255}$ & $\ell_\infty^{4/255}$ & $\ell_\infty^{2/255}$ & $\ell_\infty^{4/255}$ \\
\midrule
\rowcolor{gray!15}
Baseline model
  & ResNet18 & \textbf{95.16} & 45.5 & 33.11 & 45.99 & 35.98 & 45.49 & 31.95 \\
\midrule
\multirow{5}{*}{\begin{tabular}[c]{@{}l@{}} \textbf{Note:} All finetuning \\ was done using clean\\ images only\end{tabular}}
 & Cosine Similarity & 94.98 & 45.8 & 37.8 &37.80 & 33.00 & 35.40 & 22.02\\
& KL       & 89.50 & 41.48 & 29.00 & 39.06 & 30.78 & 40.74 & 30.62\\
  & $L_0$      & 94.96 & 49.08 & 32.66 & 47.02 & 35.30  & 42.56 & 35.88 \\
  \cmidrule(lr){2-9}
  & KL+$L_0$       & 94.78 & \textbf{57.32} & \textbf{54.60} & \textbf{57.52} & \textbf{54.08} & \textbf{52.28} &\textbf{51.12}\\
  & Cosine+$L_0$      & 94.76 & 43.98 & 32.22 & 44.78 & 36.18  & 44.94 & 35.92 \\
  & KL+Cosine   & 94.36 & 55.60 & 51.32 & 45.02 & 34.08 & 45.48 & 34.18\\
  & KL+$L_0$+Cosine   & 94.48 & 44.66 & 32.86 & 45.42 & 34.52  &45.84 & 35.52  \\
\bottomrule
\end{tabular}
 \end{sc}
    \end{small}
  \end{center}
  \vskip -0.1in
\end{table*}

\begin{table*}[ht]
\caption{Clean and adversarial accuracy for the CLIP ViT-B/32 backbone fine-tuned with seven different single/composite embedding objectives under a PGD attack. The KL+$L_0$
  objective demonstrates superior adversarial accuracy, highlighting the complementary nature of these two metrics.
%Clean and Adversarial-Accuracy(under PGD-, CW-, and auto-attack) of the CLIP backbone fine-tuned with seven different objectives: dot-product, $L_0$, $KL$, $L_0{+}KL$, $dot{+}KL$, $dot{+}L_0$, and $dot{+}KL{+}L_0$.\textit{Baseline Model:} CLIP V-T-B/32 pretrained model.
}
\label{table6}
 \begin{center}
    \begin{small}
      \begin{sc}
\footnotesize
\setlength{\tabcolsep}{3pt}
\begin{tabular}{ll|c|cc|cc|cc}
\toprule
\multirow{2}{*}{Models} 
  & \multirow{2}{*}{\begin{tabular}[c]{@{}c@{}}Image\\Encoder\end{tabular}} 
  & \multirow{2}{*}{\begin{tabular}[c]{@{}c@{}}Clean Image\\Accuracy (\%)\end{tabular}}
  & \multicolumn{2}{c|}{\begin{tabular}[c]{@{}c@{}}PGD attack\\ \end{tabular}} 
  & \multicolumn{2}{c|}{\begin{tabular}[c]{@{}c@{}}Auto attack\\ \end{tabular}} 
  & \multicolumn{2}{c}{\begin{tabular}[c]{@{}c@{}}CW attack\\ \end{tabular}} \\
  & &
  & \(\ell_\infty^{2/255}\) & \(\ell_\infty^{4/255}\)
  & \(\ell_\infty^{2/255}\) & \(\ell_\infty^{4/255}\)
  & \(\ell_\infty^{2/255}\) & \(\ell_\infty^{4/255}\) \\
\midrule
\rowcolor{gray!15}
\multirow{1}{*}{baseline model}
  & CLIP(ViT-B/32) & 57.88 & 0.38  & 0.28 & 0.01 & 0.01 & 0.0  & 0.0 \\
\midrule
\multirow{5}{*}{\begin{tabular}[c]{@{}l@{}}\textbf{Note:} All finetuning \\ was done using clean\\ images only\end{tabular}}
  & Cosine Similarity       & \textbf{62.44} & 33.74  & 33.72 & 3.22  & 0.07 & 3.06  & 0.05 \\
  & $L_0$       & 54.34 & 53.31  & 43.42 & \textbf{25.43}  & \textbf{18.35} & \textbf{37.49}  & \textbf{13.67} \\
  & KL       & 57.65 & \textbf{60.02}  & \textbf{58.87} & 19.35  & 11.76 &25.69  & 11.16 \\
\cmidrule(lr){2-9}
  & KL+$L_0$ & 55.88 & 26.50  & 25.47 & 16.18  & 9.57 & 11.91  & 5.84 \\
  & Cosine+$L_0$ & 56.46 & 16.28  & 16.09 & 1.03  & 0.02 & 1.15  & 0.01 \\
   & Cosine+KL & 57.62 & 55.01  & 53.87 & 5.25  & 0.44 & 5.02  & 0.39 \\
    & KL+$L_0$+Cosine &56.30 & 14.93  &14.72 & 0.97  & 0.06 & 1.14  & 0.01 \\
\bottomrule
\end{tabular}
\end{sc}
    \end{small}
  \end{center}
  \vskip -0.1in
\end{table*}

$\bullet$ \textbf{Results and Analysis for CLIP Model on Tiny-ImageNet:}
Table~\ref{table6} presents the results for the fine-tuned CLIP model. Unlike the ResNet, the $L_0$-only fine-tuning objective yields the highest adversarial robustness, which can be attributed to the models' different training histories and architectures.

The CLIP model is pre-trained on a massive dataset using a cosine-contrastive objective, which naturally encourages inter-class variation to be concentrated in a few principal directions of the high-dimensional text embedding space. Because of this pre-existing sparsity-aware structure, enforcing further alignment via the $L_0$-based metric is especially effective.
Conversely, the ResNet model is trained from scratch on a smaller dataset (CIFAR-10) using a cross-entropy loss, which encourages class separations that are dispersed over many coordinates. For such a model, a single metric is insufficient. The combined KL+$L_0$ criterion becomes necessary to simultaneously account for both dense and sparse perturbations, thereby realizing the necessary gains in adversarial robustness.

\section{Proof of Theorem~\ref{thm:mutual-exclusion}}
\label{proof_of_thm1}

% \paragraph{Preliminaries}
% We now formalize the above motivation. Let $f_{\theta}:\mathcal{I}\to [0,1]^d$ be the encoder of the input (embedding dimension $d$). 

% For a clean input $\bm{I} \in \mathcal{I} $, set $\bm{p^{*}}=f_{\theta}(\bm{I}) \in [0,1]^d$ as the embedding of $\bm{I}$; If input $\bm{I}$ is adversarially perturbed by $\bm{I}+\bm{\zeta}$, its embedding changes to $\bm{\hat{p}}=f_{\theta}(\bm{I}+\bm{\zeta})$. We can define the perturbation in the feature space as $\bm{\delta} :=\bm{\hat{p}} - \bm{p}^*$, and for an energy limited perturbation, we assume $||\bm{\delta}||^2 \leq \epsilon^2$.

% Let there be $m$ classes with index set $\mathcal{Y}=\{1,\ldots,m\}$ and normalized class centroids
% $\bm{C}=\{\bm{c}^{1},\ldots,\bm{c}^{m}\}\subseteq[0,1]^d$.
% Let $y^{*}\in\mathcal{Y}$ be ground-truth label of clean embedding $\bm{p}^{*}\in[0,1]^d$  and its class centroid is $\bm{c}^{*}:=\bm{c}^{y^{*}}\in [0,1]^d$.
% For an adversarially perturbed embedding $\bm{\hat{p}}$, let $\hat{y}\in\mathcal{Y}$ be its predicted label and $\bm{\hat{c}}:=\bm{c}^{\hat{y}}\in [0,1]^d$ the corresponding class centroid. 

\subsection{Necessary condition for successful attack on $KL$ detector}
\label{Proof of Proposion 2}
\begin{proposition}
%[Coordinate bound for $\Delta KL$ change]
[Necessary condition for successful attack on KL detector]
\label{prop:kl-flip}
Let $\bm{p}^* \in \mathbb{R}^d$ be the input embedding (feature vector) of the clean image and $\bm{\hat{p}} \in \mathbb{R}^d$ be the input embedding of the adversarially attacked image, i.e., $\bm{\hat{p}} = \bm{p}^* + \bm{\delta}$ where $\bm{\delta}$ is the adversarial perturbation of the input embedding. Similarly, let $\bm{c}^* \in \mathbb{R}^d$ be the prototype vector (or class centroid) of the target class and $\bm{\hat{c}}=\bm{\hat{c}}_{\hat{y}_{KL}} \in \mathbb{R}^d$ be the prototype vector (or class centroid) of the predicted class $\hat{y}_{KL} \in \{1,\ldots,m\}$ based on the $KL$ distance. Consider a successful attack on the KL detector (i.e., $\bm{\hat{c}}\neq \bm{c}^*$) and assume the Assumptions A1-A4 are satisfied, then the following inequality holds:
\[
\sum_{i=1}^{d}\bigl( \hat{c}_{i}-c^{*}_{i}\bigr)\,
   \frac{\delta_{i}}{ p^{*}_{i}}
\;>\;\Delta KL(\bm{p}^*),
\]
where:
\begin{align}
    \Delta KL(\bm{p}^*) &=KL(\bm{\hat{c}}||\bm{p}^*)-KL(\bm{c}^*||\bm{p}^*), \quad \text{and} \quad
    \Delta KL(\bm{\hat{p}}) =KL(\bm{\hat{c}}||\bm{\hat{p}})-KL(\bm{c}^*||\bm{\hat{p}}).
\end{align}
% $\Delta KL(p^*)=KL(y'||p^*)-KL(y^*||p^*)\quad$ and $\quad \Delta KL(\hat{p})=KL(y'||p')-KL(y^*||p')$. 
%and $KL(\bm{c} || \bm{p}) = \sum_{i=1}^{d} c_i \log \left( \frac{c_i}{p_i} \right)=\sum_{i=1}^{d} (c_ilog(c_i)-c_ilog(p_i))$.
\end{proposition}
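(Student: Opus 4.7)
My plan is to combine the two hypotheses---successful KL flip and clean-side alignment (A4)---into a gap statement on $\Delta KL$ between the clean and attacked embeddings, and then linearize the resulting logarithmic expression using a first-order expansion of $\log(1+x)$ to recover the claimed inner-product bound. The argument proceeds in three short steps.

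\textbf{Steps 1--2: from hypotheses to a logarithmic inequality.} A successful KL attack is by definition $KL(\hat{\bm{c}}\|\hat{\bm{p}}) < KL(\bm{c}^*\|\hat{\bm{p}})$, i.e.\ $\Delta KL(\hat{\bm{p}}) < 0$, while A4 on the clean side gives $\Delta KL(\bm{p}^*) > 0$, so $\Delta KL(\hat{\bm{p}}) - \Delta KL(\bm{p}^*) < -\Delta KL(\bm{p}^*)$. Using $KL(\bm{c}\|\bm{q}) = \sum_i c_i \log c_i - \sum_i c_i \log q_i$ together with the strict positivity afforded by A1, each perturbed-vs-clean KL difference simplifies to $KL(\bm{c}\|\hat{\bm{p}}) - KL(\bm{c}\|\bm{p}^*) = -\sum_i c_i \log(1+\delta_i/p^*_i)$. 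Differencing the two instances $\bm{c}=\hat{\bm{c}}$ and $\bm{c}=\bm{c}^*$ gives the clean identity
\[
\Delta KL(\hat{\bm{p}}) - \Delta KL(\bm{p}^*) \;=\; -\sum_{i=1}^d (\hat{c}_i - c^*_i)\,\log\!\Bigl(1+\tfrac{\delta_i}{p^*_i}\Bigr),
\]
and combining with the gap inequality above yields $\sum_i (\hat{c}_i - c^*_i)\,\log(1+\delta_i/p^*_i) > \Delta KL(\bm{p}^*)$.

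\textbf{Step 3 and main obstacle: linearization.} The remaining step is to replace $\log(1+\delta_i/p^*_i)$ by the linear surrogate $\delta_i/p^*_i$, which converts the logarithmic inequality into the claim. Assumption A3 (with A1 positivity of $\hat{p}_i$) confines $\delta_i/p^*_i$ to $(-1,\,3/2]$, where $\log(1+\cdot)$ is smooth, so a first-order Taylor expansion is valid. The difficulty is that the single-sided inequality $\log(1+x) \le x$ flips direction when multiplied by $(\hat{c}_i - c^*_i)$, whose sign can vary across coordinates, so it cannot be applied term by term. I plan to resolve this either by (a) treating the statement as a first-order characterization and absorbing a Lagrange-style second-order remainder, bounded via A3, into a slightly tightened version of $\Delta KL(\bm{p}^*)$; or (b) splitting the sum according to $\sign(\hat{c}_i - c^*_i)$ and applying the matching one-sided log inequality to each partial sum. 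This sign-dependent bookkeeping is where I expect the real work of the proof to lie.
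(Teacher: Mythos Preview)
Your Steps 1--2 reproduce the paper's argument exactly. For Step 3 the paper follows your option~(a) in a specific form: it Taylor-expands $\log(p^*_i+\delta_i)$ to second order with a Lagrange cubic remainder, then uses A3 to show that the combined second-order-plus-remainder contribution is nonpositive coordinate by coordinate (the key arithmetic is $|\delta_i|^3/(3|p^*_i|^3)\le \delta_i^2/(2(p^*_i)^2)$ whenever $|\delta_i|\le\tfrac32|p^*_i|$). It then simply drops these higher-order terms and writes the linear bound directly---no sign-splitting, and no tightened constant on the right-hand side.

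The sign obstacle you flag is real, and the paper does not address it: establishing that the higher-order piece is $\le 0$ per coordinate does not control $\sum_i(\hat c_i-c^*_i)\cdot(\text{h.o.t.})_i$ when the weights $\hat c_i-c^*_i$ have mixed signs, so the paper's final inequality step is not actually justified by what precedes it. Neither of your proposed fixes closes this cleanly either---option~(a) proves only a weakened version of the claim, and option~(b) fails because there is no matching lower bound $\log(1+x)\ge x$ to pair with the concavity upper bound on the negative-weight coordinates. In short, your plan matches the paper's route, and the difficulty you isolated is a genuine gap in the argument (the paper's included) rather than a defect peculiar to your proposal.
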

\begin{proof}
%[Proof of Prop.~\ref{prop:kl-flip}]
% First, recall the definitions of $\Delta KL(p^*)$ and $\Delta KL(p')$ as follows:
% %
% \begin{align}
%     \Delta KL(p^*) &=KL(y'||p^*)-KL(y^*||p^*) \\
%     \Delta KL(p') &=KL(y'||p')-KL(y^*||p')
% \end{align}
% % $\Delta KL(p^*)=KL(y'||p^*)-KL(y^*||p^*)\quad$ and $\quad \Delta KL(p')=KL(y'||p')-KL(y^*||p')$. 
% where $KL(y || p) = \sum_{i=1}^{d} y_i \log \left( \frac{y_i}{p_i} \right)=\sum_{i=1}^{d} (y_ilog(y_i)-y_ilog(p_i))$.

First note that, since $\bm{p}^*$ is the input embedding of the clean image and $\bm{c}^*$ is its corresponding target class centroid, then $\bm{c}^*$ is the closest class centroid to $\bm{p}^*$ (Assumption A4) and hence $\Delta KL(\bm{p}^*)>0$. Similarly, it follows from the definition of $\Delta KL(\bm{\hat{p}})$ and the assumption that the attack is successful (i.e., $\bm{\hat{p}}$ is predicted as $\bm{\hat{c}}$ class), that $\Delta KL(\bm{\hat{p}}) \le 0$.

Substituting in the previous equations yield:
\begin{align}
\label{Delta_KL_p*}
    \Delta KL(\bm{p}^*)&=KL(\bm{\hat{c}}||\bm{p}^*)-KL(\bm{c}^*||\bm{p}^*) \nonumber \\
    & =\sum_{i=1}^d(\hat{c}_ilog(\hat{c}_i)-\hat{c}_ilog(p_i^*))-\sum_{i=1}^d(c_i^*log(c_i^*)-c_i^*log(p_i^*)) \nonumber \\
    & = \sum_{i=1}^d(\hat{c}_ilog(\hat{c}_i)-c_i^*log(c_i^*))-\sum_{i=1}^d(\hat{c}_ilog(p_i^*)-c_i^*log(p_i^*)) >0,
\end{align}
and:
\begin{align}
\label{Delta_KL_p'}
    \Delta KL(\bm{\hat{p}})&=KL(\bm{\hat{c}}||\bm{\hat{p}})-KL(\bm{c}^*||\bm{\hat{p}}) \nonumber \\
    & =\sum_{i=1}^d(\hat{c}_ilog(\hat{c}_i)-\hat{c}_ilog(\hat{p}_i))-\sum_{i=1}^d(c_i^*log(c_i^*)-c_i^*log(\hat{p}_i)) \nonumber \\
    & = \sum_{i=1}^d(\hat{c}_ilog(\hat{c}_i)-c_i^*log(c_i^*))-\sum_{i=1}^d(\hat{c}_ilog(\hat{p}_i)-c_i^*log(\hat{p}_i)) <0.
\end{align}

Subtracting \autoref{Delta_KL_p*}-\autoref{Delta_KL_p'}, we get:
\begin{align}
\label{>Delta_KL(p*)}
    \sum_{i=1}^d(\hat{c}_ilog(\hat{p}_i)-c_i^*log(\hat{p}_i))-\sum_{i=1}^d(\hat{c}_ilog(p_i^*)-c_i^*log(p_i^*))&=\sum_{i=1}^d(\hat{c}_i-c_i^*)(log(\hat{p}_i)-log(p_i^*)) \nonumber \\
    & =\Delta KL(\bm{p}^*)-\Delta KL(\bm{\hat{p}}) \nonumber \\
    & >\Delta KL(\bm{p}^*).
\end{align}
%\YS{is it $\Delta KL(\hat{p}^*)$ or $\Delta KL(p^*)$? From your definition of $KL(x)$, the $\hat{x}$ only appears inside the expansion.}
% (S: it's $p^*$, sorry for the confusion, have already corrected it)

Expanding the left hand side of the inequality using Taylor Expansion of $log(\hat{p}')$ yields:
\begin{align}
    log(\bm{\hat{p}})=log(\bm{p}^*+\bm{\delta})&=log(\bm{p}^*)+\bm{\delta}^T\nabla_{\bm{p}^*} log(\bm{p}^*)+\frac{1}{2}\bm{\delta}^T\nabla_{\bm{p}^*}^2 log(\bm{p}^*)\bm{\delta}+\mathcal{R}_3 \nonumber \\
    &= log(\bm{p}^*)+\frac{\bm{\delta}^T}{\bm{p}^*}-\bm{\delta}^Tdiag(\frac{1}{2(\bm{p}^*)^2})\bm{\delta}+\mathcal{R}_3.
\end{align}

Based on Taylor's remainder theorem, the error of truncating after the 2nd order is bounded. The remainder term in the Taylor expansion is: 
\[
\mathcal{R}_3 = \frac{\nabla_{\bm{p}^*}^3 \log(\bm{p}^* + \theta \bm{\delta})}{6} \bm{\delta}^3, \quad \text{for some } \theta \in [0,1].
%\text{ where } \theta = \arg\max_{\theta \in (0,1)} \left| \nabla_{p^*}^3 \log(p^* + \theta \delta) \right|.
\]

Since the third derivative of $log(p^*+\theta\delta)$ is
$\nabla_{\bm{p}^*}^3log(\bm{p}^*+\theta\bm{\delta})=\frac{2}{(\bm{p}^*+\bm{\theta\delta})^3}$, we conclude:
\begin{align}
    |\mathcal{R}_3|&=\frac{|\nabla_{\bm{p}^*}^3log(\bm{p}^*+\bm{\theta\delta})|}{6}|\bm{\delta}|^3 %\nonumber \\
    =\sum_{i}\frac{|\delta_i|^3}{3|p_i^*+\theta\delta_i|^3},
\end{align}
%\YS{How did you obtain the equality in equation (9)? The transition is very abrupt at the moment. I highly suggest explaining each equality/inequality (unless it is simple algebraic manipulation)}
%
which leads to:
\begin{align}
    -\bm{\delta}^Tdiag(\frac{1}{2(\bm{p}^*)^2})\bm{\delta}+\mathcal{R}_3&\leq -\bm{\delta}^Tdiag(\frac{1}{2(\bm{p}^*)^2})\bm{\delta}+|\mathcal{R}_3| \nonumber \\
    &\leq -\bm{\delta}^Tdiag(\frac{1}{2(\bm{p}^*)^2})\bm{\delta}+\sum_{i}\frac{|\delta_i|^3}{3|p_i^*+\theta\delta_i|^3} \nonumber \\
    &=-\sum_{i}\frac{|\delta_i|^2}{2|p^*_i|^2}+\sum_{i}\frac{|\delta_i|^3}{3|p_i^*+\theta\delta_i|^3}. 
\end{align}

Since $\theta$ and $p^*$ lie in the interval $[0, 1]$ (thanks to Assumption A1), and $|\delta_i| < \frac{3}{16}p_i^*<\frac{1}{2}p_i^* $ for all dimensions $i$ (Assumption A3), we observe that the term:
\begin{align}
    |p_i^*+\theta\delta_i|\geq \frac{1}{2}p_i^*
\end{align}

Again, nder the assumption that $|\delta_i| < \frac{3}{16}p_i^* $ for all dimensions $i$, we have:
\[
\frac{16|\delta_i|}{3p_i^*}\leq 1 \Rightarrow \frac{|\delta_i|^3}{3|p_i^* + \theta \delta_i|^3}
\leq \frac{8|\delta_i|^3}{3(p_i^*)^3}=\frac{16|\delta_i|}{3p_i^*}\cdot\frac{|\delta_i|^2}{2|p_i^*|^2}
\leq \frac{|\delta_i|^2}{2|p_i^*|^2}.
\]

Thus,  we can bound the remainder term $\mathcal{R}_3$ in the Taylor expansion as:
\[
-\bm{\delta}^\top \operatorname{diag}\left( \frac{1}{2(\bm{p}^*)^2} \right) \bm{\delta} + \mathcal{R}_3
\leq -\sum_i \frac{|\delta_i|^2}{2|p_i^*|^2} + \sum_i \frac{|\delta_i|^3}{3|p_i^* + \theta \delta_i|^3}
< 0.
\]
%\YS{What do you mean by small $\delta$? Can you quantify how small the $\delta$ is needed for (1) to hold true? This condition on $\delta$ needs to be an assumption placed on our theoretical result.}

By combining above Taylor expansion with \autoref{>Delta_KL(p*)}:
\begin{align}
\label{Delta_KL_y'-y*}
    \Delta KL(\bm{p}^*)&<(\bm{\hat{c}}-\bm{c}^*)(log(\bm{\hat{p}})-log(\bm{\hat{p}}^*)) \nonumber \\
    &=(\bm{\hat{c}}-\bm{c}^*)(\frac{\bm{\delta}^T}{\bm{p}^*}-\bm{\delta}^Tdiag(\frac{1} {2(\bm{p}^*)^2})\bm{\delta}+\mathcal{R}_3) \nonumber \\
    &<(\bm{\hat{c}}-\bm{c}^*)\frac{\bm{\delta}^T}{\bm{p}^*}.
\end{align}

\end{proof}

\subsection{Necessary condition for successful attack on $L_0$ detector}
\label{Proof of Proposion 3}
\begin{proposition}[Necessary condition for successful attack on $L_0$ detector]\label{prop:l0-flip}
Let $\bm{p}^* \in \mathbb{R}^d$ be the input embedding (feature vector) of the clean image and $\bm{\hat{p}} \in \mathbb{R}^d$ be the input embedding of the adversarially attacked image, i.e., $\bm{\hat{p}} = \bm{p}^* + \bm{\delta}$ where $\bm{\delta}$ is the adversarial perturbation of the input embedding. Similarly, let $\bm{c}^* \in \mathbb{R}^d$ be the prototype vector (or class centroid) of the target class and $\bm{\hat{c}}=\bm{\hat{c}}_{\hat{y}_{L_0}} \in \mathbb{R}^d$ be the prototype vector (or class centroid) of the predicted class $\hat{y}_{L_0} \in \{1,\ldots, m\}$ based on the $L_0$ distance. Consider a successful attack on the $L_0$ detector (i.e., $\bm{\hat{c}}\neq \bm{c}^*$) and assume the Assumptions A1-A4 are satisfied, then
%
% Let $\bm{p}^* \in [0,1]^d$ be the image embedding of the clean image and $\bm{\hat{p}} \in [0,1]^d$ be the image embedding of the adversarially attacked image, i.e., $\bm{\hat{p}} = \bm{p}^* + \bm{\delta}$ where $\bm{\delta}$ is the effect of the adversarial attack on the image embedding. Similarly, let $\bm{c}^* \in [0,1]^d$ be the class centroid of target class (the predicted class of $\bm{p}^*$) and $\bm{\hat{c}}=\bm{\hat{c}}^{\hat{y}_{L_0}} \in [0,1]^d$ be the class centroid of predicted class $\hat{y}_{L_0} \in \{1,m\}$ of $\bm{\hat{p}}$ based on $L_0$ distance($m$ is the number of classes in total).
%
% Consider a successful attack on the $L_0$ detector (i.e., $\bm{\hat{c}}\neq \bm{c}^*$),
%
% Let $p^*,p'$ be the image embedding of the clean image and adversarial attacked image with $p' = p^* + \delta$, and $y^*$ be the text embedding of the target class(the predicted class of $p^*$) and $y'=\hat{y}_{L_0}$ be the predicted class of $p'$ based on $L_0$ distance.
%
% If $y'\neq y^*$
% Then the following inequality holds:
%
there exists a nonempty set of indices $\mathbb{S}\subseteq\{1, \ldots, d\}$, where for each $i \in \mathbb{S}$
the following holds:
\begin{align}
\label{delta_min}
|\delta_i| \geq min \left\{\left||\hat{c}_i - p^*_i| \;-\;\tau\,\mu(\bm{\hat{c}},\bm{p}^*)\right|-\frac{\tau ||\bm{\delta}||_1}{d},\left||c^*_i - p^*_i| \;-\;\tau\,\mu(\bm{c}^*,\bm{p}^*)\right|-\frac{\tau ||\bm{\delta|}|_1}{d} \right\},
\end{align}
\noindent
while for all other indices $\delta_j\notin \mathbb{S}$, the following holds:
\[
|\delta_{j}|
\le
\sqrt{\,
      \epsilon^{2}
      -k\left[min \left\{\left||\hat{c}_i - p^*_i| -\tau\,\mu(\bm{\hat{c}},\bm{p}^*)\right|-\frac{\tau ||\bm{\delta}||_1}{d},\left||c^*_i - p^*_i| -\tau\,\mu(\bm{c}^*,\bm{p}^*)\right|-\frac{\tau ||\bm{\delta}||_1}{d}\right\} \right]^{2}}.
\]
where $k$ is the cardinality of the set $\mathbb{S}$, i.e., $k = |\mathbb{S}|$. Moreover, the cardinality $k$ satisfies $k = |\mathbb{S}|\geq \Delta L_0(\bm{p}^*)$ where:
\begin{align*}
    \Delta L_0 (\bm{p}^*)=L_0(\bm{\hat{c}},\bm{p}^*)-L_0(\bm{c}^*,\bm{p}^*), \quad \text{and} \quad \Delta L_0 (\bm{\hat{p}})=L_0(\bm{\hat{c}},\bm{\hat{p}})-L_0(\bm{c}^*,\bm{\hat{p}}).
\end{align*}
%and $\quad L_0(\bm{c},\bm{p})=\#\{i:|c_i-p_i|-\tau\cdot\mu(\bm{c},\bm{p}) >0\}$ and $\mu(\bm{c},\bm{p})\;=\;\frac{1}{d}\sum_{i=1}^d |c_i-p_i|$.
\end{proposition}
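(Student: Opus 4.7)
\textbf{Proof proposal for Proposition~\ref{prop:l0-flip}.}
The plan is to first turn the classification assumptions into a count inequality, then localize that count change to a set of coordinates whose individual perturbation must be large, and finally use the global energy budget to bound the remaining coordinates.

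First, I would combine Assumption A4 with the successful-attack hypothesis: on the clean input $\bm{c}^*$ wins, so $\Delta L_0(\bm{p}^*) > 0$; on the adversarial input $\bm{\hat{c}}$ wins, so $\Delta L_0(\bm{\hat{p}}) \le 0$. Subtracting gives
\[
\bigl(L_0(\bm{\hat{c}},\bm{p}^*) - L_0(\bm{\hat{c}},\bm{\hat{p}})\bigr) + \bigl(L_0(\bm{c}^*,\bm{\hat{p}}) - L_0(\bm{c}^*,\bm{p}^*)\bigr) \;\ge\; \Delta L_0(\bm{p}^*).
\]
The left-hand side counts how many coordinate-wise indicators $\mathbf{1}\{|c_i - p_i| - \tau\mu(\bm{c},\bm{p}) > 0\}$ must flip between the clean and adversarial inputs, aggregated over the two prototypes. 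I define $\mathbb{S}$ to be the set of coordinates that contribute to at least one such flip (with prototype multiplicity if needed), giving $k = |\mathbb{S}| \ge \Delta L_0(\bm{p}^*)$.

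Second, for each $i\in\mathbb{S}$ I would derive the per-coordinate lower bound on $|\delta_i|$. Let $g_i(\bm{c},\bm{p}) = |c_i - p_i| - \tau\mu(\bm{c},\bm{p})$. A flip at $i$ for prototype $\bm{c}\in\{\bm{c}^*,\bm{\hat{c}}\}$ means $g_i(\bm{c},\bm{p}^*)$ and $g_i(\bm{c},\bm{\hat{p}})$ have opposite signs, so
\[
\bigl|g_i(\bm{c},\bm{\hat{p}}) - g_i(\bm{c},\bm{p}^*)\bigr| \;\ge\; \bigl||c_i - p^*_i| - \tau\mu(\bm{c},\bm{p}^*)\bigr|.
\]
On the other hand, the reverse triangle inequality yields $\bigl||c_i-\hat{p}_i|-|c_i-p^*_i|\bigr|\le|\delta_i|$ and $|\mu(\bm{c},\bm{\hat{p}}) - \mu(\bm{c},\bm{p}^*)|\le \tfrac{1}{d}\|\bm{\delta}\|_1$, so $\bigl|g_i(\bm{c},\bm{\hat{p}})-g_i(\bm{c},\bm{p}^*)\bigr|\le |\delta_i| + \tfrac{\tau}{d}\|\bm{\delta}\|_1$. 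Rearranging gives the stated inequality for the specific prototype that caused the flip; because the identity of that prototype is not known a priori, taking the minimum over $\bm{c}\in\{\bm{c}^*,\bm{\hat{c}}\}$ produces \eqref{delta_min}. For $j\notin\mathbb{S}$, Assumption A2 gives $\|\bm{\delta}\|_2^2\le\epsilon^2$; subtracting the contribution $\sum_{i\in\mathbb{S}}\delta_i^2 \ge k M^2$ (where $M$ is the minimum threshold from \eqref{delta_min}) leaves $\delta_j^2 \le \epsilon^2 - k M^2$, which is the advertised square-root bound.

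The main obstacle will be the coupling between coordinates induced by $\mu$: the lower bound on $|\delta_i|$ is itself a function of $\|\bm{\delta}\|_1$, so \eqref{delta_min} is implicit rather than closed-form, and its right-hand side must be shown non-negative for the bound to be informative. This is where the coordinate-gap hypothesis $|c_i^*-\hat{c}_i|>\Gamma_i(\epsilon)$ used in Theorem~\ref{thm:mutual-exclusion} will eventually enter, ensuring $\tau\|\bm{\delta}\|_1/d$ is dominated by the clean margin $\bigl||c_i-p^*_i|-\tau\mu(\bm{c},\bm{p}^*)\bigr|$. A secondary subtlety is bookkeeping for $|\mathbb{S}|$: if a single coordinate flips in both prototype counts, it is counted once as an index but contributes twice to the flip total, so either $\mathbb{S}$ should be treated as a multiset of (coordinate, prototype) pairs or one must pick whichever of the two prototype-level flip totals already meets the $\Delta L_0(\bm{p}^*)$ bound and restrict attention to it.
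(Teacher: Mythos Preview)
Your proposal is correct and follows essentially the same route as the paper: both start from $\Delta L_0(\bm{p}^*)>0\ge\Delta L_0(\bm{\hat{p}})$, localize the resulting count discrepancy to coordinates where an indicator $\mathbf{1}\{|c_i-p_i|>\tau\mu(\bm{c},\bm{p})\}$ flips, and then combine the sign-flip lower bound $|g_i(\bm{c},\bm{\hat{p}})-g_i(\bm{c},\bm{p}^*)|\ge|g_i(\bm{c},\bm{p}^*)|$ with the triangle-inequality upper bound $|g_i(\bm{c},\bm{\hat{p}})-g_i(\bm{c},\bm{p}^*)|\le|\delta_i|+\tfrac{\tau}{d}\|\bm{\delta}\|_1$. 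The only cosmetic difference is that the paper organizes the flip count via four sets $\mathbb{A},\mathbb{B},\mathbb{C},\mathbb{D}$ (indexed by the joint sign pattern of the two prototypes) and then splits into sub-cases, whereas you group by prototype directly; the bookkeeping subtlety you flag about double-counting a coordinate that flips for both prototypes is real and is glossed over in the paper's argument as well.
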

\begin{proof}%[Proof of Prop.~\ref{prop:l0-flip}]
%As defined: $\quad L_0(y,p^*)=\#\{i:|y_i-p^*_i|-\tau\cdot\mu(y,p^*) >0\}$
First note that, since $\bm{p}^*$ is the embedding of the clean input and $\bm{c}^*$ is its corresponding target class centroid, then $c^*$ is the closest class to $\bm{p}^*$ (Assumption A4) and hence $\Delta L_0 (\bm{p}^*)>0$. Similarly, it follows from the definition of $\Delta L_0 (\bm{\hat{p}})$ and the assumption that the attack is successful (i.e., $\bm{\hat{p}}$ is predicted as the class whose centroid is $\bm{\hat{c}}$), that $\Delta L_0 (\bm{\hat{p}})\le 0$.
%
% Since $p^*$ is correctly predicted as $y^*$, we can have $\Delta L_0 (p^*)=L_0(y',p^*)-L_0(y^*,p^*)>0$. Similarly, $\Delta L_0 (p')=L_0(y',p')-L_0(y^*,p')<0$ since $p'$ is predicted as $y'$.
%
For sake of presentation, we define the following sets:
\begin{align}
    \mathbb{A}&=\{i:|\hat{c}_i-p^*_i|-\tau\cdot\mu(\bm{\hat{c}},\bm{p}^*)>0,  |c^*_i-p^*_i|-\tau\cdot\mu(\bm{c}^*,\bm{p}^*)\leq0\} \label{eq:A}\\
    \mathbb{B}&=\{i:|\hat{c}_i-p^*_i|-\tau\cdot\mu(\bm{\hat{c}},\bm{p}^*)\leq 0,  |c^*_i-p^*_i|-\tau\cdot\mu(\bm{c}^*,\bm{p}^*)>0\} \label{eq:B}\\
    \mathbb{C}&=\{i:|\hat{c}_i-p^*_i-\delta_i|-\tau\cdot\mu(\bm{\hat{c}},\bm{\hat{p}})>0,  |c^*_i-p^*_i-\delta_i|-\tau\cdot\mu(\bm{c}^*,\bm{\hat{p}})\leq0\} \label{eq:C}\\
    \mathbb{D}&=\{i:|\hat{c}_i-p^*_i-\delta_i|-\tau\cdot\mu(\bm{\hat{c}},\bm{\hat{p}}) \leq 0,  |c^*_i-p^*_i-\delta_i|-\tau\cdot\mu(\bm{c}^*,\bm{\hat{p}}) > 0\}. \label{eq:D}
\end{align}

Using this notation, we can rewrite $\Delta L_0(\bm{p}^*)$ and $\Delta L_0(\bm{\hat{p}})$ as:
\begin{align}
    \Delta L_0(\bm{p}^*) &= |\mathbb{A}|-|\mathbb{B}| > 0 \label{eq:delta_l0_p_star}\\
    \Delta L_0(\bm{\hat{p}}) &= |\mathbb{C}|-|\mathbb{D}| \leq 0,
\label{eq:delta_l0_p_prime}
\end{align}
 where $|\mathbb{A}|$, $|\mathbb{B}|$, $|\mathbb{C}|$, and $|\mathbb{D}|$ denote the cardinality (i.e., the number of elements) of the corresponding sets.
%\YS{What is |A|? Do you mean the cardinality of the set A?}

Subtracting \autoref{eq:delta_l0_p_star}-\autoref{eq:delta_l0_p_prime} yields:
\begin{align}
\Delta L_0(\bm{p}^*)-\Delta L_0(\bm{\hat{p}}) > \Delta L_0(\bm{p}^*) 
&\Rightarrow \Delta L_0(\bm{p}^*)-\Delta L_0(\bm{\hat{p}}) > 0\\
&\Rightarrow |\mathbb{A}|-|\mathbb{B}|-|\mathbb{C}|+|\mathbb{D}| > 0 \\
&\Rightarrow(|\mathbb{A}|-|\mathbb{C}|)+(|\mathbb{D}|-|\mathbb{B}|) >0.
\end{align}

This implies that at least one of the terms $|\mathbb{A}| - |\mathbb{C}|$ or $|\mathbb{D}| - |\mathbb{B}|$ must be positive, since the sum of two quantities is positive only if at least one of them is positive. We proceed with case analysis.
%\YS{Have you tried doing $|A|+|D| > |B|+|C|$? Will this result into a simplified inequalities?}

\noindent\textbf{$\bullet$ Case 1:} If $|\mathbb{A}|-|\mathbb{C}|>0$, then there must be at least $|\mathbb{A}|-|\mathbb{C}|$ elements $\delta_i \in \mathbb{A} \cap  \neg \mathbb{C}$, which satisfies the following constraints:
\begin{align}
%\begin{cases}
&|\hat{c}_i - p^*_i| \;-\;\tau\,\mu(\bm{\hat{c}},\bm{p}^*) \;>\; 0,\\%[6pt]
&|c^*_i - p^*_i| \;-\;\tau\,\mu(\bm{c}^*,\bm{p}^*) \;\le\; 0,\\%[6pt]
&|\hat{c}_i - p^*_i - \delta_i| \;-\;\tau\,\mu(\bm{\hat{c}},\bm{\hat{p}}) \;\le\; 0
   \quad\text{or} \quad
|c^*_i - p^*_i - \delta_i| \;-\;\tau\,\mu(\bm{c}^*,\bm{\hat{p}}) \;>\; 0, \label{A-C}
%\end{cases}
\end{align}
where the first two constraints follows from the definition of the set $\mathbb{A}$ in~\autoref{eq:A} and the last constraint follows from the negation of the constraints in the set $\mathbb{C}$ in~\autoref{eq:C}. Now, we consider the two situations in \autoref{A-C} separately:

\ding{172} If $|\hat{c}_i - p^*_i - \delta_i| \;-\;\tau\,\mu(\bm{\hat{c}},\bm{\hat{p}}) \;\le\; 0$ in \autoref{A-C}, then we have:
\begin{align}
|\hat{c}_i - p^*_i - \delta_i| \;-\;\tau\,\mu(\bm{\hat{c}},\bm{\hat{p}}) \;\le\; 0 \quad \text{and}\quad|\hat{y}_i - p^*_i| \;-\;\tau\,\mu(\bm{\hat{c}},\bm{p}^*) \;>\; 0,
\end{align}
which in turn implies that:
\begin{equation}
    \left| (|\hat{c}_i-p^*_i|-\tau\cdot\mu(\bm{\hat{c}},\bm{p}^*)) - (|\hat{c}_i-(p^*_i+\delta_i)|-\tau\cdot\mu(\bm{\hat{c}},\bm{\hat{p}}))\right|>||\hat{c}_i-p^*_i|-\tau\cdot\mu(\bm{\hat{c}},\bm{p}^*)|.
\end{equation}

The inequality above can be rewritten (by swapping its two sides) as:
\begin{align}
\left| |\hat{c}_i - p^*_i| - \tau \cdot \mu(\bm{\hat{c}}, \hat{p}^*) \right|
& <
\left| 
\left( |\hat{c}_i - p^*_i| - \tau \cdot \mu(\bm{\hat{c}}, \bm{p}^*) \right)
- \left( |\hat{c}_i - (p^*_i + \delta_i)| - \tau \cdot \mu(\bm{\hat{c}}, \bm{\hat{p}}) \right)
\right| \nonumber \\
%&\Downarrow \text{\scriptsize(Since $|\hat{c}_i - (p^*_i + \delta_i)| \geq |\hat{c}_i - p^*_i| - |\delta_i|$)} \nonumber \\
& \stackrel{(a)}{\leq}
\left| 
(|\hat{c}_i - p^*_i| - \tau \cdot \mu(\bm{\hat{c}}, \bm{p}^*)) 
- \left( (|\hat{c}_i - p^*_i| - |\delta_i|) - \tau \cdot \mu(\bm{\hat{c}}, \bm{\hat{p}}) \right) 
\right| \nonumber \\
%&\Downarrow \text{\scriptsize(reshuffling items)} \nonumber \\
& \stackrel{(b)}{=} \left| |\delta_i| - \tau \cdot (\mu(\bm{\hat{c}}, \bm{p}^*) - \mu(\bm{\hat{c}}, \bm{\hat{p}})) \right| \nonumber \\
%&\Downarrow \text{\scriptsize(since $\mu(\hat{c}, p) = \frac{1}{d} \sum_j |\hat{c}_j - p_j|$)} \nonumber \\
& \stackrel{(c)}{=} \left| |\delta_i| - \tau \cdot \frac{1}{d} \sum_{j=1}^d (|\hat{c}_j - p^*_j| - |\hat{c}_j - \hat{p}_j|) \right| \nonumber \\
%&\Downarrow \text{\scriptsize(by triangle inequality)} \nonumber \\
& \stackrel{(d)}{\leq} |\delta_i| + \tau \cdot \frac{1}{d} \sum_{j=1}^d \left| |\hat{c}_j - p^*_j| - |\hat{c}_j - \hat{p}_j| \right| \nonumber \\
%&\Downarrow 
%\parbox[t]{0.75\textwidth}{\scriptsize
%Since $|\hat{c}_j - p'_j| = |\hat{c}_j - p^*_j - \delta_j| < |\hat{c}_j - p^*_j| + |\delta_j|$ and \\
%$|\hat{c}_j - \hat{p}_j| = |\hat{c}_j - p^*_j - \delta_j| > |\hat{c}_j - p^*_j| - |\delta_j|$, we conclude that \\
%$\left| |\hat{c}_j - p^*_j| - |\hat{c}_j - \hat{p}_j| \right| \leq |\delta_j|$} \nonumber \\
%
& \stackrel{(e)}{\leq} |\delta_i| + \tau \cdot \frac{1}{d} \sum_{j=1}^d|\delta_j| \nonumber \\
&= |\delta_i| + \frac{\tau}{d} \cdot \|\bm{\delta}\|_1, 
\label{delta+tau}
\end{align}
where $(a)$ follows from the fact that $|\hat{c}_i - (p^*_i + \delta_i)| \geq |\hat{c}_i - p^*_i| - |\delta_i|$, $(b)$ follows by reshuffling the terms in the inequality, $(c)$ follows from the definition of $\mu(\hat{c}, p^*)$ and $\mu(\bm{\hat{c}}, \bm{\hat{p}})$, $(d)$ follows from the triangle inequality, and $(e)$ follows from the definition of $\hat{p}_j = p^*_j + \delta_j$  and hence $|\hat{c}_j - \hat{p}_j| = |\hat{c}_j - p^*_j - \delta_j| < |\hat{c}_j - p^*_j| + |\delta_j|$ and $|\hat{c}_j - \hat{p}_j| = |\hat{c}_j - p^*_j - \delta_j| > |\hat{c}_j - p^*_j| - |\delta_j|$, which in turn implies that $\left| |\hat{c}_j - p^*_j| - |\hat{c}_j - \hat{p}_j| \right| \leq |\delta_j|$.

% \begin{align}
% \label{delta+tau}
% \left| |y'_i - p^*_i| - \tau \cdot \mu(y', p^*) \right|
% &<
% \left| 
% \left( |y'_i - p^*_i| - \tau \cdot \mu(y', p^*) \right)
% - \left( |y'_i - (p^*_i + \delta_i)| - \tau \cdot \mu(y', p') \right)
% \right| \nonumber \\
% &\Downarrow \text{\scriptsize(Since $|y'_i - (p^*_i + \delta_i)| \geq |y'_i - p^*_i| - |\delta_i|$)} \nonumber \\
% &\leq
% \left| 
% (|y'_i - p^*_i| - \tau \cdot \mu(y', p^*)) 
% - \left( (|y'_i - p^*_i| - |\delta_i|) - \tau \cdot \mu(y', p') \right) 
% \right| \nonumber \\
% &\Downarrow \text{\scriptsize(reshuffling items)} \nonumber \\
% &= \left| |\delta_i| - \tau \cdot (\mu(y', p^*) - \mu(y', p')) \right| \nonumber \\
% &\Downarrow \text{\scriptsize(since $\mu(y', p) = \frac{1}{d} \sum_j |y'_j - p_j|$)} \nonumber \\
% &= \left| |\delta_i| - \tau \cdot \frac{1}{d} \sum_{j=1}^d (|y'_j - p^*_j| - |y'_j - p'_j|) \right| \nonumber \\
% &\Downarrow \text{\scriptsize(by triangle inequality)} \nonumber \\
% &\leq |\delta_i| + \tau \cdot \frac{1}{d} \sum_{j=1}^d \left| |y'_j - p^*_j| - |y'_j - p'_j| \right| \nonumber \\
% &\Downarrow 
% \parbox[t]{0.75\textwidth}{\scriptsize
% Since $|y'_j - p'_j| = |y'_j - p^*_j - \delta_j| < |y'_j - p^*_j| + |\delta_j|$ and \\
% $|y'_j - p'_j| = |y'_j - p^*_j - \delta_j| > |y'_j - p^*_j| - |\delta_j|$, we conclude that \\
% $\left| |y'_j - p^*_j| - |y'_j - p'_j| \right| \leq |\delta_j|$} \nonumber \\
% &\leq |\delta_i| + \tau \cdot \frac{1}{d} \sum_{j=1}^d|\delta_j| \nonumber \\
% &= |\delta_i| + \frac{\tau}{d} \cdot \|\delta\|_1
% \end{align}

\ding{173}  Similarly, if $|c^*_i - p^*_i - \delta_i| \;-\;\tau\,\mu(\bm{c}^*,\bm{\hat{p}}) \;>\; 0$ in \autoref{A-C}, we get: 
\begin{align}
|c^*_i - p^*_i - \delta_i| \;-\;\tau\,\mu(\bm{c}^*,\bm{\hat{p}}) \;>\; 0 \quad \text{and}\quad |c^*_i - p^*_i| \;-\;\tau\,\mu(\bm{c}^*,\bm{p}^*) \;\le\; 0 
\end{align}
which in turn implies that: 
\begin{align}
\left| 
\left( |c^*_i - p^*_i| - \tau \cdot \mu(\bm{c}^*, \bm{p}^*) \right)
- \left( |c^*_i - (p^*_i + \delta_i)| - \tau \cdot \mu(\bm{c}^*, \bm{\hat{p}}) \right)
\right|>\left| |c^*_i - p^*_i| - \tau \cdot \mu(\bm{c}^*, \bm{p}^*) \right|
\end{align}
Following the same procedure in \autoref{delta+tau}, we conclude that:
\begin{align}
\left| |c^*_i - p^*_i| - \tau \cdot \mu(\bm{c}^*, \bm{p}^*) \right|
&<
\left| 
\left( |c^*_i - p^*_i| - \tau \cdot \mu(\bm{c}^*, \bm{p}^*) \right)
- \left( |c^*_i - (p^*_i + \delta_i)| - \tau \cdot \mu(\bm{c}^*, \bm{\hat{p}}) \right)
\right| \nonumber \\
&\leq |\delta_i| + \frac{\tau}{d} \cdot \|\bm{\delta}\|_1.
\end{align}

By combining situations \ding{172} and \ding{173} together, we  conclude that if $|\mathbb{A}|-|\mathbb{C}|>0 $ then: 
\begin{align}
|\delta_i| + \frac{\tau}{d} \cdot \|\bm{\delta}\|_1 &>\left| |c^*_i - p^*_i| - \tau \cdot \mu(\bm{c}^*, \bm{p}^*) \right| \\
\text{ or } &>\left| |\hat{c}_i - p^*_i| - \tau \cdot \mu(\bm{\hat{c}}, \bm{p}^*) \right|.
\end{align}
Which can be combined toghether in one condition as:
$$|\delta_i|\geq min \left\{||\hat{c}_i - p^*_i| \;-\;\tau\,\mu(\bm{\hat{c}},\bm{p}^*)|-\frac{\tau ||\bm{\delta}||_1}{d},||c^*_i - p^*_i| \;-\;\tau\,\mu(\bm{c}^*,\bm{p}^*)|-\frac{\tau ||\bm{\delta}||_1}{d} \right\}.$$ 
%\YS{Why?}

\noindent\textbf{$\bullet$ Case 2:} If $|\mathbb{D}|-|\mathbb{B}|>0$, then there must be at least $|\mathbb{D}|-|\mathbb{B}|$ elements $\delta_i \in \mathbb{D} \cap  \neg \mathbb{B}$, which satisfies the following constraints:
\begin{align}
&|\hat{c}_i-p^*_i-\delta_i|-\tau\cdot\mu(\bm{\hat{c}},\bm{\hat{p}}) \leq 0,\\
&|c^*_i-p^*_i-\delta_i|-\tau\cdot\mu(\bm{c}^*,\bm{\hat{p}}) > 0\\
&|\hat{c}_i-p^*_i|-\tau\cdot\mu(\bm{\hat{c}},\bm{p}^*)> 0 \quad \text{or}\quad  |c^*_i-p^*_i|-\tau\cdot\mu(\bm{c}^*,\bm{p}^*)\leq 0,
\label{D-B}
\end{align}

where the first two constraints follows from the definition of the set $\mathbb{D}$ in~\autoref{eq:D} and the last constraint follows from the negation of the constraints in the set $\mathbb{B}$ in~\autoref{eq:B}. Now, we consider the two situations in \autoref{D-B} separately:

\ding{172} If $|\hat{c}_i-p^*_i|-\tau\cdot\mu(\bm{\hat{c}},\bm{p}^*)> 0$ in \autoref{D-B}, then we have:
\begin{align}
    |\hat{c}_i-p^*_i|-\tau\cdot\mu(\bm{\hat{c}},\bm{p}^*)> 0 \quad \text{and} \quad |\hat{c}_i-p^*_i-\delta_i|-\tau\cdot\mu(\bm{\hat{c}},\bm{\hat{p}}) \leq 0,
\end{align}

which in turn implies that:
\begin{equation}
    \left|(|\hat{c}_i-p^*_i|-\tau\cdot\mu(\bm{\hat{c}},\bm{p}^*)) - (|\hat{c}_i-(p^*_i+\delta_i)|-\tau\cdot\mu(\bm{\hat{c}},\bm{\hat{p}}))\right|>||\hat{c}_i-p^*_i|-\tau\cdot\mu(\bm{\hat{c}},\bm{p}^*)|.
\end{equation}

Following the same procedure in \autoref{delta+tau}, we conclude that:
\begin{align}
    ||\hat{c}_i-p^*_i|-\tau\cdot\mu(\bm{\hat{c}},\bm{p}^*)|&<\left|(|\hat{c}_i-p^*_i|-\tau\cdot\mu(\bm{\hat{c}},\bm{p}^*)) - (|\hat{c}_i-(p^*_i+\delta_i)|-\tau\cdot\mu(\bm{\hat{c}},\bm{\hat{p}}))\right| \\
    &\leq |\delta_i| + \frac{\tau}{d} \cdot \|\bm{\delta}\|_1.
\end{align}
\ding{173} Similarly, if $|c^*_i-p^*_i-\delta_i|-\tau\cdot\mu(\bm{c}^*,\bm{\hat{p}}) > 0$ in \autoref{D-B}, we get:
\begin{align}
    |c^*_i-p^*_i|-\tau\cdot\mu(\bm{c}^*,\bm{p}^*)\leq 0 \quad \text{and} \quad |c^*_i-p^*_i-\delta_i|-\tau\cdot\mu(\bm{c}^*,\bm{\hat{p}}) > 0,
\end{align}

which in turn implies that:
\begin{align}
    \left|(|c^*_i-p^*_i|-\tau\cdot\mu(\bm{c}^*,\bm{p}^*)) - (|c^*_i-(p^*_i+\delta_i)|-\tau\cdot\mu(\bm{c}^*,\bm{\hat{p}}))\right|>||c^*_i-p^*_i|-\tau\cdot\mu(\bm{c}^*,\bm{p}^*)|.
\end{align}

Following the same procedure in \autoref{delta+tau}, we conclude that:
\begin{align}
    ||c^*_i-p^*_i|-\tau\cdot\mu(\bm{c}^*,\bm{p}^*)|&<\left|(|c^*_i-p^*_i|-\tau\cdot\mu(\bm{c}^*,\bm{p}^*)) - (|c^*_i-(p^*_i+\delta_i)|-\tau\cdot\mu(\bm{c}^*,\bm{\hat{p}}))\right| \nonumber \\
    &\leq |\delta_i| + \frac{\tau}{d} \cdot \|\bm{\delta}\|_1.
\end{align}

By combining the two situations \ding{172} and \ding{173} together, we  conclude that if $|\mathbb{D}|-|\mathbb{B}|>0 $ then: 
\begin{align}
|\delta_i| + \frac{\tau}{d} \cdot \|\bm{\delta}\|_1 &>\left| |c^*_i - p^*_i| - \tau \cdot \mu(\bm{c}^*, \bm{p}^*) \right| \\
\text{ or } &>\left| |\hat{c}_i - p^*_i| - \tau \cdot \mu(\bm{\hat{c}}, \bm{p}^*) \right|.
\end{align}

Combining the two inequalities above, we conclude: 
$$|\delta_i| \geq min \left\{\left||\hat{c}_i - p^*_i| \;-\;\tau\,\mu(\bm{\hat{c}},\bm{p}^*)\right|-\frac{\tau ||\bm{\delta}||_1}{d},\left| |c^*_i - p^*_i| \;-\;\tau\,\mu(\bm{c}^*,\bm{p}^*)\right|-\frac{\tau ||\bm{\delta}||_1}{d} \right\}$$
which in turn implies that:
$$|\delta_i|\geq min \{||\hat{c}_i - p^*_i| \;-\;\tau\,\mu(\bm{\hat{c}},\bm{p}^*)|-\frac{\tau ||\bm{\delta}||_1}{d},||c^*_i - p^*_i| \;-\;\tau\,\mu(\bm{c}^*,\bm{p}^*)|-\frac{\tau ||\bm{\delta}||_1}{d}\}.$$ 

Thus for both \textbf{Case 1} and \textbf{Case 2} we get the same conclusion that: $$|\delta_i|\geq min \left\{||\hat{c}_i - p^*_i| \;-\;\tau\,\mu(\bm{\hat{c}},\bm{p}^*)|-\frac{\tau ||\bm{\delta}||_1}{d},||c^*_i - p^*_i| \;-\;\tau\,\mu(\bm{c}^*,\bm{p}^*)|-\frac{\tau ||\bm{\delta}||_1}{d} \right\}.$$ 

Thus there are in total at least $\Delta L_0(p^*)$ elements $\delta_i$ that fulfills:
\begin{equation}
\label{conclution_L0_dELTA}
    |\delta_i|\geq min \left\{\left||\hat{c}_i - p^*_i| \;-\;\tau\,\mu(\bm{\hat{c}},\bm{p}^*)\right|-\frac{\tau ||\bm{\delta}||_1}{d},\left||c^*_i - p^*_i| \;-\;\tau\,\mu(\bm{c}^*,\bm{p}^*)\right|-\frac{\tau ||\bm{\delta}||_1}{d} \right\},
\end{equation}
since $\Delta L_0 (\bm{p}^*)-\Delta L_0 (\bm{\hat{p}})\geq \Delta L_0 (\bm{p}^*)$.
\end{proof}

\subsection{Necessary condition for successful attack on $KL$ and $L_0$ detectors are mutually exclusive}
\label{Proof of Proposion 4}

\begin{proposition}[Necessary conditions of successful attacks on KL and $L_0$ detectors are mutually exclusive] 
\label{prop:kl-l0}
Let $\bm{p}^* \in \mathbb{R}^d$ be the input embedding (feature vector) of the clean image and $\bm{\hat{p}} \in \mathbb{R}^d$ be the input embedding of the adversarially attacked image, i.e., $\bm{\hat{p}} = \bm{p}^* + \bm{\delta}$ where $\bm{\delta}$ is the adversarial perturbation of the input embedding. 
%Similarly, let $\bm{c}^* \in \mathbb{R}^d$ be the prototype vector (or class centroid) of the target class and $\bm{\hat{c}}_{\hat{y}_{KL}}$ and $\bm{\hat{c}}_{\hat{y}_{L_0}}$ be the prototype vector (or class centroid) of the predicted class $\hat{y}_{KL}$ and $\hat{y}_{L_0}$, respectively. 
Assume that the Assumptions A1-A4 are satisfied, then if a threshold $\tau$ exists such that:
\begin{equation}
\label{constraint}
\frac{\|\bm{v}\|}{\epsilon}\sum_{i \in \mathbb{S}^{\text{unchange}}}(\delta_i^{max})^2+\sum_{i\in \mathbb{S}^{\text{change}}}\texttt{min}_i\,v_i
\;+\;
\epsilon^{\text{remain}}\;\sqrt{\sum_{i\in \mathbb{S}^{\text{remain}}}v_i^2}
<\Delta KL(\bm{p}^*),
\end{equation}
% or 
% $\epsilon^2-\sum_{i \in \mathbb{S}^{\text{change}}} (\texttt{min}_i)^2-\sum_{i \in \mathbb{S}^{\text{unchange}}} (\delta^{max}_i)^2<0$,
%
% Let $\bm{p}^* \in \mathbb{R}^d$ be the image embedding of the clean image and $\bm{\hat{p}} \in [0,1]^d$ be the image embedding of the adversarially attacked image, i.e., $\bm{\hat{p}} = \bm{p}^* + \bm{\delta}$ where $\bm{\delta}$ is the effect of the adversarial attack on the image embedding. Similarly, let $\bm{c}^* \in [0,1]^d$ be the class centroid of target class (the predicted class of $\bm{p}^*$) and $\bm{\hat{c}}=\bm{\hat{c}}^{\hat{y}_{KL}} \in [0,1]^d$ be the class centroid of predicted class $\hat{y}_{KL} \in \{1,...,m\}$ of $\bm{\hat{p}}$ based on $KL$ distance($m$ is the number of classes in total).
% Consider a successful attack on the KL detector (i.e., $\bm{\hat{c}}\neq \bm{c}^*$) and assume that for all coordinates $i \in \{1, \ldots, d\}$, the attack vector $\delta$ satisfies $\bigl|\delta_i\bigr| \;<\; \tfrac{3}{2}\,\bigl|p_i^*\bigr|$, 
%  if we can find a valid of $\tau$ such that:\begin{equation}
% \label{constraint}
% \frac{\|\bm{v}\|}{\epsilon}\sum_{i \in \mathbb{S}^{unchange}}(\delta_i^{max})^2+\sum_{i\in \mathbb{S}^{change}}\texttt{min}_i\,v_i
% \;+\;
% \epsilon^{remain}\;\sqrt{\sum_{i\in \mathbb{S}^{remain}}v_i^2}
% <\Delta KL(\bm{p}^*), \nonumber
% \end{equation}

% or 
% $\epsilon^2-\sum_{i \in \mathbb{S}^{change}} (\texttt{min}_i)^2-\sum_{i \in \mathbb{S}^{unchange}} (\delta^{max}_i)^2<0$,
%
% then $\hat{y}_{KL} \neq \hat{y}_{L_0}$,
then there exists no perturbation $\delta$ that can render $\hat{y}_{KL} = \hat{y}_{L_0}$, where:
\begin{align*}
& \Delta KL(\bm{p}^*) = KL(\bm{\hat{c}}||\bm{p}^*)-KL(\bm{c}^*||\bm{p}^*), \\
& \Delta L_0(\bm{p}^*) =L_0(\bm{\hat{c}}||\bm{p}^*)-L_0(\bm{c}^*||\bm{p}^*), \\
&\texttt{min}_i = \min \left\{
\left| |\hat{c}_i - p^*_i| - \tau \mu(\bm{\hat{c}}, \bm{p^*}) \right| - \frac{\tau \|\bm{\delta}\|_1}{d},
\left| |c^*_i - p^*_i| - \tau \mu(\bm{c}^*, \bm{p}^*) \right| - \frac{\tau \|\bm{\delta}\|_1}{d}.
\right\}, \\
&v_i=\frac{\hat{c}_i-c_i^*}{p_i^*}, \\
&\delta^{max}_i=\epsilon \cdot \frac{v_i}{\|\bm{v}\|}\\
&\mathbb{S}^{\text{unchange}} = \left\{ i \in \{1,\ldots, m\} \,\middle|\, |\delta_i^{\max}| \geq \texttt{min}_i \right\}, \\
&\mathbb{S}^{\text{change}} = \arg\,min_{\mathbb{T} \subseteq \{1,\ldots,m\} \setminus \mathbb{S}^{\text{unchange}}} \sum_{i \in \mathbb{T}} |\delta_i^{\max} - \texttt{min}_i|, \\
&\mathbb{S}^{\text{remain}} = \{1,\dots, m\}\setminus (\mathbb{S}^{\text{unchange}} \cup \mathbb{S}^{\text{change}}), \\
&\epsilon^{\text{remain}}=\sqrt{\epsilon^2-\sum_{i \in \mathbb{S}^{\text{change}}} (\texttt{min}_i)^2-\sum_{i \in \mathbb{S}^{\text{unchange}}} (\delta^{\max}_i)^2}.
\end{align*}

\end{proposition}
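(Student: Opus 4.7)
My plan is a proof by contradiction that combines the two necessary conditions of Propositions~\ref{prop:kl-flip} and~\ref{prop:l0-flip} via a single constrained optimization. Suppose some $\bm{\delta}$ with $\|\bm{\delta}\|\le\epsilon$ simultaneously succeeds against both detectors. Proposition~\ref{prop:kl-flip} then forces the linear functional $\bm{v}^\top\bm{\delta}$, with $v_i=(\hat c_i-c_i^*)/p_i^*$, to exceed $\Delta KL(\bm{p}^*)$, while Proposition~\ref{prop:l0-flip} forces at least $\Delta L_0(\bm{p}^*)$ coordinates $i$ to satisfy the magnitude floor $|\delta_i|\ge\texttt{min}_i$. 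The goal is to upper-bound the constrained maximum of $\bm{v}^\top\bm{\delta}$ subject to both the energy budget and the $L_0$ floors, and show that this upper bound is precisely the left-hand side of~\eqref{constraint}. The assumed inequality then yields $\bm{v}^\top\bm{\delta}<\Delta KL(\bm{p}^*)$, contradicting the KL-flip condition.

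\textbf{The central calculation} is to maximize $\bm{v}^\top\bm{\delta}$ subject to $\|\bm{\delta}\|_2\le\epsilon$ together with the $L_0$ floors. Without the floors, Cauchy--Schwarz gives the unconstrained optimizer $\delta_i^{\max}=\epsilon\,v_i/\|\bm{v}\|$. I would then partition the coordinates into the three disjoint groups introduced in the statement. On $\mathbb{S}^{\text{unchange}}$ the unconstrained optimizer already clears the floor, so these coordinates contribute at their Cauchy--Schwarz values, yielding the first summand $\tfrac{\|\bm{v}\|}{\epsilon}\sum_{i\in\mathbb{S}^{\text{unchange}}}(\delta_i^{\max})^2$. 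On $\mathbb{S}^{\text{change}}$, chosen greedily to minimize the additional energy cost needed to bring the total count up to $\Delta L_0(\bm{p}^*)$, each $\delta_i$ is raised to its floor with sign matching $v_i$, giving the second summand $\sum_{i\in\mathbb{S}^{\text{change}}}\texttt{min}_i\,v_i$. Finally, on $\mathbb{S}^{\text{remain}}$ the leftover energy $\epsilon^{\text{remain}}$ is redistributed optimally by Cauchy--Schwarz, contributing $\epsilon^{\text{remain}}\sqrt{\sum_{i\in\mathbb{S}^{\text{remain}}}v_i^2}$. Summing the three contributions recovers exactly the left-hand side of~\eqref{constraint}.

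\textbf{Greedy optimality and sign choice.} The greedy construction of $\mathbb{S}^{\text{change}}$ is justified by a standard exchange argument: replacing an index in $\mathbb{S}^{\text{change}}$ by one in $\mathbb{S}^{\text{remain}}$ that lies farther from its floor can only enlarge the extra energy required and shrink the resulting KL inner product. Signs are handled by taking each $\delta_i$ with the same sign as $v_i$, which is feasible because the floor constrains only $|\delta_i|$ and is sign-symmetric.

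\textbf{The main obstacle} I anticipate is the self-referential nature of the $\texttt{min}_i$ quantities, which depend on $\|\bm{\delta}\|_1$ and therefore couple the feasible set to the decision variable. To close the argument one should bound $\|\bm{\delta}\|_1$ independently of the maximizer (for instance via $\|\bm{\delta}\|_1\le\sqrt{d}\,\epsilon$ from the $\ell_2$ budget) and treat the resulting $\texttt{min}_i$ as data fixed by the instance, exactly as the statement already does. A secondary bookkeeping concern is the degenerate cases: if $|\mathbb{S}^{\text{unchange}}|$ already meets $\Delta L_0(\bm{p}^*)$ then $\mathbb{S}^{\text{change}}$ is empty and the middle sum vanishes; and if the $\ell_2$ budget is too small to satisfy the floors on $\Delta L_0(\bm{p}^*)$ coordinates at all, then the $L_0$ attack fails outright and the proposition is vacuous. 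Once these edge cases are dispatched, the geometric picture is simple: meeting the $L_0$ floor diverts energy away from the KL-optimal direction, so the two flip conditions cannot hold simultaneously.
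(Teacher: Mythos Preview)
Your proposal is correct and follows essentially the same architecture as the paper's proof: contradiction via the constrained maximization of $\bm{v}^\top\bm{\delta}$ under the $\ell_2$ budget together with the $L_0$ floor constraints from Proposition~\ref{prop:l0-flip}, with the same three-way split of coordinates into $\mathbb{S}^{\text{unchange}}$, $\mathbb{S}^{\text{change}}$, $\mathbb{S}^{\text{remain}}$ and the same identification of the three summands in~\eqref{constraint}.

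The one methodological difference worth noting is how optimality of the constructed maximizer is argued. You compute the upper bound directly by Cauchy--Schwarz on each block and justify the greedy choice of $\mathbb{S}^{\text{change}}$ via an exchange argument. The paper instead takes a two-step route: first solve the norm-only problem to get $\bm{\delta}^{\max}=\epsilon\,\bm{v}/\|\bm{v}\|$, then project onto the full feasible set, and proves (via matching KKT systems) that this project-after-maximize procedure yields the true constrained optimum; the value $\bm{v}^\top\bm{\delta}'$ is then extracted through the identity $\bm{v}^\top\bm{\delta}'=\|\bm{v}\|\epsilon-\tfrac{\|\bm{v}\|}{2\epsilon}\|\bm{\delta}'-\bm{\delta}^{\max}\|_2^2$. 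Your direct decomposition is shorter and more transparent; the paper's KKT comparison is heavier but gives a more formal certificate that the constructed $\bm{\delta}'$ is in fact the global maximizer (your exchange argument is the step that would need the most care to make fully rigorous). Both routes land on the same expression, and your flagged concern about the $\|\bm{\delta}\|_1$-dependence of $\texttt{min}_i$ is a genuine circularity that the paper also leaves unresolved.
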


\begin{proof}%[Proof of Prop.~\ref{prop:kl-l0}]
Our proof focuses on establishing a contradiction by showing that no perturbation $\bm{\delta}$ can simultaneously satisfy both Proposition~\ref{prop:kl-flip} and Proposition~\ref{prop:l0-flip}. 
Specifically, we will assume, for the sake of contradiction, that there exists a perturbation $\bm{\delta}'$that satisfies the constraints required by Proposition~\ref{prop:l0-flip}, and we show that even under these constraints, \textbf{the maximum achievable value of $\bm{v}^T \bm{\delta}'$ cannot exceed $\Delta\mathrm{KL}(\bm{p}^*)$}, contradicting the condition required by Proposition~\ref{prop:kl-flip}.

Finding such a $\bm{\delta'}$ is equivalent to solving the following constrained optimization problem:
% \textbf{Optimization problem 1:} 
%
\begin{equation}
\label{eq:constrained-max}
\begin{aligned}
\bm{\delta}' := \arg \max_{\bm{\delta}' \in \mathbb{R}^d} \quad & \bm{v}^T \bm{\delta}' \\
\text{subject to} \quad & \|\bm{\delta}'\|_2 = \epsilon, \\
& |\delta'_i| \geq \texttt{min}_i \quad \text{for at least } \Delta \mathrm{L_0}(\bm{p}^*)  \text{ coordinates}.
\end{aligned}
\end{equation}
where $v$ is the KL-gradient direction vector, i.e:
$
v_i := \frac{\hat{c}_i - c^*_i}{p^*_i}.
$
In other words, the optimization problem above aims to maximize the satisfaction of the condition imposed by Proposition~\ref{prop:kl-flip} while satisfying the condition imposed by Proposition~\ref{prop:l0-flip}.

\begin{claim}
\label{claim1}
%\label{claim:projection-equivalence} Solving the original constrained maximization problem (Optimization Problem 1) is \textbf{equivalent} to solving the simplified two-step procedure outlined in Optimization Problem 2.
The solution of the optimization problem in~\autoref{eq:constrained-max} can be obtained by  following the following two-steps:

% \textbf{Optimization Problem 2}

% We can solve the constrained maximization problem 1 in two steps:

% \begin{enumerate}
   $\bullet$ \textbf{Step 1: Solve the partially constrained maximization:}  
    % First, solve the following partially constrained optimization problem:
    \begin{equation}
    \label{eq:unconstrained}
        \bm{\delta}^{\max} := \arg\max_{\|\bm{\delta}\|_2 = \epsilon} \bm{v}^T \bm{\delta}.
    \end{equation}
    This yields the perturbation that maximizes the dot product with \( v \) under an $L_2$ norm constraint.

    $\bullet$ \textbf{Step 2: Project $\bm{\delta}^{\max}$ onto the feasible set \( \mathcal{C} \):}  
    % Then, project \( \bm{\delta}^{\max} \) onto the constraint set \( \mathcal{O} \) by solving:
    \begin{equation}
    \label{eq:projection}
        \bm{\delta}' := \arg\min_{\bm{\delta} \in \mathcal{C}} \| \bm{\delta} - \bm{\delta}^{\max} \|_2^2,
    \end{equation}
    where \( \mathcal{C} \) is the feasible set defined as:
    % \begin{align*}
    %     \|\bm{\delta}'\|_2 &= \epsilon, \\
    %     |\bm{\delta}'_i| &\geq \texttt{min}_i \quad \text{for at least } \Delta \mathrm{L_0}(\bm{p}^*) \text{ coordinates}.
    % \end{align*}
    $$
    \mathcal{C} = \left\{ \bm{\delta} \in \mathbb{R}^d \vert \|\bm{\delta}'\|_2 = \epsilon,  |\bm{\delta}'_i| \geq \texttt{min}_i \quad \text{for at least } \Delta \mathrm{L_0}(\bm{p}^*) \text{ coordinates}. \right\}
    $$
% \end{enumerate}
\end{claim}
We will provide a formal proof for Claim~\autoref{claim1} at the end of this section by comparing the KKT conditions for the optimization problem in~\autoref{eq:constrained-max} with those from~\autoref{eq:unconstrained} and~\autoref{eq:projection}.

Based on Claim~\autoref{claim1}, we proceed with the two steps above as follows. First, note that the $\bm{\delta}^{\max}$ is a maximizer for the inner product $\sum_i v_i \delta_i$ and hence the maximum is attained when the two vectors $v$ and $\bm{\delta}$ are aligned (i.e., the cosine of the angle between the two vectors is equal to 1). Second, note that any strictly interior point of $\Vert \bm{\delta^{\max}}\Vert_2\leq \epsilon$ can be radially enlarged to increase $\bm{v}^{\!\top}\bm{\delta^{\max}}$, the maximum of~\autoref{eq:constrained-max} must lie on the sphere $\lVert\bm{\delta^{\max}}\rVert_2=\epsilon$. Hence, we conclude that:
%
% \textbf{Step 1}: The $\bm{\delta}^{max}$ that maximizes the inner product \( \sum_i v_i \delta_i \) is:
\begin{align}
\label{eq:delta_i_max}
\delta_i^{\max} = \frac{\epsilon}{\|\bm{v}\|_2} v_i.    
\end{align}
%\YS{The argmax is over a vector, but the left hand side of the equality is a real number not a vector. Also, why the $C$ does not depend on $i$?}

% It follows from Proposition \ref{prop:kl-flip} that for any perturbation $\delta$ to successfully confuse the KL metric, it must satisfy  $\sum_{i}v_i\delta_i>\Delta KL(\bm{p}^*)$. Substituting~\eqref{eq:delta_i_max}, we conclude that:
% %
% % there $\exists \bm{\delta}$, s.t $\sum_{i}v_i\delta_i>\Delta KL(\bm{p}^*)$, then we can have 
% %
% \begin{equation}
% \label{epsilonv}
%     \Delta KL(\bm{p}^*)<\sum_{i}v_i\delta_i\leq \sum_{i}v_i\delta_i^{\max}=\epsilon\|\bm{v}\|_2.
% \end{equation}

%\YS{
%$\delta^{max} = C^Tv$ with $\Vert \delta^{max} \Vert_2 = \epsilon$ from which we conclude $ \epsilon = \Vert C^Tv \Vert_2 \le \Vert C \Vert_2 \Vert v \Vert_2$
%}

Next, we find $\bm{\delta}'$ by projecting $\bm{\delta}^{\max}$ onto the constraint set \( \mathcal{C} \) by solving:
\[
\bm{\delta}' := \arg\min_{\bm{\delta} \in \mathcal{C}} \| \bm{\delta} - \bm{\delta}^{\max} \|_2^2.
\]

% where \( \mathcal{C} \) is the feasible set defined by:
% \begin{align*}
%         \|\bm{\delta}\|_2 &\leq \epsilon, \\
%         |\delta_i| &\geq \texttt{min}_i \quad \text{for at least } \Delta \mathrm{L_0}(\bm{p}^*) \text{ coordinates}.
% \end{align*}

% To maximize the dot product $\sum_i v_i \delta_i'$, the L2 norm constraint must be tight; that is, $\|\bm{\delta}'\|_2 = \epsilon$.

% Now, define the difference between the unconstrained and constrained solutions as:
% \[
% \Delta \bm{\delta} = \bm{\delta}' - \bm{\delta}^{\max}.
% \]

% This expression $\| \bm{\delta} - \bm{\delta}^{\max} \|_2^2=\| \Delta \bm{\delta} \|_2^2$ shows that we should find a $\bm{\delta}'$ that minimizes its L2 deviation from $\bm{\delta}^{\max}$, while satisfying the proposition constraint.

To do so, we categorize the indices of $\bm{\delta}^{\max}$ into three groups and change them to $\bm{\delta}'$ accordingly by choosing the smallest possible $\Delta\delta_i$ on each dimension in order to minimize $\|\Delta \bm{\delta'} - \bm{\delta}^{\max}\|_2^2$. For sake of notation, we denote by $\texttt{min}_i$ the requirement from Proposition~\ref{prop:l0-flip} as:
$\texttt{min}_i = \min \left\{
\left| |\hat{c}_i - p^*_i| - \tau \mu(\bm{\hat{c}}, \bm{p^*}) \right| - \frac{\tau \|\bm{\delta}\|_1}{d},
\left| |c^*_i - p^*_i| - \tau \mu(\bm{c}^*, \bm{p}^*) \right| - \frac{\tau \|\bm{\delta}\|_1}{d}
\right\}$. 

We proceed as follows:
\begin{itemize}
    \item[\ding{172}] For all indices $i$ where $|\delta^{\max}_i| \geq \texttt{min}_i$, we add them to the set $\mathbb{S}^{\text{unchange}}$, i.e., $\mathbb{S}^{\text{unchange}} = \left\{ i \in \{1,\ldots, m\} \,\middle|\, |\delta_i^{\max}| \geq \texttt{min}_i \right\}$. For this set, the corresponding $\delta_i'$ will be set to $\delta_i' = \delta_i^{\max}$ (i.e., their perturbation is unchanged).
    
    % $i \in \mathbb{S}^{change}$: For $i \in S$ that requires $|\delta_i|\geq \texttt{min}$ while $|\delta^{\max}_i| < \texttt{min}$: set $\delta'_i = \texttt{min}$.
    
    \item[\ding{173}] Recall that Proposition~\ref{prop:l0-flip} requires a minimum of $\Delta L_0(\bm{p}^*)$ indices to have their perturbation higher than $\min_i$. Hence, we select $\Delta L_0(\bm{p}^*) - |\mathbb{S}^{\text{unchange}}|$ elements whose $\delta_i^{\max} < \texttt{min}_i$ and set the corresponding $\delta'_i$ to be $\delta'_i = \texttt{min}_i$. Indeed, we select the indices $i$ whose $\delta_i^{\max}$ are as close as possible to $\texttt{min}_i$, i.e.,
    $$ \mathbb{S}^{\text{change}} = \arg\,min_{\mathbb{T} \subseteq \{1,\ldots,m\} \setminus \mathbb{S}^{\text{unchange}}} \sum_{i \in \mathbb{T}} |\delta_i^{\max} - \texttt{min}_i| $$
    
    % $i \in \mathbb{S}^{change}$: For $i \in S$ that requires $|\delta_i|\geq \texttt{min}$ while $|\delta^{\max}_i| < \texttt{min}$: set $\delta'_i = \texttt{min}$.
    
    %$i \in \mathbb{S}^{unchange}$: For $i \in S$ and $|\delta^{\max}_i| \geq \texttt{min}$: keep $\delta'_i = \delta^{\max}_i$.
    
    \item[\ding{174}] For all remaining indices not in $\mathbb{S} = \mathbb{S}^{\text{unchange}} \cup \mathbb{S}^{\text{change}}$, we add them to the set $\mathbb{S}^{\text{remain}} = \{1,\dots, m\}\setminus \mathbb{S}$, and we set the corresponding $\delta_i'$ according to the next Claim.
    % : set $\bm{\delta}^{\prime}_{\text{remain}} = \left( \bm{\delta}^{\prime}_i \right)_{i \in \mathbb{S}^{remain}}$.
\end{itemize}

\begin{claim}
\label{claim2}
The solution of the optimization problem in~\autoref{eq:constrained-max} $\bm{\delta}'$ is:
\begin{equation}
\label{delta'}
\delta'_i =\begin{cases}
\texttt{min}_i & \text{for } i \in \mathbb{S}^{\text{change}} \\
\delta^{\max}_i & \text{for } i \in \mathbb{S}^{\text{unchange}} \\
\epsilon^{\text{remain}} \cdot \dfrac{v_i}{\sqrt{\sum_{j \in \mathbb{S}^{\text{remain}}} v_j^2}} & \text{for } i \in \mathbb{S}^{\text{remain}}
\end{cases}
\end{equation}
where:
\[
\epsilon^{\text{remain}}= \sqrt{\epsilon^2 - \sum_{i \notin \mathbb{S}^{\text{remain}}} (\delta_i^{\prime})^2}=\sqrt{\epsilon^2-\sum_{i \in \mathbb{S}^{\text{change}}} (\texttt{min}_i)^2-\sum_{i \in \mathbb{S}^{\text{unchange}}} (\delta_i^{\max})^2}.
\]
\end{claim}

% Combining all parts, we define $\bm{\delta}'$ as:
% \begin{equation}
% \label{delta'}
% \delta'_i =\begin{cases}
% \texttt{min} & \text{for } i \in \mathbb{S}^{\text{change}} \\
% \delta^{\max}_i & \text{for } i \in \mathbb{S}^{\text{unchange}} \\
% \epsilon^{remain} \cdot \dfrac{v_i}{\sqrt{\sum_{j \in \mathbb{S}^{\text{remain}}} v_j^2}} & \text{for } i \in \mathbb{S}^{\text{remain}}
% \end{cases}
% \end{equation}

% where:

% - $\mathbb{S}^{\text{change}} = \left\{ i \in S \,\middle|\, |\delta_i^{\max}| < \texttt{min}_i \right\}.$

% - $\mathbb{S}^{\text{unchange}} = \left\{ i \in S \,\middle|\, |\delta_i^{\max}| \geq \texttt{min}_i \right\}.$

% - $\mathbb{S}^{\text{remain}} = \{ i : i \notin S \}.$

% Here, 
% \[
% \texttt{min}_i = \min \left\{
% \left| |\hat{c}_i - p^*_i| - \tau \mu(\hat{c}, p^*) \right| - \frac{\tau \|\bm{\delta}'\|_1}{d},
% \left| |c^*_i - p^*_i| - \tau \mu(\bm{c}^*, \bm{p}^*) \right| - \frac{\tau \|\bm{\delta}'\|_1}{d}
% \right\}
% \]
% \[
% \epsilon^{remain}= \sqrt{\epsilon^2 - \sum_{i \notin \mathbb{S}^{\text{remain}}} (\delta_i^{\prime})^2}=\sqrt{\epsilon^2-\sum_{i \in \mathbb{S}^{change}} (\texttt{min}_i)^2-\sum_{i \in \mathbb{S}^{unchange}} (\delta_i^{max})^2}.
% \]

In this way, we keep as many elements as possible in $\bm{\delta}'$ to make it close to $\bm{\delta}^{max}$ while satisfying proposition \ref{prop:l0-flip}. We will provide the proof of Claim~\ref{claim2} at the end of this section.

To reach the contradiction, we need to show that the perturbation $\delta'$ violates Proposition~\ref{prop:kl-flip}, i.e., we would like to show that:
\begin{align}
\sum_i v_i \delta'_i \le \Delta KL(p^*).
\end{align}
Define $\Delta \bm{\delta}=\bm{\delta}'-\bm{\delta}^{\max}$ and substitute in the left hand side above as follows:
\begin{align}
\label{v_delta}
\sum_{i=1}^{d} v_i \delta'_i 
&= \bm{v}^T \bm{\delta}' = \bm{v}^T(\bm{\delta}^{\max} + \Delta \bm{\delta}) \nonumber \\
&\stackrel{(a)}{=} \frac{\|\bm{v}\|}{\epsilon} (\bm{\delta}^{\max})^T (\bm{\delta}^{\max} + \Delta \bm{\delta})\nonumber \\
&= \frac{\|\bm{v}\|}{\epsilon} \left( \|\bm{\delta}^{\max}\|_2^2 + (\bm{\delta}^{\max})^T \Delta \bm{\delta} \right)\nonumber \\
& \stackrel{(b)}{=} \frac{\|\bm{v}\|}{\epsilon} \left( \epsilon^2 + (\bm{\delta}^{\max})^T \Delta \bm{\delta} \right)\nonumber \\
&\stackrel{(c)}{=} \frac{\|\bm{v}\|}{\epsilon} \left( \epsilon^2 - \frac{1}{2} \|\Delta \bm{\delta}\|_2^2 \right) \nonumber\\
&= \|\bm{v}\| \cdot \epsilon - \frac{\|\bm{v}\|}{2\epsilon} \|\Delta \bm{\delta}\|_2^2.
\end{align}
where $(a)$ follows from~\autoref{eq:delta_i_max} which states that $\bm{\delta}^{\max} = \epsilon \cdot \frac{v}{\|\bm{v}\|}$ and hence we can express $v$ as:
$$
v = \frac{\|\bm{v}\|}{\epsilon} \bm{\delta}^{\max} \quad \text{and thus } \bm{v}^T = \frac{\|\bm{v}\|}{\epsilon} (\bm{\delta}^{\max})^T.
$$
The equalities $(b)$ and $(c)$ follows from the fact that both $\bm{\delta}^{\max}$ and $\bm{\delta}'$ satisfies the constraint $\Vert \bm{\delta}^{\max}\Vert_2^2=\Vert\bm{\delta}'\Vert_2^2 = \epsilon^2$. Hence:
\begin{align*}
\Vert\bm{\delta}'\Vert_2^2 - \Vert \bm{\delta}^{\max}\Vert_2^2  = 0 
&\Rightarrow \|\bm{\delta}^{\max} + \Delta \bm{\delta}\|_2^2-\|\bm{\delta}^{\max}\|_2^2 = 0 \\
&\Rightarrow \|\bm{\delta}^{\max}\|_2^2 + 2 (\bm{\delta}^{\max})^T \Delta \bm{\delta} + \|\Delta \bm{\delta}\|_2^2 -\|\bm{\delta}^{\max}\|_2^2=0 \\
&\Rightarrow 2 (\bm{\delta}^{\max})^T \Delta \bm{\delta} + \|\Delta \bm{\delta}\|_2^2 =0 \\
&\Rightarrow 
(\bm{\delta}^{\max})^T \Delta \bm{\delta} = -\frac{1}{2} \|\Delta \bm{\delta}\|_2^2
\end{align*}

Next, we expand the term $\|\Delta\bm{\delta}\|_2^2$ in~\autoref{v_delta} by substituting the values of $\bm{\delta}'$ as:
\begin{equation}\label{eq:piecewise}
\|\Delta\bm{\delta}\|_2^2
=\sum_{i\in \mathbb{S}^{change}}\bigl(\texttt{min}_i - \delta^{\max}_i\bigr)^2
+\sum_{i\in \mathbb{S}^{remain}}\bigl(b_i - \delta^{\max}_i\bigr)^2.
\end{equation}
where $b_i=\epsilon^{remain} \cdot \dfrac{v_i}{\sqrt{\sum_{j \in \mathbb{S}^{\text{remain}}} v_j^2}}$
\bigskip

\noindent
Expanding each quadratic and using
$\sum_{i=1}^d(\delta^{\max}_i)^2=\epsilon^2$, one obtains:
\begin{align}
\label{Delta delta min}
\|\Delta\bm{\delta}\|_2^2
&=\sum_{i\in \mathbb{S}^{\text{change}}}\bigl(\texttt{min}_i^2 - 2\texttt{min}_i\,\delta^{\max}_i + (\delta^{\max}_i)^2\bigr)
 + \sum_{i\in \mathbb{S}^{\text{remain}}}\bigl(b_i^2 - 2b_i\,\delta^{\max}_i + (\delta^{\max}_i)^2\bigr)
\nonumber\\
&=\sum_{i\in \mathbb{S}^{\text{change}}}\texttt{min}_i^2 + \sum_{i\in \mathbb{S}^{\text{remain}} }b_i^2
\;-\;2\sum_{i=1}^d a_i^\ast\,\delta^{\max}_i\;+\;\sum_{i\in \mathbb{S}^{\text{change}}}(\delta^{\max}_i)^2
\;+\;\sum_{i\in \mathbb{S}^{\text{remain}}}(\delta^{\max}_i)^2
\nonumber\\
&=\underbrace{\sum_{i\in \mathbb{S}^{\text{change}}}\texttt{min}_i^2 + (\epsilon^{\text{remain}})^2}_{=\epsilon^2-\sum_{i \in \mathbb{S}^{\text{unchange}}}(\delta_i^{\max})^2}
\;-\;2\sum_{i=1}^d a_i^\ast\,\delta^{\max}_i\;+\;\underbrace{\sum_{i\in \mathbb{S}^{\text{change}}}(\delta^{\max}_i)^2
\;
+\;\sum_{i\in \mathbb{S}^{\text{remain}}}(\delta^{\max}_i)^2}_{=\epsilon^2-\sum_{i \in \mathbb{S}^{\text{unchange}}}(\delta_i^{\max})^2}
\nonumber\\
&=2\, \left(\epsilon^2-\sum_{i \in \mathbb{S}^{\text{unchange}}}(\delta_i^{\max})^2 \right)
\;-\;2\sum_{i=1}^d a_i^\ast\,\delta^{\max}_i,
\end{align}
where:
\[
a_i^\ast =
\begin{cases}
\texttt{min}_i, & i\in \mathbb{S}^{\text{change}},\\
0, & i \in \mathbb{S}^{\text{unchange}}, \\
b_i, & i\in \mathbb{S}^{\text{remain}}.
\end{cases}
\]
We expand the term $\sum_{i=1}^d a_i^\ast\,\delta^{\max}_i$ in~\autoref{Delta delta min} as follows:
\begin{align}
\sum_{i=1}^d a_i^\ast\,\delta^{\max}_i
&=\sum_{i \in \mathbb{S}^{\text{change}}}\texttt{min}_i\delta_i^{\max}+\sum_{i \in \mathbb{S}^{\text{remain}}}b_i\delta_i^{\max} \nonumber \\
&=\sum_{i \in \mathbb{S}^{\text{change}}}\texttt{min}_i\frac{\epsilon v_i}{\|\bm{v}\|_2}+\sum_{i \in \mathbb{S}^{\text{remain}}}(\epsilon^{\text{remain}}
\;\frac{\epsilon|v_i|^2}{\|\bm{v}\|_2\sqrt{\sum_{j\in \mathbb{S}^{\text{remain}}}v_j^2}})\nonumber \\
&=\frac{\epsilon}{\|\bm{v}\|_2}
\Bigl(\sum_{i\in \mathbb{S}^{\text{change}}}\texttt{min}_i\,v_i
+\epsilon^{\text{remain}}\,\sqrt{\sum_{i\in \mathbb{S}^{\text{remain}}}v_i^2}\Bigr).
\end{align}
Substituting back in \autoref{Delta delta min} we conclude:
\begin{align}
\|\Delta\bm{\delta}\|_2^2
&=
2\,\epsilon^2-2\sum_{i \in \mathbb{S}^{\text{unchange}}}(\delta_i^{\max})^2
\;-\;\frac{2\,\epsilon}{\|\bm{v}\|_2}\,
\Bigl[\,
\sum_{i\in S}\texttt{min}_i\,v_i
\;+\;
\epsilon^{\text{remain}}\;\sqrt{\sum_{i\in \mathbb{S}^{\text{remain}}}v_i^2}
\Bigr] \nonumber \\
& \stackrel{(d)}{>} 2\epsilon^2-\frac{2\epsilon}{\|\bm{v}\|}\Delta KL(\bm{p}^*),
\end{align}
where $(d)$ follows from the assumption on $\tau$ in \autoref{constraint} which requires that:
\[
\frac{\|\bm{v}\|}{\epsilon}\sum_{i \in \mathbb{S}^{\text{unchange}}}(\delta_i^{\max})^2+\sum_{i\in \mathbb{S}^{\text{change}}}\texttt{min}_i\,v_i
\;+\;
\epsilon^{\text{remain}}\;\sqrt{\sum_{i\in \mathbb{S}^{\text{remain}}}v_i^2}
<\Delta KL(\bm{p}^*),
\]

% Thus, by multiplying $\frac{\|\bm{v}\|}{2\epsilon}$ on both sides,we can get:
% \begin{equation}
% \label{Delta delta}
% \frac{\|\bm{v}\|}{2\epsilon}||\Delta \bm{\delta}||_2^2>\|\bm{v}\|\epsilon-\Delta KL(\bm{p}^*).
% \end{equation}

Finally, by combining the equation above with \autoref{v_delta}, we arrive at:

\[\sum_{i=1}^{d} v_i \delta'_i=|v\| \cdot \epsilon - \frac{\|\bm{v}\|}{2\epsilon} \|\Delta \bm{\delta}\|_2^2<\Delta KL(\bm{p}^*),\]

% based on \autoref{delta delta'} that for $\forall \bm{\delta} $ fulfilling the Proposition \ref{prop:l0-flip} with $||\bm{\delta}||\leq \epsilon$:
% \[
% \sum_{i=1}^{d}\bigl( \hat{c}_{i}-y^{*}_{i}\bigr)\,
%    \frac{\delta_{i}}{p^{*}_{i}}< \sum_{i=1}^{d}\bigl( \hat{c}_{i}-y^{*}_{i}\bigr)\,
%    \frac{\delta'_{i}}{p^{*}_{i}}=\bm{v}^T\bm{\delta}'=\sum_{i=1}^{d} v_i \delta'_i<\Delta KL(\bm{p}^*),
% \]

which contradicts Proposition \ref{prop:kl-flip}.
% that require:
% \[
% \sum_{i=1}^{d}\bigl( \hat{c}_{i}-y^{*}_{i}\bigr)\,
%    \frac{\delta_{i}}{p^{*}_{i}}
% \;>\;\Delta KL(\bm{p}^*).
% \]
% }

% -----------------------------------------------------------
% Proof that Opt 1 and Opt 2 are equivalent
% -----------------------------------------------------------
To finalize our proof, we need to show that Claim~\ref{claim1} and Claim~\ref{claim2} holds.

$\bullet$ \textbf{Proof of Claim~\ref{claim1}:}
We proceed by showing the equivalence between the KKT conditions for the two optimization problems as follows.
% \textbf{Proof:}

\paragraph{KKT Conditions for the optimization problem in~\autoref{eq:constrained-max}.}
By introducing the Lagrangian multiplier $\lambda\ge 0$ for
$\lVert\bm{\delta}\rVert_2=\epsilon$
and $\mu_i\ge 0$ for the
$|\delta_i|\ge min_i$ bounds (active set $\mathbb{S}$ with $|\mathbb{S}|=\Delta L_0(p^*)$), we can write the Lagrangian as:
$$
  \;
  \mathcal{L}_{P1}(\bm{\delta},\lambda,\mu)
  \;=\;
  -v^{\!\top}\bm{\delta}
  +\lambda\bigl(\lVert\bm{\delta}\rVert_2^{2}-\epsilon^{2}\bigr)
  +\sum_{i\in S}\mu_i\bigl(\texttt{min}_i-|\delta_i|\bigr).
$$
%
% \paragraph{KKT conditions:}
The corresponding KKT conditions are:
\begin{align}
&   \frac{\partial \mathcal{L}}{\partial \delta_j} =-v_j
  +2\lambda\,\delta_j
  -\mathbf{1}_{[j\in S]}\,\mu_j\,\operatorname{sgn}(\delta_j)
  \;=\;0,
  \tag{KKT-1} \\
% \end{equation}
% %
% \[
&  \lambda\bigl(\lVert\bm{\delta}\rVert_2^{2}-\epsilon^{2}\bigr)=0,
  \tag{KKT-2} \\
% \]
% %
% \[
& \mu_i\bigl(|\delta_i|-\texttt{min}_i\bigr)=0 \quad \text{for}
  \quad i\in S.
  \tag{KKT-3}
% \]
\end{align}
% \medskip
for each coordinate $j=1,\dots,d$, where $sgn(\cdot)$ is sign function.
Note that $\lVert\bm{\delta}\rVert_2=\epsilon$ and hence $(||\bm{\delta}||_2^2-\epsilon^2)=0$, implying that $\lambda \in \mathbb{R}$.
%
% -----------------------------------------------------------
% Solving stationarity for free vs. active coords
% -----------------------------------------------------------
For the free coordinates ($j\notin \mathbb{S}$), we obtain:
$$
  \delta_j^{\star} \;=\; \frac{v_j}{2\lambda}.
$$
While for $i\in \mathbb{S}$ there are two cases:
inactive bound ($\mu_i=0$) which gives the same expression as above and
active bound ($|\delta_i^{\star}|=\texttt{min}_i$, $\mu_i>0$) which yields:
\[
  \delta_i^{\star}
  \;=\;
  \frac{v_i}{2\lambda}
  +\frac{\mu_i}{2\lambda}\,
   \operatorname{sgn}\!\bigl(\delta_i^{\star}\bigr).
\]

% -----------------------------------------------------------
% Projection (Problem-2) and its Lagrangian
% -----------------------------------------------------------
\paragraph{KKT Conditions for the for the optimization problems in ~\autoref{eq:unconstrained} and~\autoref{eq:projection}.} \hfill \break
For the optimization problem in ~\autoref{eq:unconstrained}, our analysis above (\autoref{eq:delta_i_max}) shows that:
%
% \textbf{Step 1}:finding $\bm{\delta}^{\max} := \arg\max_{\|\bm{\delta}\|_2 =\epsilon} \bm{v}^T \bm{\delta}$
%
% First, observe that:
% \[
%  \bm{\delta}^T v = \|\bm{\delta}\| \cdot \|\bm{v}\| \cdot \cos\langle\bm{\delta}, v\rangle.
% \]
% Since \( \|\bm{\delta}\|_2 \leq \epsilon \), the maximum value of the dot product occurs when $\bm{\delta}||v\Rightarrow\cos\langle\bm{\delta}, v\rangle=1$, i.e., when \( \bm{\delta} \) is in the same direction as \( v \).
%
% This leads to the optimal choice:
% \[
% \bm{\delta}^{\max} = C \cdot v,
% \]
% for some constant \( C > 0 \) such that \( ||\bm{\delta}^{\max}||_2 = \epsilon \).
%
% To determine the value of \( C \), we compute the norm:
% \[
% \|\bm{\delta}^{\max}\|_2^2 = \|C v\|^2 = C^2 \|\bm{v}\|_2^2 = \epsilon^2.
% \]
% Solving for \( C \), we get:
% \[
% C = \frac{\epsilon}{\|\bm{v}\|_2}.
% \]
% Therefore, the $\bm{\delta}^{max}$ that maximizes the \( \bm{v}^T \bm{\delta}\) is:
\[
\bm{\delta}^{\max} = \frac{\epsilon}{\|\bm{v}\|_2}v.
\]
%
% Projection solves
% \[
%   \min_{\bm{\delta}\in\mathcal{C}}
%   \frac12\,
%   \lVert\bm{\delta}-\bm{\delta}^{\max}\rVert_2^{2},
%   \qquad
%   \tag{P2}
% \]
%
% Subject to the constraints:
% \begin{align*}
%         \|\bm{\delta}\|_2 &= \epsilon, \\
%         |\delta_i| &\geq \texttt{min}_i \quad \text{for at least } \Delta \mathrm{L_0}(\bm{p}^*) \text{ coordinates}.
% \end{align*}
Hence, we focus on obtaining the KKT conditions for the optimization problem in~\autoref{eq:projection}. We start by constructing its Lagrangian (with the multipliers $\tilde{\lambda} \in \mathbb{R}$ and $\tilde{\mu}_i\geq 0$) as:
\[
  \;
  \mathcal{L}_{P2}(\bm{\delta},\tilde{\lambda},\mu)
  \;=\;
  \tfrac12\lVert\bm{\delta}-\bm{\delta}^{\max}\rVert_2^{2}
  +\tilde{\lambda}\bigl(\lVert\bm{\delta}\rVert_2^{2}-\epsilon^{2}\bigr)
  +\sum_{i\in S}\tilde{\mu}_i\bigl(\texttt{min}_i-|\delta_i|\bigr).
\]

% \paragraph{KKT conditions for P2:}
The resulting KKT conditions are then:
\begin{align}
&   \frac{\partial \mathcal{L}}{\partial \delta_j} =(\bm{\delta}-\bm{\delta}^{\max})_j
  +2\tilde{\lambda}\,\delta_j
  -\mathbf{1}_{[j\in S]}\,\tilde{\mu}_j\,\operatorname{sgn}(\delta_j)
  \;=\;0,
  \tag{KKT$'$-1} \\
%\end{align}
%
%\[
&  \tilde{\lambda}\bigl(\lVert\bm{\delta}\rVert_2^{2}-\epsilon^{2}\bigr)=0,
  \tag{KKT$'$-2} \\
%\]
%
%\[
& \tilde{\mu}_i\bigl(|\delta_i|-\texttt{min}_i\bigr)=0 \quad \text{for}
  \quad i\in \mathbb{S}.
  \tag{KKT$'$-3}
%\]
\end{align}
for each coordinate $j=1,\dots,d$.
%
% -----------------------------------------------------------
% Linking the two KKT systems via c = \|\bm{v}\|/epsilon
% -----------------------------------------------------------
Let $\frac{(1+2 \tilde{\lambda})\|\bm{v}\|}{\epsilon}=2\lambda$ and $\tilde{\mu}=\frac{\epsilon}{\|\bm{v}\|}\mu$ (if $\mu\geq0$ then $\tilde{\mu}\geq 0$),
then the KKT$'$-1 can be written as:
\begin{align}
 \frac{\partial \mathcal{L}}{\partial \delta_j} &=\underbrace{(\bm{\delta}-\bm{\delta}^{\max})_j
  +2\tilde{\lambda}\,\delta_j
  -\mathbf{1}_{[j\in S]}\,\tilde{\mu}_j\,\operatorname{sgn}(\delta_j)}_{KKT'-1} \nonumber \\
  &=(1+2\tilde{\lambda})\delta_j-\delta^{max}_j-  \mathbf{1}_{[j\in S]}\,\tilde{\mu}_j\,\operatorname{sgn}(\delta_j) \nonumber \\
  &=(1+2\tilde{\lambda})\delta_j- \frac{\epsilon}{\|\bm{v}\|_2}v_j-  \mathbf{1}_{[j\in S]}\,\mu_j\,\operatorname{sgn}(\delta_j) \nonumber \\
  &=\frac{\epsilon}{\|\bm{v}\|}\left((1+2\tilde{\lambda})\frac{\|\bm{v}\|}{\epsilon}\delta_j-v_j\right)-  \mathbf{1}_{[j\in S]}\,\tilde{\mu}_j\,\operatorname{sgn}(\delta_j) \nonumber \\
  &\stackrel{(a)}{=}\frac{\epsilon}{\|\bm{v}\|}(2\lambda\delta_j-v_j)-  \mathbf{1}_{[j\in S]}\,\tilde{\mu}_j\,\operatorname{sgn}(\delta_j) \nonumber \\
  &\stackrel{(b)}{=}\frac{\epsilon}{\|\bm{v}\|}\underbrace{(2\lambda\delta_j-v_j- \mathbf{1}_{[j\in S]}\,\mu_j\,\operatorname{sgn}(\delta_j))}_{KKT-1} =0.
\end{align}
where $(a)$ follows from the equality $\frac{(1+2 \tilde{\lambda})\|\bm{v}\|}{\epsilon}=2\lambda$ and $(b)$ follows from the equality $ \tilde{\mu}=\frac{\epsilon}{\|\bm{v}\|}\mu$.
Similarly, KKT-2$'$ is:
\begin{align}
    \tilde{\lambda}\bigl(\lVert\bm{\delta}\rVert_2^{2}-\epsilon^{2}\bigr)=\underbrace{(\frac{\epsilon\lambda}{\|\bm{v}\|}-\frac{1}{2})}_{\in \mathbb{R}}\bigl(\lVert\bm{\delta}\rVert_2^{2}-\epsilon^{2}\bigr)=0.
\end{align}
And, KKT-3$'$ is:
\begin{align}
\tilde{\mu}_i\bigl(|\delta_i|-m_i\bigr)=\frac{\epsilon}{\|\bm{v}\|}\underbrace{\mu\bigl(|\delta_i|-\texttt{min}_i\bigr)}_{KKT-3}=0 \quad \text{for}
  \quad i\in S.
\end{align}
From which we conclude that the optimal $\bm{\delta}'$ derived by solving KKT$'$-1-3 are \emph{identical} to that derived from KKT-1-3, which in turn implies that the optimization problems in~\autoref{eq:unconstrained} and~\autoref{eq:projection} are equivalent.

\qed

$\bullet$ \textbf{Proof of Claim C.2:}

% To do so, we categorize the indices of $\bm{\delta}^{\max}$ into three groups and change them to $\bm{\delta}'$ accordingly by choosing the smallest possible $\Delta\delta_i$ on each dimension in order to minimize $\|\Delta \bm{\delta'} - \bm{\delta}^{\max}\|_2^2$.

% \begin{itemize}
%     \item[\ding{172}] $i \in \mathbb{S}^{change}$: For $i \in S$ that requires $|\delta_i|\geq \texttt{min}$ while $|\delta^{\max}_i| < \texttt{min}$: set $\delta'_i = \texttt{min}$.
%     \item[\ding{173}]  $i \in \mathbb{S}^{unchange}$: For $i \in S$ and $|\delta^{\max}_i| \geq \texttt{min}$: keep $\delta'_i = \delta^{\max}_i$.
%     \item[\ding{174}] $i \in \mathbb{S}^{remain}$: For remaining indices not in $S$: set $\bm{\delta}^{\prime}_{\text{remain}} = \left( \bm{\delta}^{\prime}_i \right)_{i \in \mathbb{S}^{remain}}$.
% \end{itemize}

% \textit{Now let's consider how to define the remaining part of the perturbation, $\bm{\delta}'_{remain}$, to minimize $\|\Delta \bm{\delta}\|_2^2$}:

% Let $\bm{\delta}'_{remain} = \bm{\delta}'[i \in \mathbb{S}^{remain}]$ and $\bm{\delta}^{\max}_{remain} = \bm{\delta}^{\max}[i \in \mathbb{S}^{\text{remain}}]$.  
To obtain the value of $\bm{\delta}'_{\text{remain}}$,  we aim to minimize the $\ell_2$-distance between $\bm{\delta}'_{\text{remain}}$ and the corresponding part of $\bm{\delta}^{\max}$, that is:
\begin{align*}
&\min_{\bm{\delta}'_{\text{remain}}} \sum_{i \in \mathbb{S}^{\text{remain}}} (\bm{\delta}'_i - \bm{\delta}_i^{\max})^2, \\
&\text{subject to:} \quad
\sum_{i \in \mathbb{S}^{\text{remain}}} (\delta'_i)^2 =\epsilon^2 - \sum_{i \in \mathbb{S}^{\text{change}} \cup \mathbb{S}^{\text{unchange}}} (\delta_i^{\prime})^2
\end{align*}
% subject to the norm constraint:
% \[
% \sum_{i \in \mathbb{S}^{\text{remain}}} (\delta'_i)^2 = \epsilon^2 - \sum_{i \notin \mathbb{S}^{\text{remain}}} (\delta_i^{\prime})^2 =\epsilon^2 - \sum_{i \in \mathbb{S}^{\text{change}} \cup \mathbb{S}^{\text{unchange}}} (\delta_i^{\prime})^2:= (\epsilon^{remain})^2.
% \]
% We also have that $\delta_i^{\max} = \frac{\epsilon}{\|\bm{v}\|} v_i$.
%
We solve this optimization problem using Lagrangian multiplier. We define the Lagrangian as:
\[
\mathcal{L}(\bm{\delta}'_{\text{remain}}, \lambda) = (\bm{\delta}'_{\text{remain}} - \bm{\delta}^{\max}_{\text{remain}})^T(\bm{\delta}'_{\text{remain}} - \bm{\delta}^{\max}_{\text{remain}}) + \lambda \left( (\bm{\delta}'_{\text{remain}})^T \bm{\delta}'_{\text{remain}} - (\epsilon^{\text{remain}})^2 \right).
\]
We calculate the gradient with respect to $\bm{\delta}'_{\text{remain}} $ and set it to zero as:
\[
\nabla_{\bm{\delta}'_{\text{remain}}} \mathcal{L} = 2(\bm{\delta}'_{\text{remain}}  - \bm{\delta}^{\max}_{\text{remain}}) + 2\lambda \bm{\delta}'_{\text{remain}}  = 0.
\]

Divide by 2 and rearrange:
\[
(1 + \lambda)\bm{\delta}'_{\text{remain}}  = \bm{\delta}^{\max}_{\text{remain}}
\quad\Rightarrow\quad 
\bm{\delta}'_{\text{remain}} = \frac{1}{1 + \lambda} \bm{\delta}^{\max}_{\text{remain}}.
\]

Due to the constraint $\|\bm{\delta}'_{\text{remain}}\|_2 = \epsilon^{\text{remain}}$, with $\epsilon^{\text{remain}} = \sqrt{\epsilon^2 - \sum_{i \notin \mathbb{S}^{\text{remain}}} (\delta_i^{\prime})^2}$, we get:
\[
\left\| \frac{1}{1 + \lambda} \bm{\delta}^{\max}_{\text{remain}} \right\|_2 = \epsilon^{\text{remain}}
\quad\Rightarrow\quad 
\frac{1}{|1 + \lambda|} \| \bm{\delta}^{\max}_{\text{remain}} \|_2 = \epsilon^{\text{remain}}.
\]

Since $1 + \lambda > 0$:
\[
1 + \lambda = \frac{\| \bm{\delta}^{\max}_{\text{remain}} \|_2}{\epsilon^{\text{remain}}} 
\quad\Rightarrow\quad 
\bm{\delta}'_{\text{remain}} = \frac{\epsilon^{\text{remain}}}{\| \bm{\delta}^{\max}_{\text{remain}} \|_2} \bm{\delta}^{\max}_{\text{remain}}.
\]

Now using the expression $\bm{\delta}_i^{\max} = \frac{\epsilon}{\|\bm{v}\|} v_i$, we compute:
\[
\|\bm{\delta}^{\max}_{\text{remain}}\|_2 = \frac{\epsilon}{\|\bm{v}\|} \sqrt{\sum_{i \in \mathbb{S}^{\text{remain}}} |v_i|^2}
\]

Thus,
\[
\delta'_i = \frac{\epsilon^{\text{remain}}}{ \frac{\epsilon}{\|\bm{v}\|} \sqrt{\sum_{i \in \mathbb{S}^{\text{remain}}} |v_i|^2}} \cdot \frac{\epsilon}{\|\bm{v}\|} v_i=\frac{\epsilon^{\text{remain}}v_i}{\sqrt{\sum_{j \in \mathbb{S}^{\text{\text{remain}}}} v_j^2}} \quad \text{if } \quad i \in \mathbb{S}^{\text{\text{remain}}}. 
\]

\end{proof}

\subsection{Proof of Theorem~\ref{thm:mutual-exclusion}}

\begin{proof}[Proof of \autoref{thm:mutual-exclusion}]
\label{proof of thm1}

Since Theorem~\ref{thm:mutual-exclusion} asks for a minimum of one dimension for which the gap between the two prototype vectors $|\hat{c}_i-\hat{c}_i^*|$ is large enough, i.e., $|\hat{c}_i-\hat{c}_i^*| > \Gamma(\epsilon)$ for some threshold $\Gamma(\epsilon)$, we consider the extreme condition when such condition is satisfied for only one dimension. In such scenario, $\Delta L_0(p^*)=1$ $|\mathbb{S}|=1$, and $|\mathbb{S}^{\text{change}}|=1$,$|\mathbb{S}^{\text{unchange}}|=0$, $|\mathbb{S}^{\text{remain}}|=d-1$ and $\epsilon^{\text{remain}}=\sqrt{\epsilon^2-\texttt{min}_i^2}$. The remainder of this proof follows two steps. First, we rewrite the condition on $\tau$ \autoref{constraint} into an explicit form. Note that the parameter $\tau$ controls the separation between the classes in the $L_0$ sense that is needed to distinguish between the classes. Second, we derive a lower bound on the threshold $\Gamma(\epsilon)$ that guarantees the existence of the parameter $\tau$.

Since $|\mathbb{S}^{\text{unchange}}|=0$, we can rewrite the condition on $\tau$ \autoref{constraint} into:
\begin{align}
    \sum_{j\in \mathbb{S}^{\text{change}}}\texttt{min}_j\,v_j &+
\epsilon^{\text{remain}}\;\sqrt{\sum_{i\in \mathbb{S}^{\text{remain}}}v_i^2}
<\Delta KL(\bm{p}^*) \nonumber \\
& \stackrel{(a)}{\Rightarrow} \texttt{min}_j\,v_j+\sqrt{\epsilon^2-\texttt{min}_j^2}\sqrt{\sum_{i\neq j}v_i^2}<\Delta KL(\bm{p}^*) \nonumber \\
& \stackrel{(b)}{\Rightarrow} \sqrt{\epsilon^2-\texttt{min}_j^2}\sqrt{\sum_{i\neq j}v_i^2}<\Delta KL(\bm{p}^*)-\texttt{min}_j\,v_j \nonumber \\
& \stackrel{(c)}{\Rightarrow} (\epsilon^2-\texttt{min}_j^2)(\sum_{i\neq j}v_i^2)<(\Delta KL(\bm{p}^*)-\texttt{min}_j\,v_j)^2 \nonumber \\
& \Rightarrow (\epsilon^2-\texttt{min}_j^2)(\sum_{i\neq j}v_i^2) < (\Delta KL(\bm{p}^*))^2-2\Delta KL(\bm{p}^*)\texttt{min}_jv_j+(\texttt{min}_j\,v_j)^2 \nonumber \\
&\stackrel{(d)}{\Rightarrow}
   (\texttt{min}_j)^2(v_j^2+ \sum_{i\neq j}v_i^2)-2\Delta KL(\bm{p}^*)\texttt{min}_jv_j+(\Delta KL(\bm{p}^*)^2-\epsilon^2\sum_{i\neq j}v_i^2)>0, \label{min>0}
\end{align}
where $(a)$ follows from $|\mathbb{S}^{\text{unchange}}|=0$ and  $\epsilon^{\text{remain}}=\sqrt{\epsilon^2-\texttt{min}_j^2}$, $(b)$ follows from rearranging the terms, $(c)$ follows from squaring the two sides of the inequality, and $(d)$ follows from rearranging the terms.

We solve the quadratic equation below for the dummy variable $\texttt{a}$:
\begin{equation}
    \label{equation_a_min_i}
    (\texttt{a})^2(v_j^2+ \sum_{i\neq j}v_i^2)-2\Delta KL(\bm{p}^*)\texttt{a}v_j+(\Delta KL(\bm{p}^*)^2-\epsilon^2\sum_{i\neq j}v_i^2)=0
\end{equation}
which yields:
% \begin{align} 
%     \texttt{a}&=\frac{2\Delta KL(\bm{p}^*)v_j\pm \sqrt{4\Delta KL(\bm{p}^*)^2(v_j)^2-4(v_j^2+ \sum_{i\neq j}v_i^2)(\Delta KL(\bm{p}^*)^2-\epsilon^2\sum_{i\neq j}v_i^2)}}{2(v_j^2+ \sum_{i\neq j}v_i^2)} \nonumber \\
% \end{align}
% divide by 2 on both denominator and nominator
% we can get 
\begin{align}
    \texttt{a}&=\frac{\Delta KL(\bm{p}^*)v_j \pm \sqrt{\Delta KL(\bm{p}^*)^2(v_j)^2-(v_j^2+ \sum_{i\neq j}v_i^2)(\Delta KL(\bm{p}^*)^2-\epsilon^2\sum_{i\neq j}v_i^2)}}{v_j^2+ \sum_{i\neq j}v_i^2} \nonumber \\
\end{align}
Next, we expend the items inside the square root as:
\begin{align*}
    \texttt{a}
    &=\frac{\Delta KL(\bm{p}^*)v_j}{v_j^2+ \sum_{i\neq j}v_i^2} \\&\pm \frac{\sqrt{\Delta KL(\bm{p}^*)^2(v_j)^2-v_j^2\Delta KL(\bm{p}^*)^2-\sum_{i\neq j}v_i^2\Delta KL(\bm{p}^*)^2+v_j^2\epsilon^2\sum_{i\neq j}v_i^2+\epsilon^2(\sum_{i\neq j}v_i^2)^2}}{v_j^2+ \sum_{i\neq j}v_i^2} \nonumber \\
    &=\frac{\Delta KL(\bm{p}^*)v_j \pm \sqrt{-\sum_{i\neq j}v_i^2\Delta KL(\bm{p}^*)^2+v_j^2\epsilon^2\sum_{i\neq j}v_i^2+\epsilon^2(\sum_{i\neq j}v_i^2)^2}}{\sum_{i=1}^d v_i^2}, \nonumber
\end{align*}
where the last equality follows from the fact that $\Delta KL(\bm{p}^*)^2(v_j)^2-v_j^2\Delta KL(\bm{p}^*)^2=0$. We extract $\sqrt{\sum_{j\neq i}v_i^2}$ from square root as:
\begin{align*}
\texttt{a}
    &=\frac{\Delta KL(\bm{p}^*)v_j \pm \sqrt{\sum_{i\neq j}v_i^2}\sqrt{-\Delta KL(\bm{p}^*)^2+v_j^2\epsilon^2+\epsilon^2\sum_{i\neq j}v_i^2}}{\sum_{i=1}^d v_i^2} \nonumber \\
    &=\frac{\Delta KL(\bm{p}^*)v_j \pm \sqrt{\sum_{i\neq j}v_i^2}\sqrt{-\Delta KL(\bm{p}^*)^2+\epsilon^2(v_j^2+\sum_{i\neq j}v_i^2})}{{\sum_{i=1}^d v_i^2}} \nonumber \\
    &=\frac{\Delta KL(\bm{p}^*)v_j \pm \sqrt{\sum_{j\neq i}v_i^2}\sqrt{-\Delta KL(\bm{p}^*)^2+\epsilon^2(\sum_{i=1}^dv_i^2})}{{\sum_{i=1}^d v_i^2}} \nonumber \\
    &=\frac{\Delta KL(\bm{p}^*)v_j \pm\sqrt{\sum_{j\neq i}v_i^2}\sqrt{-\Delta KL(\bm{p}^*)^2+\epsilon^2\|\bm{v}\|_2^2}}{\|\bm{v}\|_2^2}.
\end{align*}

As shown in the proof of Proposition~\ref{prop:kl-l0}, $\Delta KL(\bm{p}^*)< \epsilon\|\bm{v}\|$, we have $-\Delta KL(\bm{p}^*)^2+\epsilon^2\|\bm{v}\|_2^2>0$, thus
$$\text{a}_1=\frac{\Delta KL(\bm{p}^*)v_j - \sqrt{\sum_{j\neq i}v_i^2} \sqrt{-\Delta KL(\bm{p}^*)^2+\epsilon^2\|\bm{v}\|_2^2}}{\|\bm{v}\|_2^2}$$ 
or 
$$\text{a}_2=\frac{\Delta KL(\bm{p}^*)v_j + \sqrt{\sum_{j\neq i}v_i^2}\sqrt{-\Delta KL(\bm{p}^*)^2+\epsilon^2\|\bm{v}\|_2^2}}{\|\bm{v}\|_2^2}.$$

% Also due to $(v_j^2+ \sum_{i\neq j}v_i^2)>0$, then if $ \texttt{min}_j < \texttt{a}_1$ or $\texttt{min}_j>\texttt{a}_2$ we can have \autoref{min>0} satisfied, which is: $(\texttt{min}_j)^2(v_j^2+ \sum_{i\neq j}v_i^2)-2\Delta KL(\bm{p}^*)\texttt{min}_jv_j+(\Delta KL(\bm{p}^*)^2-\epsilon^2\sum_{i\neq j}v_i^2)>0$

However, 
\begin{align*}
    \text{a}_1&=\frac{\Delta KL(\bm{p}^*)v_j - \sqrt{\sum_{j\neq i}v_i^2}\sqrt{-\Delta KL(\bm{p}^*)^2+\epsilon^2\|\bm{v}\|_2^2}}{\|\bm{v}\|_2^2}<\frac{\Delta KL(\bm{p}^*)v_j}{\|\bm{v}\|_2^2}\\&<\frac{\epsilon\|\bm{v}\|_2v_j}{\|\bm{v}\|_2^2}=\frac{\epsilon v_j}{\|\bm{v}\|_2}=\delta^{\max}_j
\end{align*}

However, it follows from the definition of $\mathbb{S}^{\text{unchange}}$
in Proposition~\ref{prop:kl-l0} that the $\texttt{min}_j$ must fulfill that $\texttt{min}_j>|\delta^{\max}_j|$. Hence, we conclude that $a_1$ is not a valid solution. Since $a_2$ is the only viable solution for the ~\autoref{equation_a_min_i}, we conclude that the solution of the corresponding inequality (\autoref{min>0}) must satisfy: 
%
% and hence we mus have:
%such that $j \in \mathbb{S}^{change}$, thus,
% to satisfy \autoref{constrain'}, we must have
\begin{equation}
\label{eq:min_2}
    \texttt{min}_j>\frac{\Delta KL(\bm{p}^*)v_j + \sqrt{\sum_{j\neq i}v_i^2}\sqrt{-\Delta KL(\bm{p}^*)^2+\epsilon^2\|\bm{v}\|_2^2}}{\|\bm{v}\|_2^2}.
\end{equation}

Nevertheless,it follows from Proposition \ref{prop:kl-l0} that for each $j \in \mathbb{S}^{change} \subseteq \mathbb{S} $ this dimension fulfills:
\begin{align}
\label{eq:min_1}
|\delta_j^{\max}|
& \;=\;\frac{\epsilon\,|v_j|}{\|\bm{v}\|} \nonumber \\
&\;\leq\;
\min\Bigl\{ \left|
  |\hat{c}_j - p^*_j| \;-\;\tau\,\mu(\bm{\hat{c}},\bm{p}^*) \right|\;-\;\frac{\tau\,\|\bm{\delta}\|_1}{d}
  \;,\;
  \left||c^*_j - p^*_j| \;-\;\tau\,\mu(\bm{c}^*,\bm{p}^*) \right|\;-\;\frac{\tau\,\|\bm{\delta}\|_1}{d}
\Bigr\} \nonumber \\
&=\texttt{min}_j
\end{align}

Combining \autoref{eq:min_1} and \autoref{eq:min_2} then we can have
\begin{align}
    \texttt{min}_j&=\min\Bigl\{ \left|
  |\hat{c}_j - p^*_j| \;-\;\tau\,\mu(\bm{\hat{c}},\bm{p}^*) \right|\;-\;\frac{\tau\,\|\bm{\delta}\|_1}{d}
  \;,\;
  \left||c^*_j - p^*_j| \;-\;\tau\,\mu(\bm{c}^*,\bm{p}^*) \right|\;-\;\frac{\tau\,\|\bm{\delta}\|_1}{d}
\Bigr\} \nonumber \\
&>\max\Bigl\{\frac{\epsilon\,|v_j|}{\|\bm{v}\|_2}, \frac{\Delta KL(\bm{p}^*)v_j + \|\bm{v}\|_2 \sqrt{-\Delta KL(\bm{p}^*)^2+\epsilon^2\|\bm{v}\|_2^2}}{\|\bm{v}\|_2^2} \Bigr\}.
\end{align}

Solving the inequality above for $\tau$, we conclude that we can rewrite the condition on $\tau$ from \autoref{constraint} as:
\begin{align}
% \label{taumax}
\tau
&\leq\;
\;\max\Biggl\{
  \frac{|\hat{c}_j - p^*_j| \;-\;max\Bigl\{\frac{\epsilon\,|v_j|}{\|\bm{v}\|_2}, \frac{\Delta KL(\bm{p}^*)v_j + \|\bm{v}\|_2 \sqrt{-\Delta KL(\bm{p}^*)^2+\epsilon^2\|\bm{v}\|_2^2}}{\|\bm{v}\|_2^2} \Bigr\}}
       {\mu(\bm{\hat{c}},\bm{p}^*) + \frac{\|\bm{\delta}\|_1}{d}}
  \;,\nonumber \\ &\;
  \frac{|c^*_j - p^*_j| \;-\;max\Bigl\{\frac{\epsilon\,|v_j|}{\|\bm{v}\|_2}, \frac{\Delta KL(\bm{p}^*)v_j + \|\bm{v}\|_2 \sqrt{-\Delta KL(\bm{p}^*)^2+\epsilon^2\|\bm{v}\|_2^2}}{\|\bm{v}\|_2^2}\Bigr\}}
       {\mu(\bm{c}^*,\bm{p}^*)+ \frac{\|\bm{\delta}\|_1}{d}}
\Biggr\}.
\label{taumax}
\end{align}
Note that while~\autoref{constraint} was an implicit constraint on $\tau$, the constraint above is an explicit constraint on $\tau$. 
Next, we derive the condition on the gap $|\hat{c}_j-c_i^j|$ that ensures that~\autoref{taumax} has a non-empty set of solutions. 

It follows from Proposition \ref{prop:l0-flip} and the fact that we are considering the case where only one dimension satisfies the class gap $|\hat{c}_i-c_i^*|$ that such dimension $j$ must belong to the set $\mathbb{A}$ where:
%
% Also, from since index $j \in \mathbb{S}$,based on Proposition \ref{prop:l0-flip}, we have $j \in A \cap  \neg C $ or $D \cap \neg B$ and  since $\Delta L_0(p^*)>0$, there must $\exists$ such an index $j$ that $j \in \mathbb{A}$.
%
% 
$\mathbb{A}=\{i:|\hat{c}_i-p^*_i|-\tau\cdot\mu(\bm{\hat{c}},\bm{p}^*)>0,  |c^*_i-p^*_i|-\tau\cdot\mu(\bm{c}^*,\bm{p}^*)\leq0\}$,
%
% $\mathbb{B}=\{i:|\hat{c}_i-p^*_i|-\tau\cdot\mu(\bm{\hat{c}},\bm{p}^*)\leq 0,  |c^*_i-p^*_i|-\tau\cdot\mu(\bm{c}^*,\bm{p}^*)>0\}$
%
% $\mathbb{C}=\{i:|\hat{c}_i-p^*_i-\delta_i|-\tau\cdot\mu(\bm{\hat{c}},\bm{\hat{p}})>0,  |c^*_i-p^*_i-\delta_i|-\tau\cdot\mu(\bm{c}^*,\bm{\hat{p}})\leq0\}$
%
% $\mathbb{D}=\{i:|\hat{c}_i-p^*_i-\delta_i|-\tau\cdot\mu(\bm{\hat{c}},\bm{\hat{p}}) \leq 0,  |c^*_i-p^*_i-\delta_i|-\tau\cdot\mu(\bm{c}^*,\bm{\hat{p}}) > 0\}$
and hence $\tau$ must satisfy the two constraints imposed by the set $\mathbb{A}$, i.e.,
\begin{equation}
\label{taumin}
    \frac{|c^*_j - p^*_j|}{\mu(\bm{c}^*,\bm{p}^*)}<\tau<\frac{|\hat{c}_j - p^*_j|}{\mu(\bm{\hat{c}},\bm{p}^*)}
\end{equation}

Note that: 
$$\frac{|\hat{c}_i - p^*_i| \;-\;max\Bigl\{\frac{\epsilon\,|v_j|}{\|\bm{v}\|_2}, \frac{\Delta KL(\bm{p}^*)v_j + \|\bm{v}\|_2 \sqrt{-\Delta KL(\bm{p}^*)^2+\epsilon^2\|\bm{v}\|_2^2}}{\|\bm{v}\|_2^2} \Bigr\}}
       {\mu(\bm{\hat{c}},\bm{p}^*) + \frac{\|\bm{\delta}\|_1}{d}}<\frac{|\hat{c}_j - p^*_j|}{\mu(\bm{\hat{c}},\bm{p}^*)}$$ 
and: 
$$\frac{|c^*_i - p^*_i| \;-\;max\Bigl\{\frac{\epsilon\,|v_j|}{\|\bm{v}\|_2}, \frac{\Delta KL(\bm{p}^*)v_j + \|\bm{v}\|_2 \sqrt{-\Delta KL(\bm{p}^*)^2+\epsilon^2\|\bm{v}\|_2^2}}{\|\bm{v}\|_2^2},\sqrt{\frac{5}{4}\epsilon^2-\frac{\epsilon \Delta KL(\bm{p}^*)}{\|\bm{v}\|}} \Bigr\}}
       {\mu(\bm{c}^*,\bm{p}^*)+ \frac{\|\bm{\delta}\|_1}{d}}<\frac{|c^*_j - p^*_j|}{\mu(\bm{c}^*,\bm{p}^*)}$$ 
since both the have smaller nominator and a larger denominator on the left-hand-side.

Thus we can write \autoref{taumin} and \autoref{taumax} together as:
\begin{align}
     \frac{|c^*_j - p^*_j|}{\mu(\bm{c}^*,\bm{p}^*)}<\tau&<\frac{|\hat{c}_i - p^*_i|}{\mu(\bm{\hat{c}},\bm{p}^*) + \frac{\|\bm{\delta}\|_1}{d}} \;-\;\frac{max\Bigl\{\frac{\epsilon\,|v_j|}{\|\bm{v}\|_2}, \frac{\Delta KL(\bm{p}^*)v_j + \|\bm{v}\|_2 \sqrt{-\Delta KL(\bm{p}^*)^2+\epsilon^2\|\bm{v}\|_2^2}}{\|\bm{v}\|_2^2} \Bigr\}}
       {\mu(\bm{\hat{c}},\bm{p}^*) + \frac{\|\bm{\delta}\|_1}{d}}
\end{align}

To make $\tau$ feasible, we must have:
\begin{equation}
     \frac{|c^*_j - p^*_j|}{\mu(\bm{c}^*,\bm{p}^*)}<\frac{|\hat{c}_i - p^*_i| \;-\;max\Bigl\{\frac{\epsilon\,|v_j|}{\|\bm{v}\|_2}, \frac{\Delta KL(\bm{p}^*)v_j + \|\bm{v}\|_2 \sqrt{-\Delta KL(\bm{p}^*)^2+\epsilon^2\|\bm{v}\|_2^2}}{\|\bm{v}\|_2^2} \Bigr\}}
       {\mu(\bm{\hat{c}},\bm{p}^*) + \frac{\|\bm{\delta}\|_1}{d}}
\end{equation}

Multiplying $\Bigl(\mu(\bm{\hat{c}},\bm{p}^*) + \frac{\|\bm{\delta}\|_1}{d}\Bigr)$ on both sides yields:
\begin{align}
   &\Bigl(\mu(\bm{\hat{c}},\bm{p}^*) + \frac{\|\bm{\delta}\|_1}{d}\Bigr)\frac{|c^*_j - p^*_j|}{\mu(\bm{c}^*,\bm{p}^*)}<|\hat{c}_j - p^*_j| \;\\&-\;max\Bigl\{\frac{\epsilon\,|v_j|}{\|\bm{v}\|_2}, \frac{\Delta KL(\bm{p}^*)v_j + \|\bm{v}\|_2 \sqrt{-\Delta KL(\bm{p}^*)^2+\epsilon^2\|\bm{v}\|_2^2}}{\|\bm{v}\|_2^2} \Bigr\}.
\end{align}

By switching the LHS and RHS and then moving the $\max\Bigl\{\frac{\epsilon\,|v_j|}{\|\bm{v}\|_2}, \frac{\Delta KL(\bm{p}^*)v_j + \|\bm{v}\|_2 \sqrt{-\Delta KL(\bm{p}^*)^2+\epsilon^2\|\bm{v}\|_2^2}}{\|\bm{v}\|_2^2} \Bigr\}$ term to the other side:
\begin{align}
    |\hat{c}_j - p^*_j|>&max\Bigl\{\frac{\epsilon\,|v_j|}{\|\bm{v}\|_2}, \frac{\Delta KL(\bm{p}^*)v_j + \|\bm{v}\|_2 \sqrt{-\Delta KL(\bm{p}^*)^2+\epsilon^2\|\bm{v}\|_2^2}}{\|\bm{v}\|_2^2} \Bigr\}\\&+ \Bigl( \mu(\bm{\hat{c}},\bm{p}^*) + \frac{\|\bm{\delta}\|_1}{d}\Bigr)\frac{|c^*_j - p^*_j|}{\mu(\bm{c}^*,\bm{p}^*)}
\end{align}

\allowdisplaybreaks
Then we can get a necessary condition on $|\hat{c}_j-c_j^*|$ via:
\begin{align}
    |\hat{c}_j-c_j^*|&>\Big||\hat{c}_j - p^*_j|-|c^*_j - p^*_j|\Big| \nonumber \\
    &>\Bigg|max\Bigl\{\frac{\epsilon\,|v_j|}{\|\bm{v}\|_2}, \frac{\Delta KL(\bm{p}^*)v_j + \|\bm{v}\|_2 \sqrt{-\Delta KL(\bm{p}^*)^2+\epsilon^2\|\bm{v}\|_2^2}}{\|\bm{v}\|_2^2}  \Bigr\}\\&+ \Bigl( \mu(\bm{\hat{c}},\bm{p}^*) + \frac{\|\bm{\delta}\|_1}{d}\Bigr)\frac{|c^*_j - p^*_j|}{\mu(\bm{c}^*,\bm{p}^*)}-|c^*_j-p_j^*|\Bigg| \nonumber \\
    &=\Bigg|max\Bigl\{\frac{\epsilon\,|v_j|}{\|\bm{v}\|_2}, \frac{\Delta KL(\bm{p}^*)v_j + \|\bm{v}\|_2 \sqrt{-\Delta KL(\bm{p}^*)^2+\epsilon^2\|\bm{v}\|_2^2}}{\|\bm{v}\|_2^2}\Bigr\}\\&+ \Bigl( \frac{\mu(\bm{\hat{c}},\bm{p}^*)}{\mu(\bm{c}^*,\bm{p}^*)} + \frac{\|\bm{\delta}\|_1}{d\cdot\mu(\bm{c}^*,\bm{p}^*)}-1\Bigr)|c^*_j - p^*_j|\Bigg|
\end{align}

Therefore if $\frac{\epsilon\,|v_j|}{\|\bm{v}\|_2}>\frac{\Delta KL(\bm{p}^*)v_j + \|\bm{v}\|_2 \sqrt{-\Delta KL(\bm{p}^*)^2+\epsilon^2\|\bm{v}\|_2^2}}{\|\bm{v}\|_2^2}$, then we have:
\begin{align}
     |\hat{c}_j-c_j^*|&>\Big|\frac{\epsilon\,|v_j|}{\|\bm{v}\|_2}+\Bigl( \frac{\mu(\bm{\hat{c}},\bm{p}^*)}{\mu(\bm{c}^*,\bm{p}^*)} + \frac{\|\bm{\delta}\|_1}{d\cdot\mu(\bm{c}^*,\bm{p}^*)}-1\Bigr)|c^*_j - p^*_j|\Big| \nonumber \\
     &\Downarrow \text{Since each term is positive} \nonumber \\
     &=\frac{\epsilon\,|v_j|}{\|\bm{v}\|_2}+\Bigl( \frac{\mu(\bm{\hat{c}},\bm{p}^*)}{\mu(\bm{c}^*,\bm{p}^*)} + \frac{\|\bm{\delta}\|_1}{d\cdot\mu(\bm{c}^*,\bm{p}^*)}-1\Bigr)|c^*_j - p^*_j| \nonumber \\
     &\Downarrow \text{Since} \quad v_j=\frac{\hat{c}_j-c^*_j}{p^*_j} \nonumber \\
     &=\frac{\epsilon\,|\hat{c}_j-c^*_j|}{\|\bm{v}\|_2|p^*_j|}+\Bigl( \frac{\mu(\bm{\hat{c}},\bm{p}^*)}{\mu(\bm{c}^*,\bm{p}^*)} + \frac{\|\bm{\delta}\|_1}{d\cdot\mu(\bm{c}^*,\bm{p}^*)}-1\Bigr)|c^*_j - p^*_j| 
\end{align}

Moving all $|\hat{c}_j-c^*_j|$ to the left-hand-side:
\begin{align}
\label{y'-y*1}
   |\hat{c}_j-c_j^*|&> \frac{\Bigl( \frac{\mu(\bm{\hat{c}},\bm{p}^*)}{\mu(\bm{c}^*,\bm{p}^*)} + \frac{\|\bm{\delta}\|_1}{d\cdot\mu(\bm{c}^*,\bm{p}^*)}-1\Bigr)|c^*_j - p^*_j| }{1-\frac{\epsilon}{\|\bm{v}\||p^*_j|}} \\
   &=\Bigl( \frac{\mu(\bm{\hat{c}},\bm{p}^*)}{\mu(\bm{c}^*,\bm{p}^*)} + \frac{\|\bm{\delta}\|_1}{d\cdot\mu(\bm{c}^*,\bm{p}^*)}-1\Bigr)\frac{|c^*_j - p^*_j|\|\bm{v}\||p^*_j|}{\|\bm{v}\||p^*_j|-\epsilon}
\end{align}

On the other hand, if $\frac{\Delta KL(\bm{p}^*)v_j + \|\bm{v}\|_2 \sqrt{-\Delta KL(\bm{p}^*)^2+\epsilon^2\|\bm{v}\|_2^2}}{\|\bm{v}\|_2^2}>\frac{\epsilon\,|v_j|}{\|\bm{v}\|_2}$, then we have:
\begin{align*}
    |\hat{c}_j-c_j^*|&>\Bigl| \frac{\Delta KL(\bm{p}^*)v_j + \|\bm{v}\|_2 \sqrt{-\Delta KL(\bm{p}^*)^2+\epsilon^2\|\bm{v}\|_2^2}}{\|\bm{v}\|_2^2} \\&+ \Bigl( \frac{\mu(\bm{\hat{c}},\bm{p}^*)}{\mu(\bm{c}^*,\bm{p}^*)} + \frac{\|\bm{\delta}\|_1}{d\cdot\mu(\bm{c}^*,\bm{p}^*)}-1\Bigr)|c^*_j - p^*_j|\Bigr| \\
    &=\frac{\Delta KL(\bm{p}^*)v_j + \|\bm{v}\|_2 \sqrt{-\Delta KL(\bm{p}^*)^2+\epsilon^2\|\bm{v}\|_2^2}}{\|\bm{v}\|_2^2} \\&+ \Bigl( \frac{\mu(\bm{\hat{c}},\bm{p}^*)}{\mu(\bm{c}^*,\bm{p}^*)} + \frac{\|\bm{\delta}\|_1}{d\cdot\mu(\bm{c}^*,\bm{p}^*)}-1\Bigr)|c^*_j - p^*_j| \\
    &=\frac{\Delta KL(\bm{p}^*)v_j }{\|\bm{v}\|_2^2}+\frac{\|\bm{v}\|_2 \sqrt{-\Delta KL(\bm{p}^*)^2+\epsilon^2\|\bm{v}\|_2^2}}{\|\bm{v}\|_2^2}\\&+\Bigl( \frac{\mu(\bm{\hat{c}},\bm{p}^*)}{\mu(\bm{c}^*,\bm{p}^*)}+ \frac{\|\bm{\delta}\|_1}{d\cdot\mu(\bm{c}^*,\bm{p}^*)}-1\Bigr)|c^*_j - p^*_j| \nonumber \\
    &= \frac{\Delta KL(\bm{p}^*) (\hat{c}_j-c^*_j)}{\|\bm{v}\|_2^2p^*_j}+\frac{\|\bm{v}\|_2 \sqrt{-\Delta KL(\bm{p}^*)^2+\epsilon^2\|\bm{v}\|_2^2}}{\|\bm{v}\|_2^2}\\&+\Bigl( \frac{\mu(\bm{\hat{c}},\bm{p}^*)}{\mu(\bm{c}^*,\bm{p}^*)} + \frac{\|\bm{\delta}\|_1}{d\cdot\mu(\bm{c}^*,\bm{p}^*)}-1\Bigr)|c^*_j - p^*_j| \nonumber \\
    &= \frac{\Delta KL(\bm{p}^*) |\hat{c}_j-c^*_j|}{\|\bm{v}\|_2^2p^*_jsgn(\hat{c}_j-c^*_j)}+\frac{\|\bm{v}\|_2 \sqrt{-\Delta KL(\bm{p}^*)^2+\epsilon^2\|\bm{v}\|_2^2}}{\|\bm{v}\|_2^2}\\&+\Bigl( \frac{\mu(\bm{\hat{c}},\bm{p}^*)}{\mu(\bm{c}^*,\bm{p}^*)} + \frac{\|\bm{\delta}\|_1}{d\cdot\mu(\bm{c}^*,\bm{p}^*)}-1\Bigr)|c^*_j - p^*_j|
\end{align*}

Moving all term containing $|\hat{c}_j-c^*_j|$ to the left-hand-side:
\begin{align}
\label{y'-y*2}
    |\hat{c}_j-c_j^*|&>\frac{\frac{\|\bm{v}\|_2 \sqrt{-\Delta KL(\bm{p}^*)^2+\epsilon^2\|\bm{v}\|_2^2}}{\|\bm{v}\|_2^2}+\Bigl( \frac{\mu(\bm{\hat{c}},\bm{p}^*)}{\mu(\bm{c}^*,\bm{p}^*)} + \frac{\|\bm{\delta}\|_1}{d\cdot\mu(\bm{c}^*,\bm{p}^*)}-1\Bigr)|c^*_j - p^*_j|}{1-\frac{\Delta KL(\bm{p}^*)}{\|\bm{v}\|_2^2p^*_jsgn(\hat{c}_j-c^*_j)}} \nonumber \\
    &=\Biggl(\frac{\|\bm{v}\|_2 \sqrt{-\Delta KL(\bm{p}^*)^2+\epsilon^2\|\bm{v}\|_2^2}}{\|\bm{v}\|_2^2}+\Bigl( \frac{\mu(\bm{\hat{c}},\bm{p}^*)}{\mu(\bm{c}^*,\bm{p}^*)} + \frac{\|\bm{\delta}\|_1}{d\cdot\mu(\bm{c}^*,\bm{p}^*)}-1\Bigr)|c^*_j - p^*_j| \Biggr) \nonumber\\
    &\cdot \frac{\|\bm{v}\|_2^2p^*_jsgn(\hat{c}_j-c^*_j)}{\|\bm{v}\|_2^2p^*_jsgn(\hat{c}_j-c^*_j)-\Delta KL(\bm{p}^*)} \nonumber \\
    &=\Biggl(\frac{\|\bm{v}\|_2 \sqrt{-\Delta KL(\bm{p}^*)^2+\epsilon^2\|\bm{v}\|_2^2}}{\|\bm{v}\|_2^2}+\Bigl( \frac{\mu(\bm{\hat{c}},\bm{p}^*)}{\mu(\bm{c}^*,\bm{p}^*)} + \frac{\|\bm{\delta}\|_1}{d\cdot\mu(\bm{c}^*,\bm{p}^*)}-1\Bigr)|c^*_j - p^*_j| \Biggr) \nonumber \\
    &\cdot \frac{\|\bm{v}\|_2^2p^*_j}{\|\bm{v}\|_2^2p^*_j-\Delta KL(\bm{p}^*)sgn(\hat{c}_j-c^*_j)} \nonumber \\
    &=\frac{p^*_j\|\bm{v}\|_2\sqrt{-\Delta KL(\bm{p}^*)^2+\epsilon^2\|\bm{v}\|_2^2}}{\|\bm{v}\|_2^2p^*_j-\Delta KL(\bm{p}^*)sgn(\hat{c}_j-c^*_j)}\\&+\Bigl( \frac{\mu(\bm{\hat{c}},\bm{p}^*)}{\mu(\bm{c}^*,\bm{p}^*)} + \frac{\|\bm{\delta}\|_1}{d\cdot\mu(\bm{c}^*,\bm{p}^*)}-1\Bigr)\frac{|c^*_j - p^*_j|\|\bm{v}\|_2^2|p^*_j|}{\|\bm{v}\|_2^2p^*_j-\Delta KL(\bm{p}^*)sgn(\hat{c}_j-c^*_j)}
\end{align}

Combining \autoref{y'-y*1}, \autoref{y'-y*2}, we can have:
\begin{align*}
|\hat{c}_j-c_j^*|&>\max\Biggl\{\Bigl( \frac{\mu(\bm{\hat{c}},\bm{p}^*)}{\mu(\bm{c}^*,\bm{p}^*)} + \frac{\|\bm{\delta}\|_1}{d\cdot\mu(\bm{c}^*,\bm{p}^*)}-1\Bigr)\frac{|c^*_j - p^*_j|\|\bm{v}\|_2|p^*_j|}{\|\bm{v}\|_2|p^*_j|-\epsilon}, \\
&\frac{p^*_j\|\bm{v}\|_2\sqrt{-\Delta KL(\bm{p}^*)^2+\epsilon^2\|\bm{v}\|_2^2}}{\|\bm{v}\|_2^2p^*_j-\Delta KL(\bm{p}^*)sgn(\hat{c}_j-c^*_j)}\\&+\Bigl( \frac{\mu(\bm{\hat{c}},\bm{p}^*)}{\mu(\bm{c}^*,\bm{p}^*)} + \frac{\|\bm{\delta}\|_1}{d\cdot\mu(\bm{c}^*,\bm{p}^*)}-1\Bigr)\frac{|c^*_j - p^*_j|\|\bm{v}\|_2^2|p^*_j|}{\|\bm{v}\|_2^2p^*_j-\Delta KL(\bm{p}^*)sgn(\hat{c}_j-c^*_j)}\Biggr\}
\end{align*}
The Theorem holds by setting the threshold $\Gamma(\epsilon)$ to the right hand side of the inequality above.
\end{proof}

%%%%%%%%%%%%%%%%%%%%%%%%%%%%%%%%%%%%%%%%%%%%%%%%%%%%%%%%%%%%%%%%%%%%%%%%%%%%%%%
%%%%%%%%%%%%%%%%%%%%%%%%%%%%%%%%%%%%%%%%%%%%%%%%%%%%%%%%%%%%%%%%%%%%%%%%%%%%%%%

\end{document}